\documentclass[journal]{IEEEtran} 
\usepackage{cite}
% Recommended, but optional, packages for figures and better typesetting:
\usepackage{microtype}
\usepackage{graphicx}
\usepackage{subfigure}
\usepackage{booktabs} % for professional tables

\usepackage[pagebackref,breaklinks,colorlinks]{hyperref}

% For theorems and such
\usepackage{amsmath}
\usepackage{amssymb}
\usepackage{mathtools}
\usepackage{amsthm}

%%%%%%%%%%%%%%%%%%%%%%%%%%%%%%%%
% THEOREMS
%%%%%%%%%%%%%%%%%%%%%%%%%%%%%%%%
\theoremstyle{plain}
\newtheorem{theorem}{Theorem}[section]
\newtheorem{proposition}[theorem]{Proposition}
\newtheorem{lemma}[theorem]{Lemma}

\theoremstyle{definition}

\theoremstyle{remark}
\newtheorem{remark}[theorem]{Remark}

\usepackage[textsize=tiny]{todonotes}

%%% custom
\usepackage{makecell}
\usepackage{color}
% Support for easy cross-referencing
\usepackage[capitalize]{cleveref}
\usepackage[numbers]{natbib}
\usepackage{multirow}
\usepackage{verbatim}
\usepackage{algorithm}
\usepackage{algorithmic}
\newcommand{\eat}[1]{}
\newcommand{\etal}{{\em et al.~}} % \etal
\newcommand{\eg}{{\em e.g.,~}}     % e.g.
\newcommand{\ie}{{\em i.e.,~}}      % i.e.
         % etc.

% \newtheorem{theorem}{Theorem}%theory
% \newtheorem{lemma}{Lemma}%lemma
% \newtheorem{proposition}{Proposition}
% \newtheorem{definition}{Definition}%definition
% \newtheorem{remark}{Remark}
\usepackage{amssymb}
\usepackage{threeparttable}

\begin{document}

\title{Feature Noise Boosts DNN Generalization under Label Noise}
\author{Lu Zeng, Xuan Chen, Xiaoshuang Shi, Heng Tao Shen,~\IEEEmembership{Fellow,~IEEE}}
\maketitle

\begin{abstract}
The presence of label noise in the training data has a profound impact on the generalization of deep neural networks (DNNs). In this study, we introduce and theoretically demonstrate a simple feature noise method, which directly adds noise to the features of training data, can enhance the generalization of DNNs under label noise.
Specifically, we conduct theoretical analyses to reveal that label noise leads to weakened DNN generalization by loosening the PAC-Bayes generalization bound,
and feature noise results in better DNN generalization by imposing an upper bound on the mutual information between the model weights and the features, which constrains the PAC-Bayes generalization bound.
Furthermore, to ensure effective generalization of DNNs in the presence of label noise, we conduct application analyses to identify the optimal types and levels of feature noise to add for obtaining desirable label noise generalization.
Finally, extensive experimental results on several popular datasets demonstrate the feature noise method can significantly enhance the label noise generalization of the state-of-the-art label noise method.
\end{abstract}

\begin{IEEEkeywords}
label noise, generalization bound, feature noise.
\end{IEEEkeywords}

% \listoftodos

\IEEEpeerreviewmaketitle

\section{Introduction}
\label{sec:intro}

Label noise, a prevalent problem in machine learning, refers to the presence of mislabeled samples in the training dataset used to train machine learning models. In the context of deep learning, where over-parameterization of deep neural networks (DNNs) is common, the label noise issue becomes particularly significant. It is worth noting that DNNs with a sufficient number of parameters can potentially fit random label data given adequate training \cite{zhang2021understanding}. 
There are two popular strategies to address the label noise issue, \ie denoising-based methods, and robustness-based methods.

Denoising-based methods, which aim to identify mislabeled samples in the training set and mitigate their negative effects on DNNs, are widely applied to handling the label noise issue.
The denoising-based methods rely on the information they have to differentiate the mislabeled samples from the correctly labeled samples. 
Based on the information they utilize, they can be roughly classified into clean-sample-based methods and small-loss-sample-based methods. 
Clean-sample-based methods utilize a small set of clean samples to provide information to distinguish the wrongly and correctly labeled samples. 
The information in the clean samples can be extracted by curriculum learning \cite{jiang2018mentornet}, meta-learning \cite{ren2018learning,shu2019meta}, or the anchor points in the label transition matrix \cite{patrini2017making,yao2020dual}. 
One limitation of the clean-sample-based methods is that the clean samples cannot be easily obtained in some datasets, such as medical images \cite{karimi2020deep}.
Small-loss-sample-based methods rely on the early robustness characteristic of DNNs to identify the wrongly and correctly labeled samples. They are inspired by the observation that, without any regularization, DNNs themselves can be fairly robust to even massive label noise during the early training stage \cite{rolnick2017deep}. 
These methods rely on the judgment of DNNs in the early training stage. 
For instance, co-teaching methods \cite{han2018co,chen2021boosting} rely on the small-loss samples in the early training stage to train the networks, early-learning-regularization \cite{liu2020early} utilizes a regularization term to penalize the samples that have large prediction variation between the early and late training stages.
Overall, the denoising-based methods are mostly heuristic in nature, and provide little insight into the understanding of the label noise issue.

Robustness-based methods, which can enhance label noise generalization without paying special attention to noisy labels, aim to boost the overall robustness of DNNs.
They often involve noisy training, \ie adding noise to the training process.
The random noise can be introduced to either the training data or the training algorithm.
Adding noise to training data is a form of data augmentation that is widely recognized for its effectiveness in enhancing model robustness. This is mainly attributed to the enlarged data diversity, which helps reduce model overfitting. However, there lacks a comprehensive study about the theoretic relationship between feature noise and model overfitting.
On the other hand, model robustness techniques in deep learning involve adding noise to the training algorithm.
For example, stochastic gradient descent (SGD) and dropout can boost DNNs' robustness in general.
Studies have shown that the effectiveness of these robustness techniques is closely related to the generalization bound \cite{wu2022does,achille2018information,rippel2014learning}.
In this paper, we explore the impact of adding noise to training features on the generalization bound. Through theoretical and empirical analysis, we demonstrate that the simple feature noise (FN) method greatly improves DNN generalization in the presence of massive label noise. Our study has two notable contributions compared to previous robustness-based methods. First, we establish that feature noise, a commonly used data augmentation technique, significantly enhances label noise generalization, which has not been proposed before. Second, we uncover the close relationship between the effectiveness of the FN method and the generalization bound, which coincides with the theoretical analysis of the model robustness techniques.

This paper provides thorough theoretical analyses, along with extensive empirical evidence, to demonstrate the positive effect of FN on DNNs' label noise generalization. 
Specifically, we initially establish that DNNs trained with label noise possess loose generalization bounds. Subsequently, we prove that the FN method can effectively constrain the generalization bounds. Additionally, we conduct application analyses to explore the types and levels of feature noise that should be injected to achieve favorable label noise generalization of DNNs.

%contributions of our work
Our contributions are listed as follows:

\begin{itemize}
    \item We propose an innovative approach, \ie adding feature noise, to increase the generalization of DNNs under label noise. This straightforward approach offers a simple and cost-effective solution to address the label noise issue, while also being adaptable to different models. % an innovative approach with pros
    \item The effectiveness of feature noise in boosting the label noise generalization of DNNs sheds new light on the relationship between the generalization characteristics of DNNs and the impact of noise. % insights into the relationship between generalization and noise
    \item We theoretically prove the effectiveness of the FN method in constraining the generalization bound of DNNs under label noise, providing a solid theoretical foundation for using feature noise in the label noise issue. % theory
    \item Extensive experiments demonstrate the simple FN method can obtain comparable and even superior performance over existing state-of-the-art methods.  
    This emphasizes the practicality of using the FN method to improve the label noise generalization of DNNs. % experiments verify the effectiveness
\end{itemize}

The rest of this paper is organized as follows. 
In Section \ref{sec:rw}, we review the current label noise methods, including denoising-based methods and robustness-based methods, and the generalization bound studies. 
In Section \ref{sec:pre}, we introduce some preliminaries of our theoretical analysis.
We introduce our method in Section \ref{sec:meth} and analyze how it affects the PAC-Bayes generalization bound in Scetion \ref{sec:theo}. 
In Section \ref{sec:app}, we conduct qualitative and quantitative analyses to discuss the appropriate type and level of feature noise to be added.
In Section \ref{sec:ex}, we conduct extensive experiments on four real-world datasets to validate the effectiveness and the theoretical advantages of the FN method.
Finally, we conclude this paper in Section \ref{sec:con}.

\section{Related Work}\label{sec:rw}
In this section, we comprehensively review two categories of label noise methods, \ie denoising-based methods and robustness-based methods, and show their relationships with the FN method.
Additionally, we review the studies of the generalization bound to highlight the contribution of this work.

\subsection{Denoising-based methods}
Most label noise methods focus on denoising, which reduces the impact of mislabeled samples during model training. Denoising-based methods rely on specific information to accurately distinguish between wrongly and correctly labeled samples. The success of these methods depends on the accuracy of this information, which should effectively capture the distinctions between the two types of samples. By penalizing the wrongly labeled samples using this information, denoising-based methods can achieve favorable performance. These methods can be categorized into two groups based on the source of information used: clean-sample-based methods and small-loss-sample-based methods.

Clean-sample-based methods employ a limited number of correctly labeled samples, known as clean samples, to effectively denoise the noisy labels. 
For example, some clean-sample-based methods first pre-train a teacher model on the auxiliary clean samples and then correct the wrong labels by the teacher model's predictions \cite{li2017learning},  or reweight the noisy samples in the total loss by a sample weighting scheme provided by the teacher model \cite{jiang2018mentornet}.

Small-loss-sample-based methods are inspired by the observation that DNNs exhibit inherent resilience to label noise in the early training stage \cite{rolnick2017deep,arpit2017closer,zhang2021understanding}, thereby the small-loss samples in the early training stage are likely to be the clean ones.
Leveraging this phenomenon, small-loss-sample-based methods employ various strategies to enhance label noise generalization. For instance, they might reduce the influence of samples with significant prediction variation between the early and late training stages \cite{liu2020early}, discard mislabeled samples in the late training stage when the DNNs' robustness to label noise diminishes \cite{han2018co,yu2019does,chen2021boosting}, or terminate the training process early to maintain the early robustness of DNNs \cite{li2020gradient,bai2021understanding}.

However, existing denoising-based methods still have certain limitations. First, obtaining clean samples can be challenging in real-world applications. Second, the identification of mislabeled samples often requires complex algorithms and additional computational costs. For instance, clean-sample-based methods involve meta learning \cite{ren2018learning,shu2019meta,zheng2021meta} to harness information from clean samples, while some small-loss-sample-based methods involve semi-supervised learning \cite{liu2020early,bai2021understanding} to leverage early robustness.
Furthermore,  although denoising-based methods are effective in addressing the label noise issue, they possess heuristic nature. 
These methods have limited interrelation and provide little insight into the mechanisms that govern generalization in the presence of label noise. 
The FN method and denoising-based methods share a common goal of addressing the label noise issue. 
However, there are differences between them. First, the FN method is notable for its simplicity and efficiency, as it doesn't need extra clean samples or additional computation costs. Second, denoising-based methods often rely on heuristic approaches, but the FN method directly constrains the generalization bound.

\subsection{Robustness-based methods}
Robustness-based methods boost the overall robustness of DNNs without paying special attention to the noisy labels.
They often involve noisy training, \ie adding noise to the training process.
The methods that add noise to the training data belong to noisy data augmentation, and the methods that add noise to the training algorithm belong to model robustness techniques.

%review da
Noisy data augmentation methods involve adding random or specific noise, \eg adversarial noise, to the features in order to enhance model robustness. These methods demonstrate that models trained on samples with feature noise exhibit greater resilience to feature noise compared to those trained on clean samples. For instance, Yin \etal \cite{yin2019fourier} showed that DNNs trained with feature noise tend to be robust to other feature noises with similar frequency domain properties. Additionally, many studies on adversarial samples have shown that models trained with adversarial noise are more resilient to adversarial samples \cite{goodfellow2015explaining,hendrycks2021natural,cohen2019certified}.
These studies in data augmentation primarily suggest that feature noise can boost robustness towards the same type or similar types of feature noise. Remarkably, in this study, we demonstrate that feature noise can also enhance DNN robustness against label noise. 
This observation is counterintuitive because the supervised learning scheme does not reward this behavior. Therefore, we assume the robustness gains of DNNs trained with feature noise is a topic that needs closer investigation.

%review ir
The effectiveness of model robustness techniques in boosting the robustness of DNNs has many explanations. One popular explanation is that the noise introduced to the training process results in a flat minima of the loss landscape \cite{wu2022does,keskar2016large,zhang2022implicit}.
The theory of flat minima is conducted from the optimization point of view. While statistically, the effectiveness of model robustness techniques can also be explained by the generalization bound theory.
For example, the effectiveness of many common DNN robustness techniques, \ie dropout \cite{mcallester2013pac}, $l$-2  regularization \cite{mcallester2013pac}, and SGD \cite{dziugaite2018entropy}, can all be explained by the tightened PAC-Bayes generalization bound.
On the contrary, the effectiveness of the other noisy training method, \ie the data augmentation, lacks a comprehensive explanation.
To the best of our knowledge, our study is the first to provide an explanation of how feature noise influences the label noise generalization of DNNs within the PAC-Bayes framework.
In the following, we thoroughly review the studies of the generalization bound to further explain the contribution of this work.

\subsection{Generalizaiton bound}
%%% introduce generalizaiton bound and reduce bound
Boosting the generalization of DNNs often involves reducing the generalization bound.
There are  two types of generalization bounds: uniform and non-uniform.
Uniform bounds, like the VC-bound proposed by Vapnik \etal \cite{vapnik1991principles}, cover the entire hypothesis space and are designed to constrain the generalization gap for the worst-case hypothesis. 
Non-uniform bounds, like the PAC-Bayes bound introduced by McAllester \cite{mcallester1999some}, are defined at the level of individual hypotheses. 
The generalization bound serves two main purposes: evaluating the generalization of a learned hypothesis and guiding the design of learning algorithms that yield hypotheses with small generalization bounds. In the realm of deep learning, the latter function is particularly important as it involves trading the complexity of DNNs for better generalization. We refer to these algorithms that explicitly or implicitly reduce the generalization bound during training as bound-reducing methods.
Studies have shown that bound-reducing methods can effectively boost the generalization of DNNs \cite{dziugaite2017computing,neyshabur2018towards}. 
Only the non-uniform bound can be reduced by the training algorithms. Among the various non-uniform generalization bounds, the PAC-Bayes bound is known to be the tightest \cite{foong2021tight}.

%%% introduce reducing pb specifically
The PAC-Bayes generalization bound  provides an upper bound on the gap between the expected risk and the empirical risk of machine learning models. 
Studies have demonstrated that reducing the PAC-Bayes generalization bound can improve the generalization of DNNs by controlling the overfitting level \cite{xu2017information,neyshabur2017exploring,xie2021artificial}. The value of the PAC-Bayes bound is primarily determined by the Kullback-Leibler (KL) divergence between the prior distribution $P$ and the posterior distribution $Q$ over the hypothesis space, denoted as ${\rm D}(Q||P)$. In deep learning, without any regularization, the PAC-Bayes bound can become vacuous, since ${\rm D}(Q||P)$ can be very large \cite{dziugaite2017computing}. Bound-reducing algorithms aim to find a posterior distribution $Q$ that minimizes ${\rm D}(Q||P)$. This can be achieved either by directly minimizing the bound or by constraining it with a tighter upper bound.

% minimize methods review
Algorithms that minimize the PAC-Bayes bound typically employ implicit regularization techniques, which reduce ${\rm D}(Q||P)$ during training. For instance, weight decay (also known as $l$-2 regularization) and dropout implicitly minimize the PAC-Bayes bound, by assuming a standard Gaussian distribution for the prior $P$ \cite{mcallester2013pac}. 
However, these bound-minimizing algorithms have a drawback, \ie the requirement to choose a prior $P$, which is often done randomly for convenience. Consequently, the bound can become loose, rendering it unreliable for assessing whether the algorithm can effectively reduce the generalization error by minimizing the generalization bound.
Achille \etal \cite{achille2018emergence} demonstrate that $I(W;S)$ provides the tightest bound for ${\rm D}(Q||P)$ among all possible priors $P$. 
Nevertheless, computing the tightest bound $I(W;S)$ is still challenging if one aims to minimize it. As such, many studies resort to minimizing an upper bound of this term instead. For instance, the limiting label information memorization in training (LIMIT) method \cite{harutyunyan2020improving}  constrains $I(W;S)$ by substituting the model weights with the gradient of the loss with respect to the weights. Nevertheless, approximating the upper bound of $I(W;S)$ still requires many computation costs.

% constrain methods review? introduce ours
Fortunately, the computation costs can be bypassed by constraining the bound instead of minimizing it. In this paper, we introduce a simple method that constrains the upper bound of the sharpest PAC-Bayes bound by adding noise to the training features. To the best of our knowledge, this is the first study that employs a tighter upper bound to constrain the PAC-Bayes bound for achieving improved label noise generalization.
The FN method overcomes the limitations of bound-minimizing methods, namely the bias introduced by manually selecting the prior $P$ and the computational effort required to compute the PAC-Bayes bound.

\section{Preliminaries}\label{sec:pre}
In this section, we introduce some information theoretic quantities that are frequently used in this paper.
Then we introduce the PAC-Bayes generalization bound theorem.
Finally, we propose a novel decomposition of the cross entropy loss inspired by \cite{achille2018emergence}.

\subsection{Information theoretic quantities}
The discrete Shannon entropy $H(X)=-{\mathbb E}\ln{p(X)}$; The differential entropy $h(f)=-\int_Af(X)\ln f(X) dx$, where $f$ and $A$ denote the probability density function and the support set of $X$; The conditional entropy $H(Y|X)=-{\mathbb E}\ln{p(Y|X)}$; The  Kullback-Leibler (KL) divergence ${\rm D}(q|| p)$ from the distribution $p$ to $q$; The chain rule for mutual information $I(X;Y|Z)=H(X|Z)-H(X|Y,Z)$; The Markov chain $X\to Y\to Z$ that ensures $p(x,y,z)=p(x)p(y|x)p(z|y)$, which means $X$ and $Z$ are conditionally independent given $Y$; The data processing inequality (DPI) $I(X;Y)\geq I(X;Z)$ for a Markov chain $X\to Y\to Z$; The (noisy) communication channel that transmits signals from the input $X$ to the output $Y$; The channel capacity $C={\max_{p(X)}}I(X;Y)$ of a discrete memoryless communication channel that denotes the logarithm of the number of distinguishable signals. For consistency, we utilize $\ln$ to measure the Shannon information, \ie the unit in ``nats''.

\subsection{PAC-Bayes generalization bound}
\begin{lemma}[PAC-Bayes generalization bound  \cite{mcallester1999some,guedj2019primer}]
\label{lm:pac}
For any real $\delta \in (0,1)$, with probability at least $1-\delta$, over the draw of the training dataset $S$, the generalization gap between the expected risk $L(Q)$ and the empirical risk $\hat{L}(Q)$ has the following inequality hold for any distribution $Q$:
\begin{equation}
\label{eq:gen_bound}
|L(Q) - \hat L(Q)| \le \sqrt {\frac{{\rm D}(Q||P)+\ln(\frac{2\sqrt{m}}{\delta})}{2m}},
\end{equation}
where $m$ denotes the number of training samples.
\end{lemma}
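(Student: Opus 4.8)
The plan is to follow the classical PAC-Bayes argument of McAllester, deriving the bound as a consequence of a change-of-measure inequality combined with a concentration bound for the prior-averaged deviations. First I would fix an arbitrary ``prior'' distribution $P$ over the hypothesis space that does not depend on the sample $S$, and consider the random variable $\Delta(h) = L(h) - \hat L(h)$ for a single hypothesis $h$. Since $\hat L(h)$ is an average of $m$ i.i.d.\ bounded terms with mean $L(h)$, Hoeffding's inequality gives $\Pr_S\bigl[|\Delta(h)| \ge \varepsilon\bigr] \le 2e^{-2m\varepsilon^2}$, or equivalently $\mathbb{E}_S\bigl[e^{2m\Delta(h)^2}\bigr] \le 2\sqrt{m}$ after integrating the tail (this is the standard ``square-exponential'' moment bound, sometimes attributed to Maurer). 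Taking expectations over $h\sim P$ and applying Fubini, $\mathbb{E}_S\,\mathbb{E}_{h\sim P}\bigl[e^{2m\Delta(h)^2}\bigr] \le 2\sqrt{m}$.

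Next I would invoke Markov's inequality on the random variable $\mathbb{E}_{h\sim P}\bigl[e^{2m\Delta(h)^2}\bigr]$: with probability at least $1-\delta$ over $S$,
\begin{equation}
\mathbb{E}_{h\sim P}\bigl[e^{2m\Delta(h)^2}\bigr] \le \frac{2\sqrt{m}}{\delta}.
\end{equation}
Now I would apply the Donsker--Varadhan change-of-measure inequality (equivalently, the Gibbs variational principle): for any posterior $Q$ and any measurable $\phi$, $\mathbb{E}_{h\sim Q}[\phi(h)] \le {\rm D}(Q\|P) + \ln \mathbb{E}_{h\sim P}\bigl[e^{\phi(h)}\bigr]$. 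Choosing $\phi(h) = 2m\,\Delta(h)^2$ yields
\begin{equation}
2m\,\mathbb{E}_{h\sim Q}\bigl[\Delta(h)^2\bigr] \le {\rm D}(Q\|P) + \ln \mathbb{E}_{h\sim P}\bigl[e^{2m\Delta(h)^2}\bigr] \le {\rm D}(Q\|P) + \ln\frac{2\sqrt{m}}{\delta}.
\end{equation}
Finally, by Jensen's inequality $\bigl|L(Q) - \hat L(Q)\bigr|^2 = \bigl|\mathbb{E}_{h\sim Q}\Delta(h)\bigr|^2 \le \mathbb{E}_{h\sim Q}\bigl[\Delta(h)^2\bigr]$, so dividing by $2m$ and taking square roots gives exactly the claimed inequality in \eqref{eq:gen_bound}.

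The main obstacle — and the step that requires the most care — is establishing the moment bound $\mathbb{E}_S\bigl[e^{2m\Delta(h)^2}\bigr] \le 2\sqrt{m}$ with the correct constant, since this is what produces the $\ln(2\sqrt{m}/\delta)$ term rather than a weaker $\ln(1/\delta)$-only term; it relies on a slightly delicate integration of the sub-Gaussian tail $\Pr[|\Delta(h)|\ge\varepsilon]\le 2e^{-2m\varepsilon^2}$ against the density of $e^{2m t^2}$, and one must check that the bounded-loss assumption (e.g.\ loss values in $[0,1]$) is what makes Hoeffding applicable here. A secondary subtlety is that $P$ must be chosen independently of $S$ while $Q$ may depend on $S$ arbitrarily — the change-of-measure step is precisely what licenses this asymmetry — so I would state that assumption explicitly. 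Everything else (Fubini, Markov, Jensen, Donsker--Varadhan) is routine, so I would present those compactly and spend the bulk of the write-up on the moment estimate.
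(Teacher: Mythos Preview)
The paper does not prove this lemma; it is stated as a cited preliminary result from \cite{mcallester1999some,guedj2019primer} and used as a black box in the subsequent analysis, so there is no ``paper's own proof'' to compare against. Your outline is the standard McAllester--Maurer derivation and is essentially correct in structure (Donsker--Varadhan change of measure, Markov on the prior-averaged exponential moment, then Jensen).

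One technical caution on the step you flagged yourself: the moment bound $\mathbb{E}_S\bigl[e^{2m\Delta(h)^2}\bigr]\le 2\sqrt{m}$ does \emph{not} follow by directly integrating the Hoeffding tail $\Pr[|\Delta|\ge\varepsilon]\le 2e^{-2m\varepsilon^2}$ against the density of $e^{2m t^2}$ --- that integral sits exactly at the critical exponent and diverges (you get $\int_1^\infty 2t^{-1}\,dt$). The clean route to the constant $2\sqrt{m}$ is Maurer's lemma in its KL form, $\mathbb{E}_S\bigl[e^{m\,\mathrm{kl}(\hat L(h)\,\|\,L(h))}\bigr]\le 2\sqrt{m}$, proved by a direct binomial computation rather than tail integration, followed by Pinsker's inequality $\mathrm{kl}(p\|q)\ge 2(p-q)^2$ to pass to the squared-gap form. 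Since you already name Maurer, just make sure your write-up invokes his lemma directly rather than presenting it as a consequence of integrating Hoeffding's tail.
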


The PAC-Bayes generalization bound is determined by the KL-divergence from the prior distribution $P$ to the posterior distribution $Q$ over the hypothesis space, \ie ${\rm D}(Q|| P)$.
${\rm D}(Q|| P)$ has the following inequality hold:
\begin{equation}
    I(W;S)\le\mathbb{E}_{S\sim \mathbb{S}}{\rm D}(Q|| P),
\end{equation}
where $S$ and $\mathbb{S}$ denote the training samples and the unknown sample space, respectively. The equality holds when $P$ equals the marginal distribution $q(W)$ \cite{achille2018emergence}.
Therefore, the sharpest PAC-Bayes generalization bound is achieved by $I(W;S)$. This means that the best prior distribution $P$ is the one that predicts the posterior.
Therefore, limiting $I(W;S)$ is an effective way to tighten the PAC-Bayes generalization bound.
By replacing ${\rm D}(Q|| P)$ with $I(W;S)$ in Eq. (\ref{eq:gen_bound}), we define the sharpest PAC-Bayes generalization bound to be 
\begin{equation}
\label{eq:sharp}
    {\cal B}^*=\sqrt {\frac{I(W;S)+\ln(\frac{2\sqrt{m}}{\delta})}{2m}}.
\end{equation}

Furthermore, $I(W;S)$ can be decomposed to $I(W;S)=I(W;Y|X)+I(W;X)$ in Eq. (\ref{eq:sharp}) to have
\begin{equation}
\label{eq:bounddecom}
    {\cal B}^*=\sqrt {\frac{I(W;Y|X)+I(W;X)+\ln(\frac{2\sqrt{m}}{\delta})}{2m}}.
\end{equation}

An important application of the generalization bound is to help design a learning algorithm that outputs $Q$ with a small generalization bound. 
This purpose can be achieved by reducing the PAC-Bayes generalization bound, which involves minimizing or constraining $I(W;S)$.

The PAC-Bayes generalization bound in Eq. (\ref{eq:gen_bound}), denoted as $f(Q,P,m,\delta)$, is a function that depends on various factors, including $Q$, $P$, $m$ and $\delta$. 
% Here, $Q$ and $P$ represent the posterior and prior distributions over the hypothesis space, respectively. $m$ denotes the number of training samples, and $\delta$ is the probability parameter associated with PAC-learning \cite{valiant1993view}.
$f(Q,P,m,\delta)$ exhibits monotonic relationships with variables $m$ and $\delta$, while it is more difficult to predict for $Q$ and $P$. Previous research has shown that the characterization of the PAC-Bayes generalization bound is strongly influenced by the choice of prior $P$. Selecting an inappropriate prior can result in a vacuous bound \cite{dziugaite2017computing}. 
Achille \etal \cite{achille2018emergence} demonstrate that the prior $P$ that yields the tightest PAC-Bayes bound is the marginal distribution of $W$ from the joint distribution $q(W,S)$ learned by the model. Although computing this prior remains challenging, we can now substitute the KL-divergence term ${\rm D}(Q||P)$ in the original PAC-Bayes bound with $I(W;S)$ to obtain the tightest generalization bound. Furthermore, minimizing $I(W;S)$ offers a more philosophically meaningful interpretation than ${\rm D}(Q||P)$ in the context of DNN training, as $I(W;S)$ quantifies the amount of information from the training data stored in the DNN weights.
While both $Q$ and $P$ are probability distributions over the hypothesis space, they differ fundamentally in practice. $P$ is manually chosen, whereas $Q$ can only be learned as an output of a learning algorithm.
Given that the other three variables in $f(Q,P,m,\delta)$ are fixed, the value of $f(Q,P,m,\delta)$ is solely determined by $Q$. When considering the sharpest bound, namely $I(W;S)$, we can express $f(Q,P,m,\delta)$ as $f(Q(W))= g(I(W;S))$. This quantity possesses a natural upper bound, $g(\min(H(W),H(S)))$. By introducing a tighter upper bound, we can ensure that the learned hypothesis results in a smaller generalization bound.

\subsection{Cross-entropy decomposition}
According to \cite{achille2018emergence}, the expected cross-entropy  loss for a model parametrized by $W$ and trained on the dataset $S=(X,Y)$ can be written as:
\begin{align}
\label{eq:ce}
    &H_{p,q}(Y|X,W)=\\\nonumber
    &H_p(Y|X,W)+\mathbb{E}_{X,W}{\rm D}(p(Y|X,W)||q(Y|X,W)),
\end{align}
where $p$ denotes the real class distribution and $q$ denotes the approximate class distribution optimized by the DNNs.

Different from the decomposition in \cite{achille2018emergence}, Eq. (\ref{eq:ce}) can be decomposed by the following Lemma.

\begin{lemma}\label{lm:cedecommy}
    The expected cross-entropy loss can be decomposed as:
    \begin{align}
    \label{eq:cedecommy}
         H_{p,q}(Y|X,W)=&H_p(Y)-I(Y;X|W)-I(Y;W|X)\\\nonumber
         &-I(W;X)+I(W;X|Y)\\\nonumber
         &+\mathbb{E}_{X,W}{\rm D}(p(Y|X,W)||q(Y|X,W)).
    \end{align}
\end{lemma}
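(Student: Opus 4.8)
The plan is to start from the decomposition in Eq.~(\ref{eq:ce}), which already isolates the term $\mathbb{E}_{X,W}{\rm D}(p(Y|X,W)\|q(Y|X,W))$, and to prove the purely information-theoretic identity
\begin{equation}
\label{eq:target}
  H_p(Y|X,W) = H_p(Y) - I(Y;X|W) - I(Y;W|X) - I(W;X) + I(W;X|Y),
\end{equation}
where all quantities are taken with respect to the joint law $p(X,Y,W)$ (with $W$ the weights produced by training). Adding $\mathbb{E}_{X,W}{\rm D}(p(Y|X,W)\|q(Y|X,W))$ to both sides of~(\ref{eq:target}) and comparing with~(\ref{eq:ce}) then yields~(\ref{eq:cedecommy}), so the whole lemma reduces to establishing~(\ref{eq:target}).

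To prove~(\ref{eq:target}), I would first write $H_p(Y|X,W) = H_p(Y) - I(Y;(X,W))$, i.e.\ condition on the pair $(X,W)$ at once. Next, apply the chain rule for mutual information to split off $W$: $I(Y;(X,W)) = I(Y;X) + I(Y;W|X)$. It remains to re-express the marginal term $I(Y;X)$ in terms of the conditional quantities appearing in the statement. Here I would introduce the (symmetric) interaction information $I(X;Y;W) := I(Y;X) - I(Y;X|W)$, and use its alternative symmetric form $I(X;Y;W) = I(W;X) - I(W;X|Y)$ (both expressions equal $H(X)+H(Y)+H(W)-H(X,Y)-H(X,W)-H(Y,W)+H(X,Y,W)$, which makes the identity transparent). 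This gives $I(Y;X) = I(Y;X|W) + I(W;X) - I(W;X|Y)$. Substituting back,
\[
  I(Y;(X,W)) = I(Y;X|W) + I(W;X) - I(W;X|Y) + I(Y;W|X),
\]
and plugging this into $H_p(Y|X,W) = H_p(Y) - I(Y;(X,W))$ produces~(\ref{eq:target}) exactly.

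The steps are, in order: (i) quote Eq.~(\ref{eq:ce}) and reduce the lemma to~(\ref{eq:target}); (ii) expand $H_p(Y|X,W)=H_p(Y)-I(Y;(X,W))$; (iii) chain-rule $I(Y;(X,W))=I(Y;X)+I(Y;W|X)$; (iv) rewrite $I(Y;X)$ via the interaction information, using its symmetry $I(Y;X)-I(Y;X|W)=I(W;X)-I(W;X|Y)$; (v) collect terms and re-attach the KL term. I expect the only delicate point to be step~(iv): one must be careful with the signs of the interaction information (which, unlike ordinary mutual information, need not be nonnegative) and should justify the symmetric identity by reducing everything to the single-letter entropy expansion, rather than by informal Venn-diagram reasoning. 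The remaining manipulations are routine applications of the chain rule, valid for any joint distribution on $(X,Y,W)$, so no additional assumptions on the learning algorithm are needed.
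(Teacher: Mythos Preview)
Your proposal is correct and follows essentially the same route as the paper: reduce to the entropy identity~(\ref{eq:target}) via Eq.~(\ref{eq:ce}), apply the chain rule to $I(Y;X,W)$, and then use the symmetry of the interaction information $I(X;Y;W)$ to trade $I(Y;X)$ for $I(Y;X|W)+I(W;X)-I(W;X|Y)$. The only cosmetic difference is that the paper first peels off $I(Y;W)$ (rather than $I(Y;X)$) in the chain rule before invoking $I(W;Y;X)=I(W;X)-I(W;X|Y)$, which leads to the same expression.
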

\begin{proof}
    Based on Eq. (\ref{eq:ce}), proving Eq. (\ref{eq:cedecommy}) is equivalent to proving $H_p(Y|X,W)=H_p(Y)-I(Y;X|W)-I(Y;W|X)-I(W;X)+I(W;X|Y)$.
    This can be easily obtained by using the chain rule for information twice:
    \begin{align}
        H_p(Y|X,W)=&H_p(Y)-I(Y;X|W)-I(Y;W|X)\\\nonumber
        &-I(W;Y;X)\\
        =&H_p(Y)-I(Y;X|W)-I(Y;W|X)\\\nonumber
        &-I(W;X)+I(W;X|Y).
    \end{align}
\end{proof}

Eq. (\ref{eq:cedecommy}) has two important meanings. 
First,  
Eq. (\ref{eq:cedecommy}) can be rewritten as $H_{p,q}(Y|X,W)=H_p(Y)-I(Y;X|W)-I(W;S)+I(W;X|Y)+\mathbb{E}_{X,W}{\rm D}(p(Y|X,W)||q(Y|X,W))$, where $I(W;S)$ is a negative term, which means that minimizing the cross-entropy loss is maximizing the sharpest PAC-Bayes generalization bound.
Therefore, training DNNs with the cross-entropy loss will lead to loosened generalization bounds and cause bad generalization.
Thus it is important to design bound-reducing algorithms to reduce the generalization bound when training DNNs with the cross-entropy loss.
Second, although the three negative terms, \ie $I(Y;X|W)$, $I(Y;W|X)$, and $I(W;X)$, are all maximized while the cross-entropy loss is minimized, $I(W;X)$ is the most crucial term  because it has the largest upper bound.
$I(Y;X|W)$ and $I(Y;W|X)$ are both smaller than $H(Y)$. Since $Y$ is a discrete random variable that represents the labels of the data, it has $I(Y;X|W)\le H(Y) \le \ln c$ and $I(Y;W|X)\le H(Y) \le \ln c$, where $c$ is the number of all classes. However, the training feature $X$ and the model weight $W$ are two continuous random variables, whose differential entropy can be infinite, and the mutual information can also be infinite. Therefore, $I(Y;X|W)$ and $I(Y;W|X)$ are bounded by $\ln c$, while $I(W;X)$ can grow to infinity.

In the following, we introduce the feature noise method, which injects noise into the training features and reduces the generalization bound by constraining $I(W;X)$.

\section{Method}\label{sec:meth}

In this paper, we introduce a novel bound-reducing method, \ie the feature noise (FN) method, that adds noise to the features of the training data to constrain the generalization bound through the mutual information $I(W;X)$.
Specifically, given a training dataset $\widetilde{S} = (X, \widetilde{Y})$  with
$m$ i.i.d samples drawn from the unkown data distribution, where $X \in \mathbb{R}^{m \times n}$ denotes the $n$-dimentional features and $\widetilde{Y} \in \mathbb{R}^{m \times 1}$ denotes the labels (with label noise)  of $m$ samples. The FN method adds  noise  (\eg  the multivariate Gaussian noise $\mathbf{z} \sim {\cal N}(\mathbf{0},\sigma^2I_{n \times n})$) into each feature vector $\mathbf{x}$ of the original feature matrix by
\begin{equation}
    \label{eq:method}
    \widetilde{\mathbf{x}} = \mathbf{x} + \mathbf{z}.
\end{equation}
After that, we train a DNN on the dataset $\hat{S} = (\widetilde{X},\widetilde{Y})$.

% advantages
The FN method has two major advantages. First, it is very simple, requiring little extra computation costs, and can be easily applied to various models.
Second, limiting the information in the training features is a novel methodology to increase the label noise generalization, which can be generalized to build many other methods.

In addition to the FN method, in this paper, we mainly focus on the  theoretical analysis of the effectiveness of the FN method under label noise, and application analysis of how to ensure good label noise generalization, in Sections \ref{sec:theo} and \ref{sec:app}, respectively.

\section{Theoretical Analysis}\label{sec:theo}
%% introducing the analysis AND  the analysis settings
In this section, we first prove that adding label noise to the training data loosens the PAC-Bayes generalization bound, which causes worse generalization than training without label noise in Section \ref{sec:ln}.
And we prove that adding Gaussian noise to the training features can improve the generalization of DNNs under label noise by constraining the PAC-Bayes generalization bound in Section \ref{sec:fn}.
We establish theoretical proofs within the following framework. We consider the training dataset denoted as $S=(X,Y)$, where the training features $X$ can be viewed as a Gaussian mixture model (GMM). Each component of the GMM represents an independent multivariate distribution associated with a particular class $y\in Y$.

We add symmetric noise to $Y$ to simulate the label noise issue.
Specifically, denoting $\widetilde{Y}=\{\widetilde{y}^{(1)},\cdots,\widetilde{y}^{(m)}\}$, the element of  $\widetilde{Y}$ is
\begin{equation}
\label{eq:symnoise}
{\widetilde y^{(i)}} = \left\{
{\begin{array}{ccc}
y^{(i)}\quad&\text{with probability }& 1-\Delta\\
U      \quad&\text{with probability }& \Delta
\end{array}} \right.
\end{equation}
where $\Delta \in [0,1]$ denotes the corruption rate. $U\sim {\cal U}\{ 0,c - 1\}$ is a random variable drawn from a discrete uniform distribution, where $c$ denotes the number of all classes.

\subsection{Label noise}\label{sec:ln}
In this part, we conduct theoretical analysis to demonstrate the symmetric label noise loosens the PAC-Bayes generalization bound, and will harm DNNs' generalization in Theorem \ref{thm:1}.

%%% label noise loosens gb
Lemma \ref{lm:sym_increase} establishes that adding symmetric label noise increases the entropy $H(Y)$ of the labels $Y$.

\begin{lemma}%[{\cm proved in Appendix B}]%\ref{ap:ta}
\label{lm:sym_increase}
Adding symmetric noise to 
the labels $Y$ increases the entropy, i.e., $H(Y) \le H(\widetilde{Y})$.
\end{lemma}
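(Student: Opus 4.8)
The plan is to show that the distribution of $\widetilde{Y}$ is a ``mixing'' of the distribution of $Y$ with the uniform distribution, and then invoke concavity of the Shannon entropy. First I would fix a single coordinate: by the i.i.d.\ structure it suffices to compare $H(y^{(i)})$ with $H(\widetilde{y}^{(i)})$ for a generic index $i$, since entropy is additive over independent coordinates and the corruption acts coordinatewise and independently. So the whole statement reduces to the scalar claim that for a $c$-valued random variable $y$ with law $P_Y$, if we set $\widetilde{y} = y$ with probability $1-\Delta$ and $\widetilde{y} = U$ with probability $\Delta$ (with $U$ uniform on $\{0,\dots,c-1\}$ and independent of $y$), then $H(\widetilde{y}) \ge H(y)$.

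Next I would identify the law of $\widetilde{y}$ explicitly: $P_{\widetilde{Y}} = (1-\Delta) P_Y + \Delta\, \mathcal{U}$, where $\mathcal{U}$ is the uniform pmf on the $c$ labels. Then, because $H(\cdot)$ is a concave function on the probability simplex, Jensen's inequality gives
\begin{equation}
\label{eq:jensen_ent}
H\bigl((1-\Delta)P_Y + \Delta\,\mathcal{U}\bigr) \ge (1-\Delta) H(P_Y) + \Delta\, H(\mathcal{U}).
\end{equation}
Since $H(\mathcal{U}) = \ln c$ is the maximum possible entropy on $c$ symbols, we have $H(\mathcal{U}) \ge H(P_Y)$, and therefore the right-hand side of \eqref{eq:jensen_ent} is at least $(1-\Delta) H(P_Y) + \Delta\, H(P_Y) = H(P_Y)$. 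Chaining these two inequalities yields $H(\widetilde{y}) \ge H(y)$, and summing over the $m$ independent coordinates gives $H(\widetilde{Y}) \ge H(Y)$, as claimed.

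I expect the only real subtlety — rather than a genuine obstacle — to be making the coordinatewise reduction airtight: one must check that $\widetilde{Y}$ still has independent coordinates (the noise variables and the $U$'s are drawn independently across samples per \eqref{eq:symnoise}) so that $H(\widetilde{Y}) = \sum_i H(\widetilde{y}^{(i)})$, and similarly for $Y$. An alternative route that avoids this bookkeeping is purely information-theoretic: introduce a Bernoulli switch $B \sim \mathrm{Bern}(\Delta)$ (independent across coordinates) so that $\widetilde{y}^{(i)}$ is a function of $(y^{(i)}, U^{(i)}, B^{(i)})$; then $H(\widetilde{y}^{(i)}) \ge H(\widetilde{y}^{(i)} \mid B^{(i)}) = (1-\Delta)H(y^{(i)}) + \Delta \ln c \ge H(y^{(i)})$, using that conditioning reduces entropy and that on $\{B^{(i)}=1\}$ the variable $\widetilde{y}^{(i)}$ is uniform. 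Either argument is short; the concavity version is the cleanest to present, so I would write that one as the main proof and perhaps remark on the conditioning version.
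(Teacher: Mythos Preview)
Your proof is correct and takes a genuinely different route from the paper's. The paper argues via a Markov-chain monotonicity principle: it builds a chain $\widetilde{Y}_0 \to \widetilde{Y}_1 \to \cdots \to \widetilde{Y}_m$ by corrupting one label at a time, observes that the stationary distribution is uniform, and invokes the standard fact (from Cover and Thomas) that ${\rm D}(\mu_t\|\mu)$ is nonincreasing along a Markov chain; since ${\rm D}(\mu_t\|\mu) = \log|\mathcal{Y}| - H(\widetilde{Y}_t)$ when $\mu$ is uniform, this yields $H(\widetilde{Y}_t)\le H(\widetilde{Y}_{t+1})$ for every $t$. Your argument instead reduces to a single coordinate, identifies $P_{\widetilde{Y}} = (1-\Delta)P_Y + \Delta\,\mathcal{U}$, and applies concavity of entropy together with $H(\mathcal{U})\ge H(P_Y)$. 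Both are valid; your approach is shorter and more elementary, and it avoids the slight looseness in the paper's identification of ``$t=[\Delta m]$ labels corrupted'' with the i.i.d.\ Bernoulli corruption model of \eqref{eq:symnoise}. The paper's approach, on the other hand, yields the stronger intermediate conclusion that entropy is monotone in the number of corrupted coordinates, which is not needed for the lemma but is conceptually pleasant. Your conditioning alternative ($H(\widetilde{y})\ge H(\widetilde{y}\mid B)$) is essentially the same concavity argument in disguise and would also be fine to present.
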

\begin{proof}
Letting $\widetilde{Y}_t \in \mathbb{R}^{m \times 1}$ be the set of $m$ labels with $t$ labels corrupted by the symmetric noise (\ie $t=[\Delta m]$), we have a Markov chain as follows:
\begin{equation}
\label{eq:inrr}
    \widetilde{Y}_0 \to \widetilde{Y}_1 \to \cdots \to \widetilde{Y}_m.
\end{equation}
 
The relative entropy between a distribution $\mu_{t}$ on the states at $\widetilde{Y}_t$  and the stationary distribution $\mu$ on the states at $\widetilde{Y}_m$  decreases with $t$, \ie ${\rm D}(\mu_{t}|| \mu) \ge {\rm D}(\mu_{t+1}|| \mu)$, because $\mu$ is the uniform distribution \cite{thomas2006elements}.
Since the relative entropy is the difference between the cross entropy $H(\mu_{t},\mu)$ and the entropy $H(\mu_{t})$, the inequality can be further expressed as
\begin{align}
    {\rm D}(\mu_{t}|| \mu) &=H(\mu_{t},\mu)-H(\mu_{t})\\
\label{eq:mu}
                           &=\mathbb{E}_{\mu_{t}}\log|{\cal Y}|-H(\mu_{t})\\
\label{eq:yt}
                           &= \log|{\cal Y}|-H(\widetilde{Y}_{t})\\
                           &\ge {\rm D}(\mu_{t+1}|| \mu)\\
\label{eq:yt1}
                           &= \log|{\cal Y}|-H(\widetilde{Y}_{t+1}),
\end{align}
where $|{\cal Y}|$ denotes the number of all possible states of $\widetilde{Y}$.
Hence, it has $H(\widetilde{Y}_{t}) \le H(\widetilde{Y}_{t+1})$ based on Eq. (\ref{eq:yt}) and Eq. (\ref{eq:yt1}), which means the entropy of the labels increases monotonically with $t$.
Therefore, the entropy of  clean labels $H(Y)=H(\widetilde{Y}_0)$ is smaller than the entropy of the noisy labels $H(\widetilde{Y})$ with arbitrary noise rate, \ie $H(Y) \le H(\widetilde{Y})$.
\end{proof}

Next, we prove that DNNs trained with symmetric label noise have a looser generalization bound than those trained without label noise in Theorem \ref{thm:1}.

\begin{theorem}
\label{thm:1}
DNNs trained with symmetric label noise have a loose constraint of the generalization bound than those trained without label noise.
\end{theorem}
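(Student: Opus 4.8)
The plan is to show that symmetric label noise enlarges the \emph{natural upper bound} on the sharpest PAC-Bayes bound $\mathcal{B}^{*}$, namely the quantity $g(\min(H(W),H(S)))$ identified in Section~\ref{sec:pre}, rather than trying to compare the (intractable) realised value of $\mathcal{B}^{*}$ across the clean and noisy training runs. Concretely, I would argue that replacing the clean label vector $Y$ by the corrupted $\widetilde{Y}$ turns the training set $S=(X,Y)$ into $\widetilde{S}=(X,\widetilde{Y})$ with $H(\widetilde{S})\ge H(S)$; since $g$ and $t\mapsto\min(H(W),t)$ are nondecreasing, the constraint $g(\min(H(W),H(\widetilde{S})))$ is at least as large as $g(\min(H(W),H(S)))$, which is exactly the claimed ``looser constraint of the generalization bound.''

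The core step is therefore the entropy inequality $H(\widetilde{S})\ge H(S)$. I would expand both sides by the chain rule for entropy, $H(\widetilde{S})=H(X)+H(\widetilde{Y}\mid X)$ and $H(S)=H(X)+H(Y\mid X)$; since the feature matrix $X$ is untouched by the corruption, it suffices to prove $H(\widetilde{Y}\mid X)\ge H(Y\mid X)$. The corruption mechanism in Eq.~(\ref{eq:symnoise}) depends only on $Y$, so $X\to Y\to\widetilde{Y}$ is a Markov chain and the DPI gives $I(X;\widetilde{Y})\le I(X;Y)$. Combining this with Lemma~\ref{lm:sym_increase} ($H(Y)\le H(\widetilde{Y})$) yields $H(\widetilde{Y}\mid X)=H(\widetilde{Y})-I(X;\widetilde{Y})\ge H(Y)-I(X;Y)=H(Y\mid X)$, as required. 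An equivalent route is to rerun the relaxation chain $\widetilde{Y}_{0}\to\widetilde{Y}_{1}\to\cdots$ from the proof of Lemma~\ref{lm:sym_increase} with everything conditioned on $X$, since the stationary law of $\widetilde{Y}\mid X$ is still the uniform distribution, giving $H(\widetilde{Y}_{t}\mid X)\le H(\widetilde{Y}_{t+1}\mid X)$ monotonically.

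To close the argument I would also record why $H(W)$ is not the binding term in $\min(H(W),H(S))$ in the regime of interest: for over-parametrised DNNs the weight vector is a high-dimensional continuous variable whose differential entropy dominates $H(X)+H(\widetilde{Y}\mid X)$, so the minimum tracks $H(\widetilde{S})$ and the increase genuinely propagates to the constraint. Alternatively, the same conclusion can be phrased through the decomposition in Eq.~(\ref{eq:bounddecom}): the term $I(W;\widetilde{Y}\mid X)$ is capped by $H(\widetilde{Y})$, which Lemma~\ref{lm:sym_increase} shows exceeds its clean-label counterpart $H(Y)$, while the $I(W;X)$ term is still controlled by the unchanged $H(X)$, so the upper bound governing $\mathcal{B}^{*}$ can only grow.

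I expect the main obstacle to be conceptual rather than computational: making precise that the theorem concerns the \emph{constraint} (the worst-case upper bound) on $\mathcal{B}^{*}$ and not the bound attained by a particular run, since the posterior $Q(W)$ — and hence $I(W;S)$, $H(W)$, and $I(W;X)$ — differs between the clean and noisy training processes and cannot be compared directly. Once the statement is read at the level of the constraint $g(\min(H(W),H(S)))$, the remainder is the short entropy computation above.
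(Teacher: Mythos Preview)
Your core inequality $H(\widetilde{Y}\mid X)\ge H(Y\mid X)$ is proved exactly as in the paper: the Markov chain $X\to Y\to\widetilde{Y}$ plus the DPI gives $I(X;\widetilde{Y})\le I(X;Y)$, and combining this with Lemma~\ref{lm:sym_increase} yields the conditional-entropy comparison. So the heart of the argument coincides.

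Where you diverge is in what you call the ``constraint.'' You bound $I(W;S)$ wholesale by $\min(H(W),H(S))$ (or, in your alternative, cap $I(W;\widetilde{Y}\mid X)$ by $H(\widetilde{Y})$ and $I(W;X)$ by $H(X)$), then argue the label-noise increase propagates because $H(W)$ is not binding. The paper instead works from the decomposition in Eq.~(\ref{eq:bounddecom}), bounds only the label term via $I(W;Y\mid X)\le H(Y\mid X)$, and \emph{leaves $I(W;X)$ untouched} in the resulting constraint $\mathcal{C}_S=\sqrt{(H(Y\mid X)+I(W;X)+\ln(2\sqrt{m}/\delta))/(2m)}$. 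The comparison $\mathcal{C}_S\le\mathcal{C}_{\widetilde{S}}$ then follows directly from $H(Y\mid X)\le H(\widetilde{Y}\mid X)$ without any appeal to $H(W)$ or $H(X)$. The paper's packaging is sharper in two ways: it uses $H(\widetilde{Y}\mid X)$ rather than your looser $H(\widetilde{Y})$ as the cap on the label term, and---more importantly for the paper's narrative---it keeps $I(W;X)$ explicit, which is precisely the quantity Theorem~\ref{thm:fn_mem} later shows is constrained by feature noise. Your route is still correct, but it obscures this connection and requires the extra over-parametrisation argument about $H(W)$ that the paper avoids.
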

\begin{proof}

Considering the Markov chain $X \to Y \to \widetilde{Y}$, it has $ I(X;\widetilde{Y}) \le I(X,Y)$ according to the DPI.
Based on $I(X;Y)=H(Y)-H(Y|X)$, we can obtain
\begin{equation}
\label{eq:entropy}
    H(\widetilde{Y})-H(\widetilde{Y}|X)\leq H(Y)-H(Y|X).
\end{equation}

According to Lemma \ref{lm:sym_increase}, it has $H(\widetilde{Y})\geq H(Y)$, which is applied to  Eq. (\ref{eq:entropy})  to have
\begin{align}
     &0\leq H(\widetilde{Y})-H(Y)\leq H(\widetilde{Y}|X)-H(Y|X),\\
     \label{eq:naturaline}
     &H(Y|X)\leq H(\widetilde{Y}|X).
\end{align}

Letting $S=(X,Y)$ be a clean dataset without label noise, based on Eq. (\ref{eq:bounddecom}), the generalization bound ${\cal B}_S^*$ of DNNs trained on $S$ is upper bounded by
\begin{align}
\label{eq:6}
    {\cal B}_S^*&=\sqrt {\frac{I(W;Y|X)+I(W;X)+\ln(\frac{2\sqrt{m}}{\delta})}{2m}}\\
\label{eq:7}
              &\le \sqrt {\frac{H(Y|X)+I(W;X)+\ln(\frac{2\sqrt{m}}{\delta})}{2m}}.
\end{align}
Eq. (\ref{eq:7}) holds  because of the equality $I(W;Y|X)=H(Y|X)-H(Y|X,W)$ and the non-negativity of the entropy. 
Similarly, for the dataset  ${\widetilde S}=(X,\widetilde{Y})$  with symmetric label noise, it has
\begin{align}
\label{eq:9}
    {\cal B}_{\widetilde S}^*&=\sqrt {\frac{I(W;\widetilde{Y}|X)+I(W;X)+\ln(\frac{2\sqrt{m}}{\delta})}{2m}}\\
    &\le \sqrt {\frac{H(\widetilde{Y}|X)+I(W;X)+\ln(\frac{2\sqrt{m}}{\delta})}{2m}}.
\end{align}

Denoting ${\cal C}_S = \sqrt {\frac{1}{2m}(H(Y|X)+I(W;X)+\log(\frac{2\sqrt{m}}{\delta}))}$ and ${\cal C}_{\widetilde S} = \sqrt {\frac{1}{2m}(H(\widetilde{Y}|X)+I(W;X)+\log(\frac{2\sqrt{m}}{\delta}))}$, the inequality $ {\cal C}_S \le {\cal C}_{\widetilde S}$ is easily obtained from Eq. (\ref{eq:naturaline}). 
\end{proof}

According to Lemma \ref{lm:cedecommy},  because $I(W;Y|X)$ is a negative term, the ERM algorithm minimizing  Eq. (\ref{eq:cedecommy}) will eventually maximize  $I(W;Y|X)$  until it reaches its upper bound $H(Y|X)$.
Similarly, $I(W;\widetilde{Y}|X)$ will reach $H(\widetilde{Y}|X)$.
Therefore, without regularization, the generalization gap of   DNNs trained with label noise will eventually reach the loosened generalization bound ${\cal C}_{\widetilde S}$, which will cause worse generalization than  training without label noise.

Theorem \ref{thm:1} suggests DNNs trained with label noise will have bad generalization  because of the loosened constraint ${\cal C}_{\widetilde S}$ on the generalization bound. 
In the following, we will prove that adding Gaussian noise to the training features induces a constant upper bound of $I(W;X)$, which results in better generalization of DNNs under label noise than training without feature noise.

\begin{figure}[!tb]
    \centering
    \includegraphics{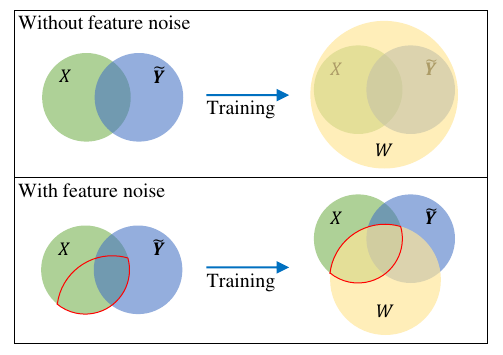}
    \caption{Information diagrams of the relationship between the training dataset (including the features $X$ and the noisy labels $\widetilde{Y}$) and the model weights $W$ of DNNs trained with or without feature noise.
    The circles represent the entropy, and the area of $h(W)$ overlapping with $h(X)$ and $H(\widetilde{Y})$ is $I(W;S)$, which
    is positively correlated to the value of the generalization bound. % in Eq. (\ref{eq:iws})
    The red line indicates the constraint of $I(W;X)$ induced by the feature noise. 
    Therefore, given adequate training epochs, the DNN trained with feature noise tends to have a smaller generalization bound than that trained without feature noise because $I(W;X)$ has a constant upper bound under feature noise.}
    \label{fig:info}
\end{figure}

\subsection{Feature noise}\label{sec:fn}
In this part, we conduct theoretical analysis to demonstrate the Gaussian feature noise constrains the PAC-Bayes generalization bound, and will boost DNNs' generalization in Theorem \ref{thm:fn_mem}.
Fig. \ref{fig:info} illustrates the influence of training DNNs with feature noise on their generalization.
%analyze adding fn limit the pbgb

\begin{theorem}%[Feature noise constrains the generalization bound]
\label{thm:fn_mem}
Training DNNs with Gaussian feature noise constrains  the generalization bound by constraining the mutual information $I(W;X)$ between the DNN weights  $W$ and the  features $X$.
\end{theorem}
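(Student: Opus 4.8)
The plan is to show that adding i.i.d.\ Gaussian noise $\mathbf{z}\sim\mathcal{N}(\mathbf{0},\sigma^2 I_{n\times n})$ to each feature vector turns the map from the (clean, fixed) features to the observed features into a Gaussian channel, and that this channel has bounded capacity; since $W$ is a (possibly stochastic) function of the noisy features $\widetilde{X}$, the data processing inequality then forces $I(W;\widetilde{X})$ — the term that plays the role of $I(W;X)$ in the bound decomposition \eqref{eq:bounddecom} — to inherit that finite upper bound. First I would set up the Markov chain $X\to\widetilde{X}\to W$, which holds because the noisy dataset $\hat S=(\widetilde{X},\widetilde{Y})$ is all the learning algorithm sees, so $W$ is conditionally independent of $X$ given $\widetilde{X}$ (the label channel in \eqref{eq:symnoise} does not touch $X$). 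By the DPI this gives $I(W;\widetilde{X})\le I(X;\widetilde{X})$, reducing everything to bounding the mutual information across the additive-Gaussian feature-noise channel.

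Next I would bound $I(X;\widetilde{X})$. Writing $I(X;\widetilde{X})=h(\widetilde{X})-h(\widetilde{X}\mid X)=h(\widetilde{X})-h(\mathbf{z})$ (the last equality because $\widetilde{X}=X+\mathbf{z}$ with $\mathbf z$ independent of $X$), the problem becomes controlling $h(\widetilde{X})$ from above. Under the paper's standing GMM assumption on $X$, each component is a multivariate Gaussian with some finite covariance, so $X$ has finite second moment; adding independent $\mathbf z$ gives $\mathrm{Cov}(\widetilde{X})=\mathrm{Cov}(X)+\sigma^2 I$, which is finite. The maximum-entropy property of the Gaussian then yields $h(\widetilde{X})\le \tfrac12\ln\big((2\pi e)^n\det(\mathrm{Cov}(X)+\sigma^2 I)\big)$, while $h(\mathbf z)=\tfrac12\ln\big((2\pi e)^n\sigma^{2n}\big)$. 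Subtracting,
\begin{equation}
\label{eq:fn_cap}
I(W;X)\;\le\;I(X;\widetilde{X})\;\le\;\frac12\ln\frac{\det\!\big(\mathrm{Cov}(X)+\sigma^2 I\big)}{\sigma^{2n}}\;=\;\frac12\sum_{i=1}^{n}\ln\!\Big(1+\frac{\lambda_i}{\sigma^2}\Big),
\end{equation}
where $\lambda_i$ are the eigenvalues of $\mathrm{Cov}(X)$. This is a finite constant depending only on the data covariance and $\sigma$, independent of $W$, the number of training epochs, or the label-corruption rate $\Delta$. Plugging this into \eqref{eq:bounddecom} (with $\widetilde Y$ in place of $Y$) replaces the unbounded term $I(W;X)$ by this constant, so $\mathcal{B}^*_{\hat S}$ acquires a finite upper bound $\sqrt{(H(\widetilde Y\mid X)+\frac12\sum_i\ln(1+\lambda_i/\sigma^2)+\ln(2\sqrt m/\delta))/2m}$, whereas in the no-feature-noise case of Theorem \ref{thm:1} the corresponding constraint contains $I(W;X)$ which, as argued after Lemma \ref{lm:cedecommy}, can grow without bound under ERM. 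Contrasting the two gives the claimed constraining effect and matches Fig.~\ref{fig:info}.

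The main obstacle I anticipate is the rigor of the ``$X$ is fixed data vs.\ $X$ is a random variable'' distinction: the mutual information $I(W;X)$ only makes sense if $X$ is treated as random (drawn from the GMM), and one must be careful that the GMM assumption indeed guarantees a finite covariance so that \eqref{eq:fn_cap} is not vacuous — a heavy-tailed feature distribution would break the maximum-entropy step. A secondary subtlety is justifying the Markov chain $X\to\widetilde X\to W$ when training also uses $\widetilde Y$; this is fine because $\widetilde Y$ is generated from $Y$ alone and $Y$ is a deterministic label of the GMM component, so conditioning on $\widetilde X$ (equivalently on the dataset the algorithm consumes) still renders $W\perp X$. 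Everything else — the DPI, $h(\widetilde X\mid X)=h(\mathbf z)$, and the Gaussian max-entropy inequality — is standard and requires no new ideas.
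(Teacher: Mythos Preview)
Your argument follows the same skeleton as the paper's: Markov chain $X\to\widetilde X\to W$, DPI to get $I(W;X)\le I(X;\widetilde X)$, then a Gaussian-channel capacity bound on $I(X;\widetilde X)$, plugged into \eqref{eq:bounddecom}. The only real difference is the capacity step: the paper quotes the scalar ``mean-constrained'' capacity $\tfrac12\ln\bigl(1+\mathbb E(X)/\sigma^2\bigr)$ directly from Cover--Thomas and substitutes it in one line, while you derive the standard power-constrained capacity via the Gaussian max-entropy inequality, arriving at the $n$-dimensional bound $\tfrac12\sum_i\ln(1+\lambda_i/\sigma^2)$ with $\lambda_i$ the eigenvalues of $\mathrm{Cov}(X)$. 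Your version is the textbook derivation and is more transparently justified under the GMM/finite-covariance assumption you explicitly invoke; the paper's single-number formula is coarser but is what feeds into the later quantitative analysis (Proposition~\ref{prop:quantify}). Either way the conclusion---$I(W;X)$ is capped by a finite constant independent of $W$---is identical. One slip to fix: in your first paragraph you write that DPI gives $I(W;\widetilde X)\le I(X;\widetilde X)$, but for the chain $X\to\widetilde X\to W$ the DPI bounds $I(W;X)$, not $I(W;\widetilde X)$; your displayed inequality already has this right, and it is indeed $I(W;X)$ with the clean features that appears in the paper's bound \eqref{eq:ii}.
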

\begin{proof}
Letting ${\widetilde S}=(X,\widetilde{Y})$ be a  dataset with symmetric label noise, we
add Gaussian noise into the original features $X$ to obtain the new features $\widetilde{X}$.
% We consider the process of adding feature noise as a Gaussian channel, which transmits signals from the original feature $\mathbf{X}$ to the new features $\widetilde{\mathbf{X}}$.
We have a Markov chain $X \to \widetilde{X} \to W$ because the model weights $W$ depend only on $\widetilde{X}$.
The process of adding Gaussian feature noise  can be regarded as a Gaussian channel that transmits signals from the original features $X$ to the new features $\widetilde{X}$.
Thus, according to the DPI and the Gaussian channel capacity with mean constraint \cite{thomas2006elements}, it has
\begin{equation}
\label{eq:dpiandcc}
I(W;X)\leq I(X;\widetilde{X}) \leq
\frac{1}{2}\ln (1 + \frac{{{\mathbb E}(X)}}{\sigma^2}),
\end{equation}
where ${\mathbb E}(X)$ denotes the expectation of $X$.

Applying  Eq. (\ref{eq:dpiandcc}) to Eq. (\ref{eq:bounddecom}), 
it can obtain:
\begin{align}
\label{eq:ii}
{\cal B}^*&=\sqrt {\frac{I(W;\widetilde{Y}|X)+I(W;X)+\ln(\frac{2\sqrt{m}}{\delta})}{2m}}\\
\label{eq:noiseboundbefore}
&\le \sqrt {\frac{H(\widetilde{Y}|X)+I(X;\widetilde{X})+\ln(\frac{2\sqrt{m}}{\delta})}{2m}}\\
\label{eq:noisebound}
&\le \sqrt {\frac{H(\widetilde{Y}|X)+\frac{1}{2}\ln (1 + \frac{{{\mathbb E}(X)}}{\sigma^2})+\ln(\frac{2\sqrt{m}}{\delta})}{2m}}.
\end{align}
\end{proof}

Eq. (\ref{eq:noisebound}) is the constraint of the generalization bound  of the DNNs trained on ${\widetilde S}$ with Gaussian feature noise. 
This constraint is a constant determined by $H(\widetilde{Y}|X)$, \ie the information of $\widetilde{Y}$ that cannot be inferred from $X$, ${\mathbb E}(X)$, \ie the expectation of $X$, the variance of the Gaussian noise $\sigma^2$, the number of the training samples  $m$ and the arbitrary value $\delta \in (0,1)$.
This constraint is unrelated to the model weights $W$. Next, we will explain why this constant upper bound  can  boost the generalization of DNNs under label noise.

The value of $I(X;\widetilde{X})$ can be calculated. According to Eq. (\ref{eq:dpiandcc}), assuming we have a dataset with ${\mathbb E}(X)=0.5$ and $\sigma^2=0.5$ (the variance of the Gaussian noise), the upper bound of $I(X;\widetilde{X})$ is approximately 0.35 nats. \footnote{The values of this upper bound in other cases will not be far from this value because we have ${\mathbb E}(X)\approx \sigma^2$.}
According to Lemma \ref{lm:cedecommy}, $I(W;X)=h(X)-h(X|W)$  is being maximized during the training of the cross-entropy loss. Theoretically, $I(W;X)$  will grow to positive infinity as $W$ contains all the information in $X$ (\ie $h(X|W)\to -\infty$) if there is no regularization. However, DNNs often have implicit regularizations, \eg SGD algorithm adds stochasticity to the training process, and $I(W;X)$ will not be infinite. Nevertheless, by adding Gaussian noise to the features, we can restrict the value of $I(W;X)$ to a considerably small value, \ie approximately 0.35 nats, which significantly tightens the original upper bound of $I(W;X)$ and will boost the generalization of DNNs under label noise. 

\begin{remark}
The upper bound of $I(W;Y|X)$ in Eq. (\ref{eq:noisebound}) is to be further discussed for two reasons. 
First, it does not take into account the influence of the feature noise. 
To analyze the quantitative relationship between the generalization bound and the feature noise, we need to replace $H(\widetilde{Y}|X)$ in Eq. (\ref{eq:noisebound}) with an upper bound that is related to $\sigma^2$ (proved in Proposition \ref{prop:quantify}).
% The constraint in  Eq. (\ref{eq:noisebound}) is negatively correlated with the Gaussian noise variance $\sigma^2$, which implies that with $\sigma^2$ growing, the generalization of DNNs becomes better.
% But intuitively, DNNs will not generalize well when $\sigma^2$ is extremely large and overwhelms the original features. This conflict is due to that the upper bound $H(\widetilde{Y}|X)$ is not correlated with $\sigma^2$.
Second, similar to the feature noise, the injection of label noise also poses a restriction on $I(W;Y|X)$ because of the DPI of the Markov chain $Y\to \widetilde{Y} \to W$. There is $I(W;Y|X) \le I(W;Y) \le I(Y;\widetilde{Y})$. However, it is unclear if $I(Y;\widetilde{Y})$ tightens the original upper bound $H(Y|X)$, because they are both likely to be very small.
\end{remark}

Theorem \ref{thm:fn_mem} proves that DNNs trained on $\widetilde{S}$ with
Gaussian feature noise has a tighter generalization bound than DNNs trained without feature noise,
because the adding of Gausssian noise constrains $I(W;X)$, \ie the feature information stored in the model weights.
However, we  still need to investigate what types of feature noise and how much feature noise  should be added to ensure good label noise generalization.

\section{Application Analysis}\label{sec:app}%Qualitative and quantitative
In this section, we conduct application analysis to define what types of feature noise, and  how much feature noise can achieve good label noise generaliztion of DNNs in Proposition \ref{prop:noisetype} and \ref{prop:quantify}, respectively.

\begin{proposition}%[The randomness of feature noise]
\label{prop:noisetype}
The constraint of the generalization bound is tighter for DNNs trained with feature noise with higher randomness.
\end{proposition}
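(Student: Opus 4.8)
The plan is to formalize ``randomness'' of the feature noise via its differential entropy (or equivalently, for a fixed second moment, its distance from the Gaussian), and then show that the upper bound on $I(W;X)$ appearing in Eq.~(\ref{eq:noisebound}) is a decreasing function of that entropy. First I would fix the Markov chain $X \to \widetilde{X} \to W$ from the proof of Theorem~\ref{thm:fn_mem}, so that by the DPI we always have $I(W;X) \le I(X;\widetilde{X})$. The whole task then reduces to controlling $I(X;\widetilde{X}) = h(\widetilde{X}) - h(\widetilde{X}\mid X) = h(X + \mathbf{z}) - h(\mathbf{z})$, since $\widetilde{X} = X + \mathbf{z}$ with $\mathbf{z}$ independent of $X$. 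Here the key observation is that $h(\mathbf{z})$ quantifies exactly the ``randomness'' of the noise: among all noise distributions with a prescribed power (second moment) $\sigma^2$, the differential entropy $h(\mathbf{z})$ is maximized, and the channel is ``noisiest,'' precisely when $\mathbf{z}$ is Gaussian. So I would \emph{define} the randomness of the feature noise to be $h(\mathbf{z})$ (normalized by dimension if needed), and argue monotonicity in this quantity.

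The key steps, in order, would be: (1) write $I(X;\widetilde{X}) = h(X+\mathbf{z}) - h(\mathbf{z})$ using independence of $X$ and $\mathbf{z}$; (2) upper bound $h(X+\mathbf{z})$ by the entropy of a Gaussian with the matched second moment $\mathbb{E}(X) + \sigma^2$ (the maximum-entropy property of the Gaussian under a variance constraint), giving $h(X+\mathbf{z}) \le \tfrac12\ln\big(2\pi e\,(\mathbb{E}(X)+\sigma^2)\big)$ per coordinate; (3) observe that $h(\mathbf{z})$ is a free, increasing measure of the noise randomness, so $I(X;\widetilde{X}) \le \tfrac12\ln\big(2\pi e\,(\mathbb{E}(X)+\sigma^2)\big) - h(\mathbf{z})$ decreases as $h(\mathbf{z})$ increases; (4) conclude via the monotonicity of $t \mapsto \sqrt{(t + \text{const})/(2m)}$ in Eq.~(\ref{eq:noisebound}) that the constraint on $\mathcal{B}^*$ tightens. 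A cleaner variant that matches the Gaussian-channel style of Theorem~\ref{thm:fn_mem} is to note that for \emph{fixed} noise power $\sigma^2$, replacing $\mathbf{z}$ by a Gaussian of the same power only increases $I(X;\widetilde{X})$ (the Gaussian is the worst-case additive noise for a fixed power constraint --- an ``entropy power''-type inequality), so the Gaussian bound $\tfrac12\ln(1+\mathbb{E}(X)/\sigma^2)$ is conservative and any ``more random'' noise at the same power gives a smaller actual $I(X;\widetilde{X})$; here ``more random'' can be made precise either through larger $h(\mathbf{z})$ or, at fixed power, a spectrally flatter / higher entropy-power noise.

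The main obstacle I expect is making the informal phrase ``higher randomness'' into a single well-posed scalar on which the bound is provably monotone, without trivializing the claim. Differential entropy is the natural candidate, but it is not scale-invariant, so the statement only makes clean sense once the noise power $\sigma^2$ (or some normalization) is held fixed; otherwise one can inflate $h(\mathbf{z})$ simply by rescaling, which would simultaneously change $\sigma^2$ and the bound in a coupled way. I would therefore phrase the proposition as: at a fixed noise power, the tightest among all additive feature-noise constraints is obtained by the maximum-entropy (Gaussian) noise, and conversely any reduction in the noise's entropy (or entropy power) at fixed power loosens the achievable guarantee --- equivalently, the ``worst-case'' bound $\tfrac12\ln(1+\mathbb{E}(X)/\sigma^2)$ is never beaten by low-randomness noise. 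A secondary subtlety is that $I(W;X)$ is bounded by $I(X;\widetilde{X})$ only from above, so strictly speaking I can only conclude the \emph{constraint} (upper bound) on $\mathcal{B}^*$ tightens, not $\mathcal{B}^*$ itself --- but that is exactly what the proposition as stated asserts, so this is a matter of careful wording rather than a real gap.
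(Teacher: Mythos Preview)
Your main argument (steps (1)--(4)) is correct and leads to the right conclusion, but it follows a genuinely different route from the paper. The paper defines the ``randomness'' of a feature-noise type $K$ as the \emph{equivocation} $\mathcal{R}_K \coloneqq h(X\mid \widetilde{X}_K)$ and uses the decomposition $I(X;\widetilde{X}_K)=h(X)-\mathcal{R}_K$; since $h(X)$ is fixed by the data and independent of the noise mechanism, larger $\mathcal{R}_K$ immediately gives smaller $I(X;\widetilde{X}_K)$, and the tighter constraint then follows from Eq.~(\ref{eq:noiseboundbefore}). You instead define randomness as the noise entropy $h(\mathbf{z})$ and use the dual decomposition $I(X;\widetilde{X})=h(X+\mathbf{z})-h(\mathbf{z})$, which obliges you to (i) assume additive independent noise and (ii) introduce an auxiliary max-entropy upper bound on $h(X+\mathbf{z})$ at fixed power. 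The paper's choice is more general---it applies to arbitrary feature corruptions, not only additive ones, which is precisely what lets it compare Gaussian noise to Gaussian blur in Remark~\ref{rem:random}---and it avoids both the extra bound and the fixed-power caveat you correctly identify as an obstacle. Your version has the compensating virtue that randomness is a property of the noise distribution alone rather than of the joint law of $(X,\widetilde{X})$, but note that the two notions are not equivalent: $h(X\mid\widetilde{X}) = h(X)-h(\widetilde{X})+h(\mathbf{z})$ in the additive case, so they differ by a term that depends on the noise through $h(\widetilde{X})$.

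One caution on your ``cleaner variant'': the claim that replacing $\mathbf{z}$ by a Gaussian of the same power \emph{increases} $I(X;\widetilde{X})$ is backwards. The standard worst-additive-noise result (for Gaussian inputs) says the Gaussian noise \emph{minimizes} $I(X;X+\mathbf{z})$ at fixed power, and your own inequality $I(X;\widetilde{X})\le \tfrac12\ln\bigl(2\pi e(\mathbb{E}(X)+\sigma^2)\bigr)-h(\mathbf{z})$ already shows the bound is tightest when $h(\mathbf{z})$ is largest, i.e., at the Gaussian. The sentence as written is internally inconsistent (Gaussian is simultaneously ``most random'' and alleged to give the largest $I$), so drop that paragraph and keep steps (1)--(4), which are sound on their own.
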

\begin{proof}
Given an arbitrary noise type $K$ added to the original features $X$, we define the randomness ${\cal R}_K$ of $K$ as the conditional entropy of the original features $X$ with the new features $\widetilde{X}_K$ known:
\begin{equation}
\label{eq:random}
    {\cal R}_K=h(X|\widetilde{X}_K).
\end{equation}
${\cal R}_K$ indicates the  information required to recover $X$ from $\widetilde{X}_K$.

Recall that Eq. (\ref{eq:dpiandcc}) is obtained by regarding the adding of Gaussian feature noise as a Gaussian channel.
More generally, according to \cite{thomas2006elements}, the adding of any feature noise can be regarded as a noisy channel with capacity (\ie the maximum mutual information between $X$ and $\widetilde{X}_K$ among all possible input distributions $p(x)$)  of 
\begin{equation}
\label{eq:gencapa}
    C ={\max_{p(x)}} 
\ I(X;\widetilde{X}_K).
\end{equation}

Combining Eq. (\ref{eq:random}) with Eq. (\ref{eq:gencapa}), it can obtain:
\begin{align}
    C&={\max_{p(x)}} (h(X)-h(X|\widetilde{X}_K))\\
\label{eq:capacity}
     &={\max_{p(x)}} (h(X)-{\cal R}_K).
\end{align}
Applying Eq. (\ref{eq:capacity}) to Eq. (\ref{eq:noiseboundbefore}), it has
\begin{equation}
\label{eq:chan_capa}
{\cal B}^* 
\le \sqrt {\frac{H(\widetilde{Y}|X)+{\max_{p(x)}} (h(X)-{\cal R}_K)+\ln(\frac{2\sqrt{m}}{\delta})}{2m}}.
\end{equation}

Since $h(X)$ is constant for any noise type, the constraint in Eq. (\ref{eq:chan_capa}) depends on the value of ${\cal R}_K$.
% ${\cal R}_K=H(\mathbf{X}|\mathbf{X}+K))$ is an intrinsic quality of a noise type.
Obviously, the constraint of the generalization bound becomes tight with  the increased  ${\cal R}_K$.
Therefore, the constraint of the generalization bound
is tighter for DNNs trained with feature noise with higher
${\cal R}_K$.
\end{proof}

\begin{remark}\label{rem:random}
According to the definition of feature noise randomness in Eq. (\ref{eq:random}), 
% the randomness of a feature noise is defined as the extra information it needs to recover the original features from the noised features.
the feature noise that is difficult to be recovered will have large randomness.
For example, Gaussian blur can be easily recovered, while Gaussian noise is difficult to be recovered because of the information loss \cite{geman1992constrained}, which makes the randomness of Gaussian blur smaller than that of Gaussian noise. 
Therefore, Proposition \ref{prop:noisetype} suggests DNNs trained with Gaussian noise will have better label noise generalization than DNNs trained with Gaussian blur (verified by Fig. \ref{fig:fns}).
\end{remark}

Lemma \ref{lm:prop2} demonstrates the conditional entropy $H(\widetilde{Y}|X)$ becomes larger after adding Gaussian feature noise, \ie $H(\widetilde{Y}|X)\le H(\widetilde{Y}|\widetilde{X})$,
and $H(\widetilde{Y}|\widetilde{X})$ has an upper bound of $H(\widetilde{Y})$ when $\sigma^2$ approaches to infinity.

\begin{lemma}%[proved in Appendix B]%\ref{ap:ta}
\label{lm:prop2}
After adding Gaussian feature noise with variance $\sigma^2$ to the dataset $\widetilde{S}=(X,\widetilde{Y})$, the following inequalities hold:
$H(\widetilde{Y}|X)\le H(\widetilde{Y}|\widetilde{X}) \le \lim\limits_{\sigma^2 \to \infty}  H(\widetilde{Y}|\widetilde{X}) = H(\widetilde{Y})$.%, where $H(\widetilde{\mathbf{y}}|\widetilde{\mathbf{X}}) \propto \sigma^2$.
\end{lemma}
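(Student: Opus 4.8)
The statement has three parts: (i) the data-processing-type inequality $H(\widetilde{Y}|X)\le H(\widetilde{Y}|\widetilde{X})$, (ii) the limiting identity $\lim_{\sigma^2\to\infty} H(\widetilde{Y}|\widetilde{X}) = H(\widetilde{Y})$, and (iii) the monotone chain that makes $H(\widetilde{Y}|\widetilde{X})$ lie between the two. The plan is to treat these one at a time, all via the Markov structure $\widetilde{Y}\to X\to \widetilde{X}$, which holds because $\widetilde{X}=X+\mathbf{z}$ with $\mathbf{z}$ drawn independently of everything else (in particular independently of the labels).

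For part (i), I would start from $I(\widetilde{Y};\widetilde{X})\le I(\widetilde{Y};X)$, which is exactly the DPI applied to the Markov chain $\widetilde{Y}\to X\to\widetilde{X}$ (the same move used in the proof of Theorem~\ref{thm:1}, there with $X\to Y\to\widetilde Y$). Expanding both mutual informations as $I(\widetilde Y;\cdot)=H(\widetilde Y)-H(\widetilde Y|\cdot)$ and cancelling the common term $H(\widetilde Y)$ gives $H(\widetilde{Y}|\widetilde{X})\ge H(\widetilde{Y}|X)$ immediately. This is the cheap part.

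For part (ii), the idea is that as $\sigma^2\to\infty$ the noisy feature $\widetilde{X}$ carries no information about $\widetilde{Y}$, so conditioning on it does nothing. I would make this precise by bounding $I(\widetilde Y;\widetilde X)$: by DPI along $\widetilde Y\to X\to\widetilde X$ (or directly, $\widetilde Y\to X\to\widetilde X$ being a Gaussian channel in the second link), $I(\widetilde Y;\widetilde X)\le I(X;\widetilde X)\le \tfrac12\ln(1+\mathbb{E}(X)/\sigma^2)$ by the Gaussian channel capacity bound already invoked in Eq.~(\ref{eq:dpiandcc}). As $\sigma^2\to\infty$ this upper bound tends to $0$, and since $I(\widetilde Y;\widetilde X)\ge 0$ always, we get $I(\widetilde Y;\widetilde X)\to 0$, hence $H(\widetilde Y|\widetilde X)=H(\widetilde Y)-I(\widetilde Y;\widetilde X)\to H(\widetilde Y)$. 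For part (iii), the upper bound $H(\widetilde Y|\widetilde X)\le H(\widetilde Y)$ holds for every finite $\sigma^2$ since conditioning never increases (Shannon) entropy; combined with the fact that the bound $\tfrac12\ln(1+\mathbb{E}(X)/\sigma^2)$ on $I(\widetilde Y;\widetilde X)$ decreases in $\sigma^2$ — equivalently $H(\widetilde Y|\widetilde X)$ increases toward its supremum $H(\widetilde Y)$ — this yields the displayed chain $H(\widetilde Y|X)\le H(\widetilde Y|\widetilde X)\le \lim_{\sigma^2\to\infty}H(\widetilde Y|\widetilde X)=H(\widetilde Y)$.

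\textbf{Main obstacle.} None of the inequalities is individually hard; the delicate point is justifying the limit, i.e. interchanging $\lim_{\sigma^2\to\infty}$ with the (differential/Shannon) entropy and asserting that the $\sigma^2$-family $H(\widetilde Y|\widetilde X)$ is actually monotone in $\sigma^2$ rather than merely bounded. I would handle monotonicity by a stochastic-degradation argument: for $\sigma_2^2>\sigma_1^2$ the channel $X\to\widetilde X_{\sigma_2^2}$ is a further Gaussian-smoothed version of $X\to\widetilde X_{\sigma_1^2}$ (add independent Gaussian noise of variance $\sigma_2^2-\sigma_1^2$), so $\widetilde Y\to\widetilde X_{\sigma_1^2}\to\widetilde X_{\sigma_2^2}$ is Markov and DPI gives $I(\widetilde Y;\widetilde X_{\sigma_2^2})\le I(\widetilde Y;\widetilde X_{\sigma_1^2})$, i.e. $H(\widetilde Y|\widetilde X)$ is nondecreasing in $\sigma^2$. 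The limit itself then follows from the squeeze $0\le I(\widetilde Y;\widetilde X)\le\tfrac12\ln(1+\mathbb E(X)/\sigma^2)\to 0$ with no further interchange needed. If one wants to avoid the degradation argument entirely, the squeeze alone already delivers both the limit and the sup value $H(\widetilde Y)$, at the cost of not asserting strict monotonicity — which is all the statement actually requires.
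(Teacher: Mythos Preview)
Your proposal is correct. Part (i) is identical to the paper's argument: both invoke the Markov chain $\widetilde{Y}\to X\to\widetilde{X}$, apply DPI to get $I(\widetilde{Y};\widetilde{X})\le I(\widetilde{Y};X)$, and unpack $I(\widetilde{Y};\cdot)=H(\widetilde{Y})-H(\widetilde{Y}|\cdot)$ to conclude. For the upper bound $H(\widetilde{Y}|\widetilde{X})\le H(\widetilde{Y})$, both rely on ``conditioning never increases entropy.''

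The one genuine difference is how the limit $\lim_{\sigma^2\to\infty}H(\widetilde{Y}|\widetilde{X})=H(\widetilde{Y})$ is justified. The paper argues heuristically: as $\sigma^2\to\infty$ the signal $X$ is negligible, so $\widetilde{X}$ behaves like pure noise $Z$ independent of $\widetilde{Y}$, whence $H(\widetilde{Y}|\widetilde{X})\to H(\widetilde{Y}|Z)=H(\widetilde{Y})$. You instead squeeze quantitatively via the Gaussian channel bound already established in Eq.~(\ref{eq:dpiandcc}): $0\le I(\widetilde{Y};\widetilde{X})\le I(X;\widetilde{X})\le\tfrac12\ln(1+\mathbb{E}(X)/\sigma^2)\to 0$. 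Your route is more rigorous and reuses machinery the paper has in hand, at no extra cost; the paper's route is shorter but leaves the ``can be ignored'' step informal. Your additional stochastic-degradation argument for monotonicity in $\sigma^2$ is a nice bonus that the paper does not give (and, as you note, is not strictly needed since the squeeze plus the universal bound $H(\widetilde{Y}|\widetilde{X})\le H(\widetilde{Y})$ already deliver the displayed chain).
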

\begin{proof}
First, we have a Markov chain $\widetilde{Y} \to X \to \widetilde{X}$ because $\widetilde{X}$ depends only on $X$. According to the DPI, it has
\begin{equation}
\label{eq:dpi2}
    I(\widetilde{X};\widetilde{Y}) \le I(X;\widetilde{Y}).
\end{equation}
By adding $H(\widetilde{Y})$ to both sides of Eq. (\ref{eq:dpi2}), it becomes
\begin{align}
    H(\widetilde{Y}) + I(\widetilde{X};\widetilde{Y}) &\le H(\widetilde{Y}) + I({X};\widetilde{Y}),\\
    H(\widetilde{Y}) -I(X;\widetilde{Y}) &\le H(\widetilde{Y}) - I(\widetilde{X};\widetilde{Y}),\\
    H(\widetilde{Y}|X) &\le H(\widetilde{Y}|\widetilde{X}).
\end{align}

Under the extreme scenario of $\sigma^2$ approaches to infinity, the values of the original features $X$ can be ignored, and thus $\widetilde{X}$ can be considered as random Gaussian noise $Z$, that is
\begin{equation}
    H(\widetilde{Y}|\widetilde{X}) \le H(\widetilde{Y})=H(\widetilde{Y}|Z)=\lim\limits_{\sigma^2 \to \infty}  H(\widetilde{Y}|\widetilde{X}).
\end{equation}
\end{proof}

To conduct quantitative analysis, we need to define the algebraic relationship between $H(\widetilde{Y}|\widetilde{X})$ and $\sigma^2$.
Based on Lemma \ref{lm:prop2}, we know that $H(\widetilde{Y}|\widetilde{X})$ has its upper bound when $\sigma^2$ approaches infinity.
Assuming  $H(\widetilde{Y}|\widetilde{X})$ increases linearly with $\sigma^2$, \ie  $H(\widetilde{Y}|\widetilde{X})=a\sigma^2$, we  conduct quantitative analysis of how much feature noise, \ie the variance of Gaussian noise, should be added to the original features to achieve good generalization of DNNs In Proposition \ref{prop:quantify}.

\begin{proposition}%[The quantity of feature noise]
\label{prop:quantify}
DNNs trained with Gaussian feature noise with variance $\sigma^2$ 
have the tightest  constraint of the generalization bound at $\sigma^2 = \frac{1}{2}({\sqrt {{{\mathbb E}(X)^2} + \frac{{2{\mathbb E}(X)}}{{a}}}  - {\mathbb E}(X)})$.

\end{proposition}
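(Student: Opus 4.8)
The plan is to combine the constraint derived in Theorem \ref{thm:fn_mem} with the linear ansatz $H(\widetilde{Y}|\widetilde{X}) = a\sigma^2$ introduced just before the statement, and then minimize the resulting expression over $\sigma^2$. First I would start from the bound in Eq. (\ref{eq:noisebound}), but with $H(\widetilde{Y}|X)$ replaced by $H(\widetilde{Y}|\widetilde{X})$ (justified because the DNN is actually trained on the noisy features $\widetilde{X}$, and $H(\widetilde{Y}|X)\le H(\widetilde{Y}|\widetilde{X})$ by Lemma \ref{lm:prop2}, so $H(\widetilde{Y}|\widetilde{X})$ is the honest upper bound here). Substituting the ansatz gives the constraint
\begin{equation}
\label{eq:quant_bound}
{\cal B}^* \le \sqrt{\frac{a\sigma^2 + \frac{1}{2}\ln\!\left(1 + \frac{{\mathbb E}(X)}{\sigma^2}\right) + \ln\!\left(\frac{2\sqrt{m}}{\delta}\right)}{2m}}.
\end{equation}
Since $m$ and $\delta$ are fixed and the square root is monotone increasing, minimizing the right-hand side over $\sigma^2 > 0$ is equivalent to minimizing the inner function
\begin{equation}
\label{eq:phi}
\phi(\sigma^2) = a\sigma^2 + \tfrac{1}{2}\ln\!\left(1 + \frac{{\mathbb E}(X)}{\sigma^2}\right).
\end{equation}

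Next I would carry out an elementary single-variable optimization. Writing $t = \sigma^2$ for clarity, one has $\phi'(t) = a + \tfrac12 \cdot \frac{-{\mathbb E}(X)/t^2}{1 + {\mathbb E}(X)/t} = a - \frac{{\mathbb E}(X)}{2t(t + {\mathbb E}(X))}$. Setting $\phi'(t) = 0$ yields $2t(t+{\mathbb E}(X)) = {\mathbb E}(X)/a$, i.e. $t^2 + {\mathbb E}(X)t - \frac{{\mathbb E}(X)}{2a} = 0$. Solving this quadratic and keeping the positive root gives
\begin{equation}
t = \frac{-{\mathbb E}(X) + \sqrt{{\mathbb E}(X)^2 + \frac{2{\mathbb E}(X)}{a}}}{2} = \frac{1}{2}\left(\sqrt{{\mathbb E}(X)^2 + \frac{2{\mathbb E}(X)}{a}} - {\mathbb E}(X)\right),
\end{equation}
which is exactly the claimed value of $\sigma^2$. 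To confirm this is a minimum (rather than a maximum or saddle), I would check the second derivative, or more simply observe that $\phi'(t) \to -\infty$ as $t \to 0^+$ and $\phi'(t) \to a > 0$ as $t \to \infty$, so the unique stationary point in $(0,\infty)$ is a global minimum of the convex-for-large-$t$, eventually-increasing function $\phi$; alternatively note $a\sigma^2$ is convex and $\tfrac12\ln(1+{\mathbb E}(X)/\sigma^2)$ is convex in $\sigma^2$ on the relevant range, so $\phi$ is convex and the critical point is the global minimizer.

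The main obstacle is not the calculus, which is routine, but the conceptual gap that must be bridged cleanly in the write-up: namely, justifying that the quantity being minimized is a legitimate \emph{constraint} on ${\cal B}^*$ for every $\sigma^2$, so that choosing the $\sigma^2$ that minimizes this constraint is the right notion of ``tightest constraint.'' This requires stitching together (i) the chain $I(W;X) \le I(X;\widetilde{X}) \le \tfrac12\ln(1+{\mathbb E}(X)/\sigma^2)$ from Theorem \ref{thm:fn_mem}, (ii) the replacement of $H(\widetilde{Y}|X)$ by $H(\widetilde{Y}|\widetilde{X})$ and the bound $H(\widetilde{Y}|\widetilde{X})\le H(\widetilde Y)$ from Lemma \ref{lm:prop2}, and (iii) the linear-growth modelling assumption $H(\widetilde{Y}|\widetilde{X}) = a\sigma^2$ with $a > 0$. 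I would state these dependencies explicitly at the start of the proof, note that the assumption in (iii) is a modelling choice consistent with Lemma \ref{lm:prop2} for moderate $\sigma^2$, and then the remainder is the three-line optimization above. One should also remark that the answer requires $a>0$ and ${\mathbb E}(X)>0$ for the expression under the square root to be positive and for the root to be real and positive, which holds since entropy is nondecreasing in $\sigma^2$ and $X$ is a nontrivial feature vector.
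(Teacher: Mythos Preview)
Your proposal is correct and follows essentially the same route as the paper: replace $H(\widetilde{Y}|X)$ by $H(\widetilde{Y}|\widetilde{X})$ via Lemma~\ref{lm:prop2}, substitute the linear ansatz $H(\widetilde{Y}|\widetilde{X})=a\sigma^2$, and minimize the resulting one-variable expression by setting the derivative to zero and solving the quadratic $\sigma^4+\mathbb{E}(X)\sigma^2-\mathbb{E}(X)/(2a)=0$. Your write-up is in fact slightly more complete than the paper's, since you explicitly verify that the stationary point is a minimum (via the sign of $\phi'$ at the endpoints, or convexity) and flag the positivity conditions $a>0$, $\mathbb{E}(X)>0$; the paper simply asserts the root gives the minimum.
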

\begin{proof}
Based on Lemma \ref{lm:prop2}, Eq. (\ref{eq:noisebound}) can be rewritten as:

\begin{equation}
\label{eq:quantity2}
{\cal B}^*\le \sqrt {\frac{H(\widetilde{Y}|\widetilde{X})+ \ln ({(1 + \frac{{{\mathbb E}(X)}}{\sigma^2})^{\frac{1}{2}}}{(\frac{2\sqrt{m}}{\delta })})}{m}}.
\end{equation}

Thus, the tightest constraint of ${\cal B}^*$ is obtained by the minimum of the right-hand side in Eq. (\ref{eq:quantity2}).
Based on $H(\widetilde{Y}|\widetilde{X})=a\sigma^2$, it becomes

\begin{equation}
\label{eq:mini}
    {\cal B}^*\le \sqrt {\frac{a\sigma^2+ \ln ({(1 + \frac{{{\mathbb E}(X)}}{\sigma^2})^{\frac{1}{2}}}{(\frac{2\sqrt{m}}{\delta })})}{2m}}.
\end{equation}

Let $f(\sigma^2)=a\sigma^2+ \ln ({(1 + \frac{{{\mathbb E}(X)}}{\sigma^2})^{\frac{1}{2}}}{(\frac{2\sqrt{m}}{\delta })})$, it has
\begin{equation}
    f'(\sigma^2)=a - \frac{{\mathbb E}(X)}{{{\sigma^4}(2 + \frac{2{\mathbb E}(X)}{\sigma^2})}}.
\end{equation}
    
The minimum of the right-hand side of Eq. (\ref{eq:mini}) is obtained by $f'(\sigma^2)=0$. That is
\begin{equation}
\label{eq:root}
    {\sigma^4} + {\mathbb E}(X)\sigma^2 - \frac{{\mathbb E}(X)}{2a}=0.
\end{equation}
 
For $\sigma^2 \in \left( {0,\infty } \right)$ and $a> 0$, Eq. (\ref{eq:mini}) has its minimum at the root of Eq. (\ref{eq:root}):
\begin{equation}
\label{eq:minin}
    \sigma^2 = \frac{1}{2}({\sqrt {{{\mathbb E}(X)^2} + \frac{{2{\mathbb E}(X)}}{{a}}}  - {\mathbb E}(X)}).
\end{equation}

\end{proof}

Eq. (\ref{eq:minin}) suggests that the level of Gaussian feature noise achieving the tightest generalization bound is determined by the mean of the features ${\mathbb E}(X)$ and the coefficient $a$. Note that $a$ is changing as the label noise rate $\Delta$ varies, because we know intuitively that the information of $\widetilde{Y}$ that cannot be inferred from $\widetilde{X}$ (\ie $H(\widetilde{Y}|\widetilde{X})$) becomes larger as  $\Delta$ increases.
Therefore, Proposition \ref{prop:quantify} suggests that the level of the feature noise achieving the tightest generalization bound is a certain value influenced by the label noise rate (verified by Fig. \ref{fig:ablvar}).

\section{Experiment}\label{sec:ex}

In this section, we present a comprehensive set of experiments to validate the effectiveness of the FN method and to provide empirical evidence that corroborates our theoretical findings.
In Section \ref{ss:ca} and \ref{ss:int}, we compare the performance of the FN method against ten other state-of-the-art methods in terms of accuracy on four real-world datasets. Additionally, we compare the interpretability of the FN methods and the denoising-based methods in Appendix \ref{ss:int}.
Furthermore, in Section \ref{ss:bv}, we perform experiments aimed at visualizing the PAC-Bayes bound. The goal of this analysis is to empirically verify our primary theoretical conclusion, which posits that the utilization of feature noise can lead to tighter PAC-Bayes generalization bound of DNNs under label noise.
Additionally, we verify the application analysis in Section \ref{ss:fn} and \ref{ss:qu} and demonstrate the effectiveness of FN method on two other label noise types in Section \ref{ss:ln}.

\begin{table}[!htb]
\caption{Statistics of the datasets used in the experiments.}
\label{tb:dataset}
\resizebox{\linewidth}{!}{%
\begin{tabular}{ccccc}\toprule
         & \#(training) & \#(validation) & \#(testing) & \#(class) \\\midrule
MNIST    & 50000          & 10000            & 10000         & 10          \\
FashionMNIST& 50000       & 10000            & 10000         & 10          \\
miniImageNet & 800       & 200            & 200         & 2          \\
ADNI       & 164            & 124              & 124          & 2          \\\bottomrule 
\end{tabular}%
}
\end{table}

\begin{table*}[!tb]
\centering
\caption{ACC and standard deviations of all methods on grayscale image datasets under all label noise rates.}
\label{tb:gray}
\begin{threeparttable}
\begin{tabular}{ccccccccc}
\toprule
Dataset            & \multicolumn{4}{c}{MNIST}                                                                                    & \multicolumn{4}{c}{FMNIST}                                                                                    \\\cmidrule(r){2-5}\cmidrule(r){6-9}
$\Delta$ & 0                         & 0.2                       & 0.5                       & 0.8                       & 0                         & 0.2                       & 0.5                       & 0.8                       \\ \midrule
BL                 & 98.9$\pm$0.1 & 97.1$\pm$1.8 & 92.2$\pm$3.9 & 42.5$\pm$5.7 & 99.0$\pm$0.0 & 83.1$\pm$6.9 & 71.3$\pm$3.5 & 34.8$\pm$0.6 \\
SR                 & 98.1$\pm$0.5 & 96.4$\pm$0.2 & 91.8$\pm$0.6 & 83.3$\pm$0.6 & 88.2$\pm$0.2 & 85.9$\pm$0.7 & 84.2$\pm$0.1 & 76.6$\pm$2.0 \\
CT                 & 99.5$\pm$0.1 & 97.2$\pm$0.1 & 92.4$\pm$0.1 & 82.2$\pm$1.0 & \textbf{99.6$\pm$0.0} & 97.2$\pm$0.1 & 92.7$\pm$0.6 & 82.3$\pm$0.6 \\
CT+                & 99.1$\pm$0.1 & 98.7$\pm$0.0 & 97.2$\pm$0.2 & 48.7$\pm$9.3 & 99.1$\pm$0.1 & 98.6$\pm$0.1 & 97.1$\pm$0.1 & 65.2$\pm$7.6 \\
ES                 & 99.0$\pm$0.0 & 98.0$\pm$0.5 & 95.8$\pm$3.1 & 88.5$\pm$4.8 & 99.0$\pm$0.2 & 98.0$\pm$0.2 & 94.9$\pm$0.2 & 55.3$\pm$3.2 \\
ELR                & 99.6$\pm$0.0 & 99.3$\pm$0.1 & 98.8$\pm$0.1 & 85.9$\pm$1.8 & 93.9$\pm$0.3 & 92.2$\pm$0.5 & 86.9$\pm$0.5 & 55.6$\pm$8.8 \\
ELR+               & 99.3$\pm$0.4 & \textbf{99.5$\pm$0.0} & 99.1$\pm$0.1 & 85.5$\pm$0.2 & \textbf{99.6$\pm$0.0} & \textbf{99.1$\pm$0.6} & 97.2$\pm$0.3 & 84.1$\pm$0.5 \\
PES                & 90.8$\pm$3.1 & 98.9$\pm$0.2 & 79.1$\pm$7.7 & 31.2$\pm$8.8 & 68.2$\pm$5.4 & 74.3$\pm$4.4 & 77.9$\pm$4.1 & 24.0$\pm$1.5 \\
NCT                & 99.5$\pm$0.2 & 96.9$\pm$0.2 & 75.4$\pm$0.9 & 25.2$\pm$1.0 & 91.9$\pm$2.9 & 88.8$\pm$0.2 & 67.7$\pm$1.0 & 25.2$\pm$0.5 \\
LIMIT              & 99.3$\pm$0.1 & 98.7$\pm$1.5 & 97.3$\pm$4.3 & 83.1$\pm$3.2 & 99.3$\pm$1.0 & 98.7$\pm$2.1 & 97.8$\pm$3.2 & 85.3$\pm$1.3 \\\midrule
BL+FN\tnote{*}     & 99.0$\pm$0.1 & 98.4$\pm$0.0 & 97.9$\pm$0.1 & 95.7$\pm$0.4 & 99.2$\pm$0.0 & 97.6$\pm$0.1 & 96.1$\pm$0.1 & 78.3$\pm$4.3 \\
ES+FN              & 99.2$\pm$0.0 & 98.5$\pm$0.2 & 97.8$\pm$0.2 & 95.8$\pm$0.2 & 99.1$\pm$0.0 & 98.5$\pm$0.2 & 97.7$\pm$0.0 & 91.2$\pm$0.7 \\
ELR+FN             & 99.6$\pm$0.0 & \textbf{99.5$\pm$0.0} & 98.8$\pm$0.1 & 94.0$\pm$0.2 & 98.5$\pm$0.0 & 98.3$\pm$0.2 & 95.4$\pm$0.4 & 80.7$\pm$0.2 \\
ELR++FN            & \textbf{99.7$\pm$0.3} & \textbf{99.5$\pm$0.3} & \textbf{99.2$\pm$0.2} & \textbf{99.2$\pm$0.3} & 98.6$\pm$1.2 & 98.5$\pm$1.1 & \textbf{97.9$\pm$2.0} & \textbf{92.4$\pm$2.9} \\ \bottomrule
\end{tabular}

\begin{tablenotes}
        \footnotesize
        \item[*] Feature noise method is denoted as FN, it can be combined with different methods to boost their performance.
\end{tablenotes}
\end{threeparttable}

\end{table*}

\begin{figure*}[!tb]
    \centering
	\subfigure[Gaussian noise]{
        \begin{minipage}[t]{.23\linewidth}
			\centering
			\includegraphics[width=1\linewidth]{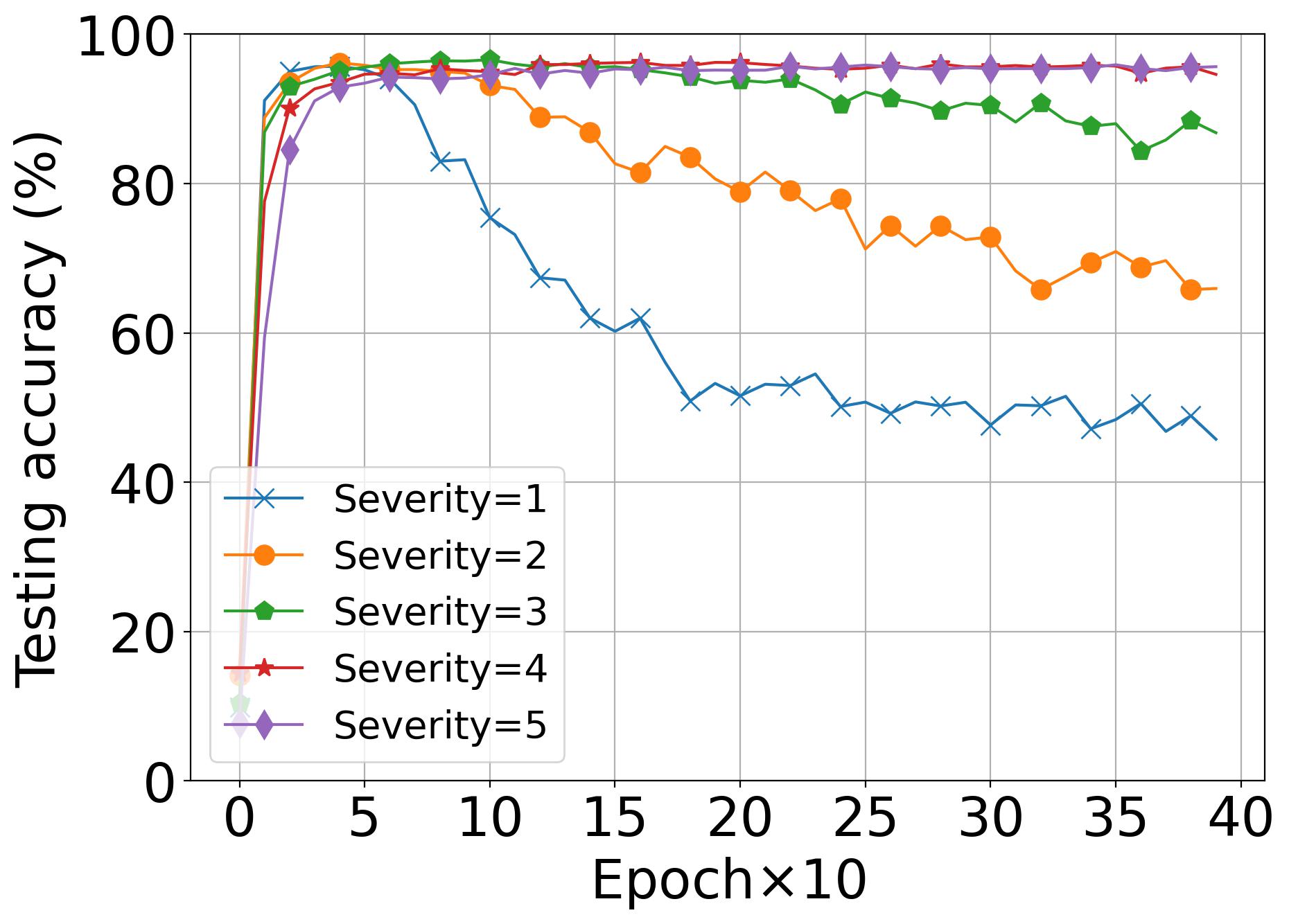}
		\end{minipage}
		}
	\subfigure[Impulse noise]{
		\begin{minipage}[t]{.23\linewidth}
			\centering
			\includegraphics[width=1\linewidth]{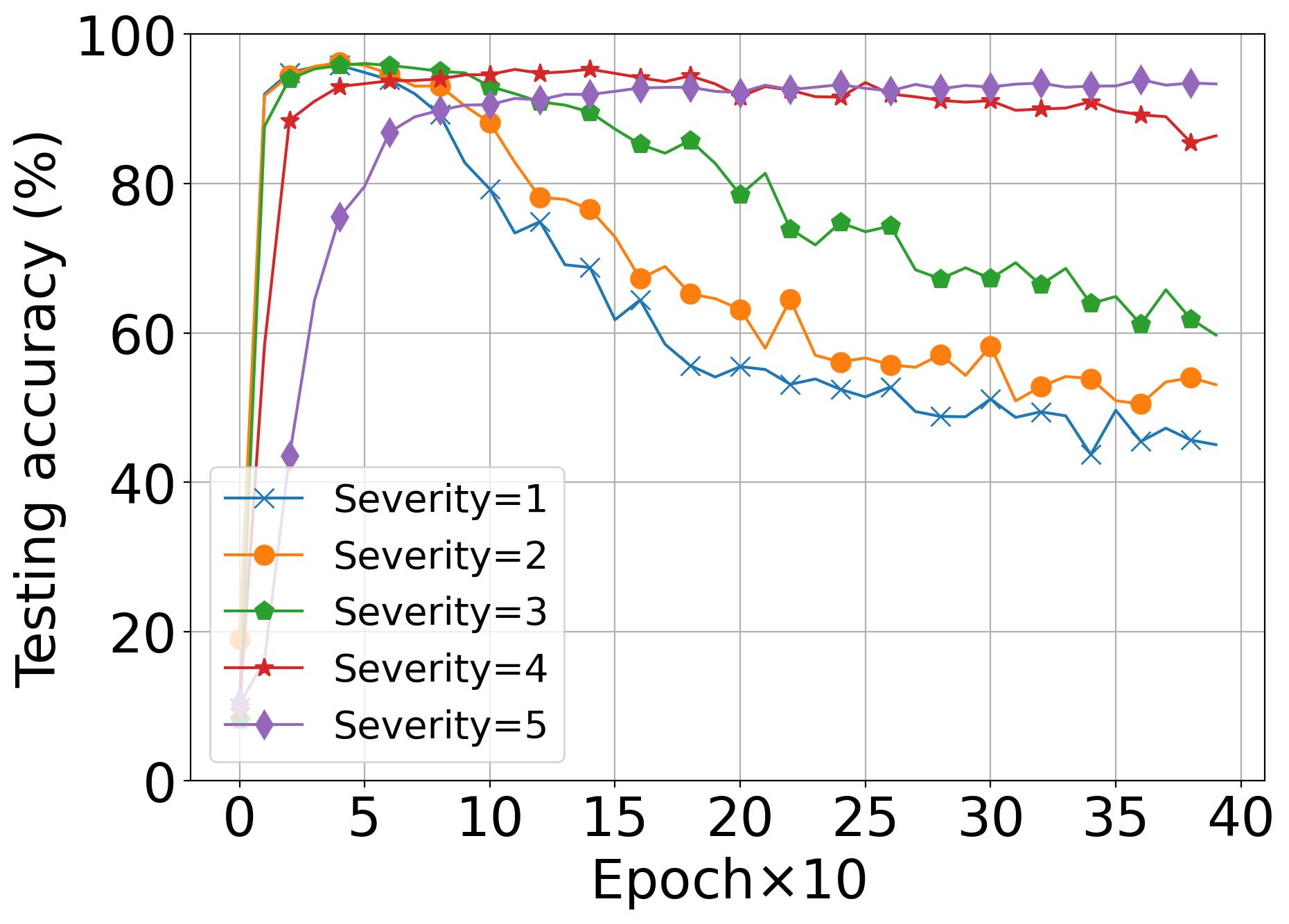}
		\end{minipage}
	}
	\subfigure[Gaussian blur]{
		\begin{minipage}[t]{.23\linewidth}
    		\centering
    		\includegraphics[width=1\linewidth]{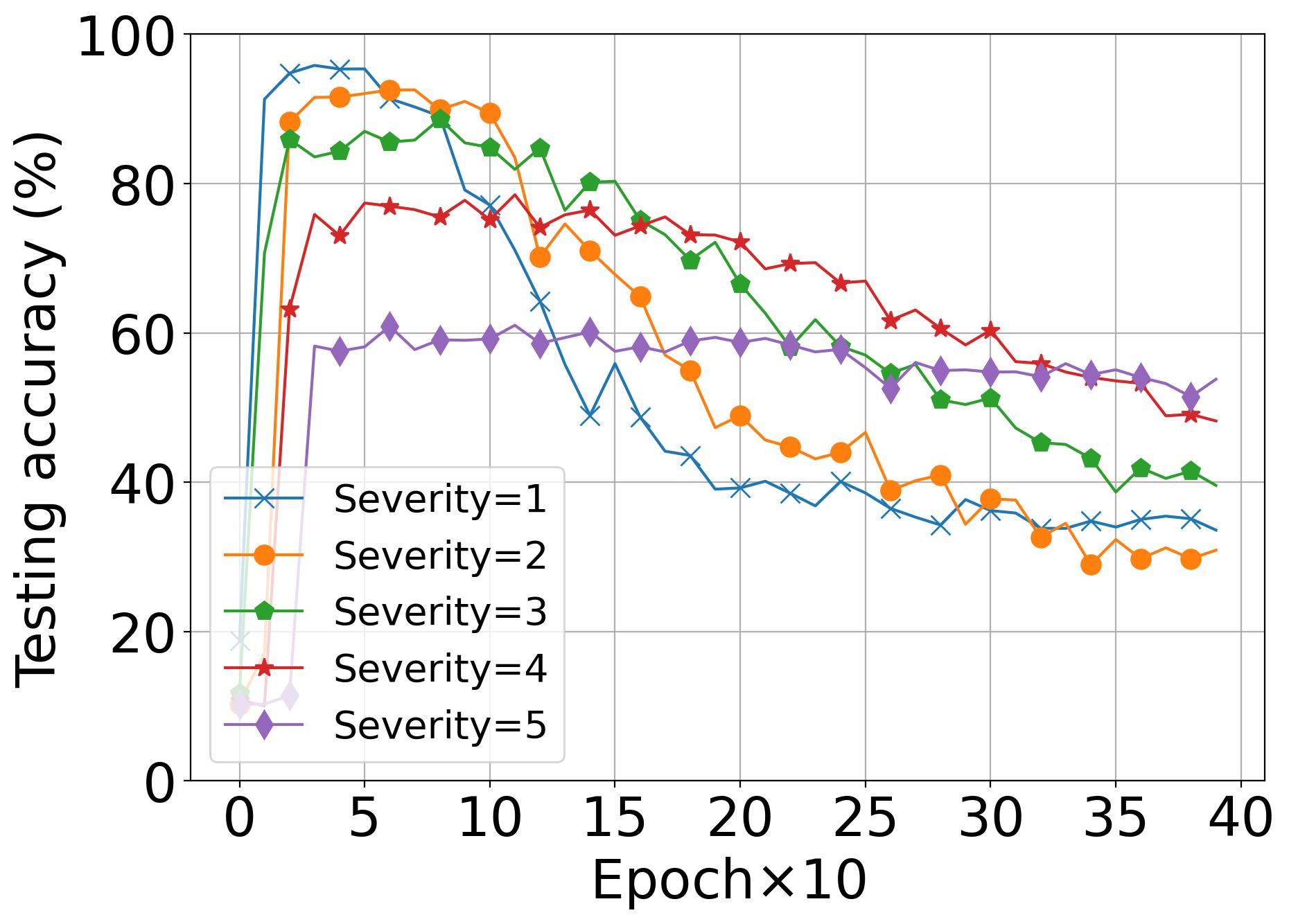}
		\end{minipage}
		}
	\subfigure[Snow]{
		\begin{minipage}[t]{.23\linewidth}
			\centering
			\includegraphics[width=1\linewidth]{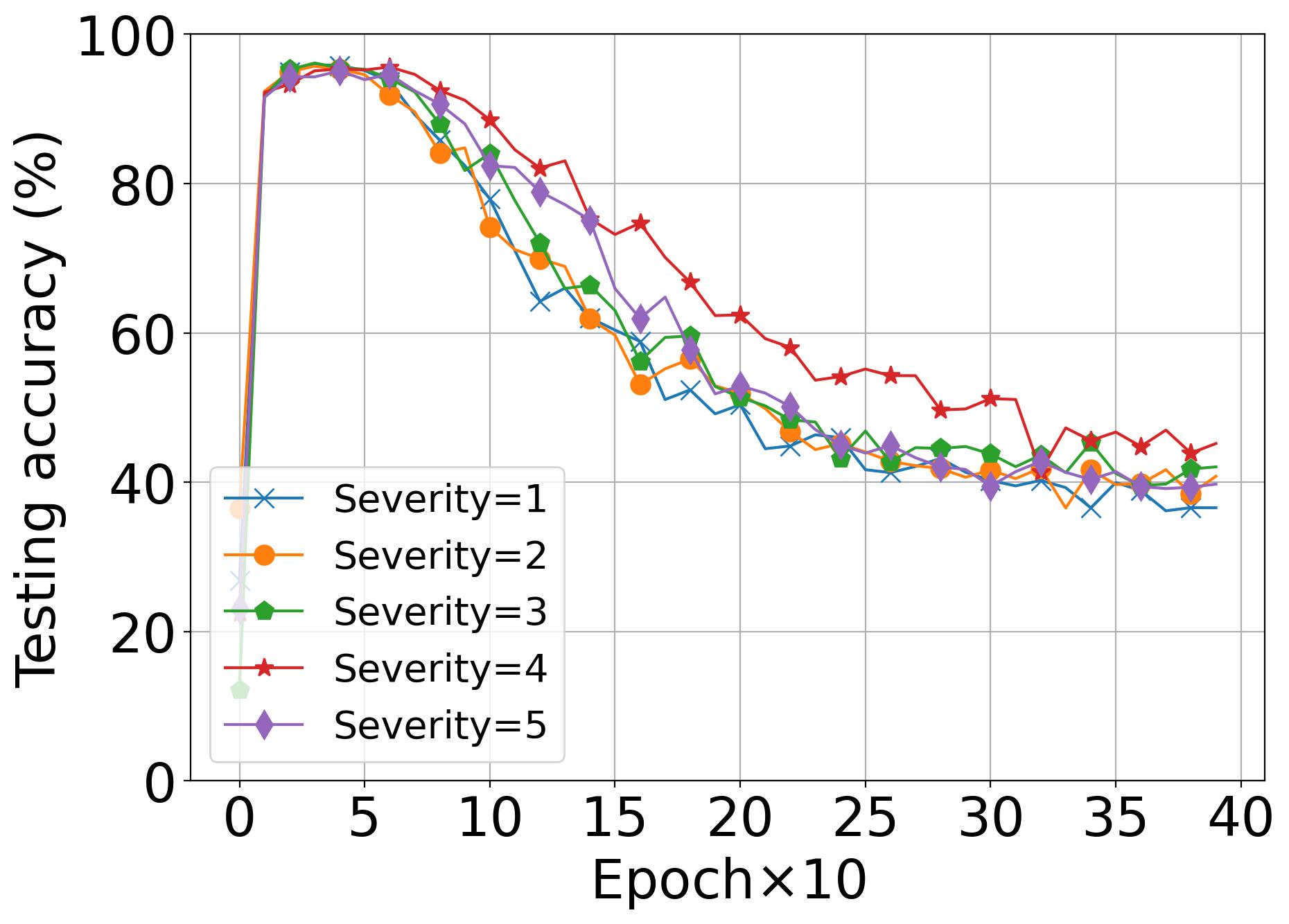}
		\end{minipage}
	}

    \caption{ACC curves of LeNet-5 trained on MNIST with four types of feature noise  of increasing severity levels  under $\Delta=0.8$. The severity levels of the feature noise are generated by the Imagecorruptions toolbox \cite{michaelis2019dragon}.}
    \label{fig:fns}
\end{figure*}

\subsection{Experimental Setup}
\subsubsection{Datasets}

To evaluate the effectiveness of the FN method on different types of datasets, we employ three image datasets (\ie MNIST \cite{deng2012mnist}, FashionMNIST \cite{xiao2017fashion} and miniImageNet \cite{vinyals2016matching}), and one graph dataset (\ie ADNI \cite{jack2008alzheimer}). 
Their statistics are presented in Table \ref{tb:dataset}. 
Detailed descriptions of all datasets are listed in Appendix \ref{ap:data}.

\subsubsection{Comparison methods}
The comparison methods include  baseline method (BL) (\ie convolutional neural network (CNN) \cite{lecun2015lenet,he2016deep} and graph convolutional network (GCN) \cite{welling2016semi}), one clean-sample-based method (\ie sample reweighting (SR) \cite{ren2018learning}), eight small-loss-sample-based methods (\ie co-teaching (CT) \cite{han2018co}, co-teaching+ (CT+) \cite{yu2019does}, early stopping (ES) \cite{li2020gradient}, early learning regularization (ELR) and ELR+ \cite{liu2020early}, progressive early stopping (PES) \cite{bai2021understanding} and nested co-teaching (NCT) \cite{chen2021boosting}, and one bound-reducing method (\ie LIMIT \cite{harutyunyan2020improving}).
To conduct a fair comparison, we delete all modules that increase the robustness of DNNs in general, \eg dropout and early stopping \footnote{We cancel the early stopping in all methods except ES.},  in all comparison methods and the FN method.
Detailed descriptions of all comparison methods are listed in Appendix \ref{ap:cm}.

\subsubsection{Setting-up}
All experiments are implemented in PyTorch and conducted on NVIDIA GeForce 3090 GPU (24GB memory). We repeat all experiments three times until the models converge and report their averaged results. We employ the author-verified codes for all comparison methods and achieve their best performance using the recommended parameters or a random search method for parameter selection. 
The source codes of the FN method are available on \url{https://github.com/zlzenglu/FN}.

\subsubsection{Criterion}
We add symmetric label noise with corruption rates $\Delta=\{0,0.2,0.5,0.8\}$ into the datasets and evaluate the effectiveness of all methods by the classification accuracy (ACC) on the clean testing set. It shows the model's generalization under different label noise rates.
% LNR is obtained by computing the reciprocal of the variance of the ACCs across all label noise rates. It indicates the model's performance stability, \ie the robustness, under different label noise rates.
Besides, we evaluate the interpretability of several methods trained under massive label noise by visualizing the saliency map ${\cal M}$, which is generated by computing the gradient of the class activation score w.r.t the input features following the literature \cite{simonyan2013deep}. It highlights the most important features for the model to make predictions. %Specifically, for an image $x_i$, we have ${\cal M}_i=\frac{\partial S_c}{\partial x_i}|_{x}$, where ${\cal M}_i$ has the same dimensionality as $x_i$. 

\subsection{Classification accuracy}\label{ss:ca}

We compare the performance of the FN method with all comparison methods in terms of ACC on gray-scale image datasets (\ie MNIST and FMNIST), high-resolution RGB image datasets (\ie MiniImageNet), and medical image graph datasets (\ie ADNI).

Table \ref{tb:gray} shows the ACC of all methods under different label noise rates on MNIST and FMNIST.  
% As we can see, the FN method has competitive ACC under low label noise rates ($\Delta = \{0, 0.2\}$), while it has significantly better ACC than all comparison methods under high label  noise rates ($\Delta = \{0.5, 0.8\}$) on both datasets with different backbone models. 
To further show the strengths, we evaluate the FN method on more complex datasets, \ie miniImageNet and ADNI. 
We select the baseline model and three methods (\ie ES, ELR, and SR) that outperform the others in the previous experiments  or are suitable to deal with graph datasets to be the comparison methods.
In the Appendix, Table \ref{tb:highrgb} displays the results of these comparison methods and the FN methods.

Results show that the FN method improves by  2.73\% (\{0.1\%, 0.0\%, 0.1\%, 10.7\%\} for $\Delta = \{0,0.2,0.5,0.8\}$), 1.55\% (\{-0.4\%, -0.6\%, 0.1\%, 7.1\%\} for $\Delta = \{0,0.2,0.5,0.8\}$),  4.75\% (\{7.5\%, 3.6\%, 5.7\%, 2.2\%\} for $\Delta = \{0,0.2,0.5,0.8\}$),  3.65\% (\{-3.3\%, 4.6\%, 5.3\%, 8.0\%\} for $\Delta = \{0,0.2,0.5,0.8\}$) on average,  compared to the best comparison methods on MNIST, FMNIST, miniImageNet and ADNI, respectively.
These results show that the FN method can achieve significantly better ACC under high label noise rates, \eg $\Delta=0.8$.
However, under low label noise rates, \eg $\Delta=\{0,0.2\}$, the FN method cannot always achieve significantly better performance.
This is possibly due to that the constraint of FN has on the generalization bound works better under high label noise rates, when the generalization bound is quite loose. For the low label noise rate, the generalization bound is tight, so that the effect of FN becomes trivial.

We must clarify that the value of the FN method is not competing with the SOTA label noise methods in terms of accuracy. 
The value of the FN method lies in its simplicity. The classification results prove that this simple trick can boost the generalization of many backbone models under label noise scenarios, with little extra computation costs and high flexibility on different datasets.
Based on these results, we want to emphasize the possibility of using FN as a standard technique on top of any model to boost the performance under severe label noise.

\subsection{Bound visualization}\label{ss:bv}

\begin{figure}[!ht]
    \centering
    \includegraphics[width=.75\linewidth]{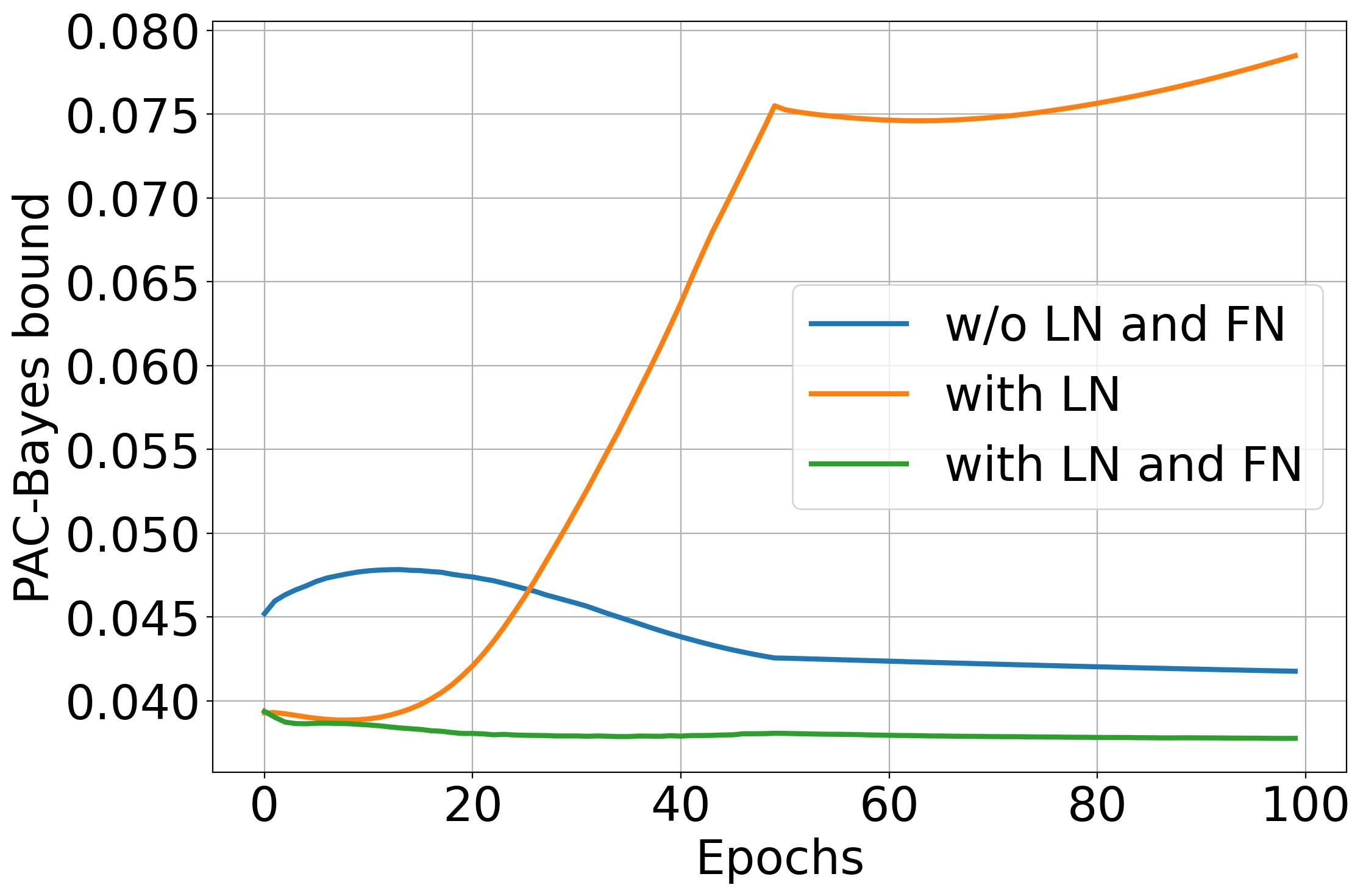}
    \caption{The PAC-Bayes generalization bound ($\delta=0.05$) of LeNet-5 trained on MNIST under three scenarios, \ie training without label noise and feature noise, training with label noise ($\Delta=0.8$), and training with label noise and feature noise (Gaussian noise with $\sigma^2=1$). The sharp change of the curves at epoch 50  is due to the learning rate change.}
    \label{fig:pbbvisual}
\end{figure}

To verify our conclusion in the theoretical analysis that adding feature noise tightens the generalization bound of DNNs under label noise, we visualize the PAC-Bayes bound of LeNet-5 trained on MNIST.
Specifically, we assume the prior and posterior distributions over the hypothesis space are two Gaussians. We use 30,000 clean samples and 30,000 samples with label noise or feature noise to train the DNN 100 times to generate the weight matrix $\mathbf{W}_t \in \mathbb{R}^{100 \times k}$ and the mean weight vector $\mathbf{w}_t \in \mathbb{R}^{1 \times k}$ over the 100-time runs, where $k$ is the dimension of the weights and $t$ is the number of the epoch. Then we have the multivariate Gaussian distribution parametrized by mean $\mathbf{w}_t$ and covariance matrix $\mathbb{E}\{(\mathbf{w}_t^{(i)}-\mathbf{w}_t)^\mathsf{T}(\mathbf{w}_t^{(i)}-\mathbf{w}_t)\}$. 
We calculate the PAC-Bayes bound in each epoch and show the result in Fig. \ref{fig:pbbvisual}.
Note that this approximation of the generalization bound can be quite loose than the true bound, because the Gaussian prior is not optimal. The theoretical upper bound with both label noise and feature noise computed by Eq. (\ref{eq:noisebound}) is approximately 0.014.

Fig. \ref{fig:pbbvisual} shows that training DNNs with both label noise and feature noise yields the tightest bound compared with training DNNs on the clean dataset and with label noise. 
This verifies our theoretical analysis that label noise loosens the generalization bound and feature noise tightens the generalization bound under label noise. 
Interestingly, although the bound increases as the training proceeds in the case of training with label noise, it can be very tight at the early training stage. This observation coincides with the early robustness characterization of DNNs reported in many studies \cite{rolnick2017deep,li2020gradient}.

\subsection{Feature noise type}\label{ss:fn}

To evaluate the influence of feature noise type on the FN method,  
we evaluate the ACC of DNNs trained with different  feature noise types under $\Delta=0.8$ in Fig. \ref{fig:fns}.

The feature noise types with high randomness (as defined in Remark \ref{rem:random}), \eg Gaussian noise and impulse noise, have better performance as the severity levels grow high. In contrast, the feature noise types with low randomness, \eg Gaussian blur and snow, have worse performance as the severity levels grow high.  
These results verify the qualitative analysis in Proposition \ref{prop:noisetype}, which suggests  
the higher randomness in feature noise for model training, the better model generalization.

\subsection{Label noise type}\label{ss:ln}

\begin{figure}[!hbt]
    \centering    \includegraphics[width=.65\linewidth]{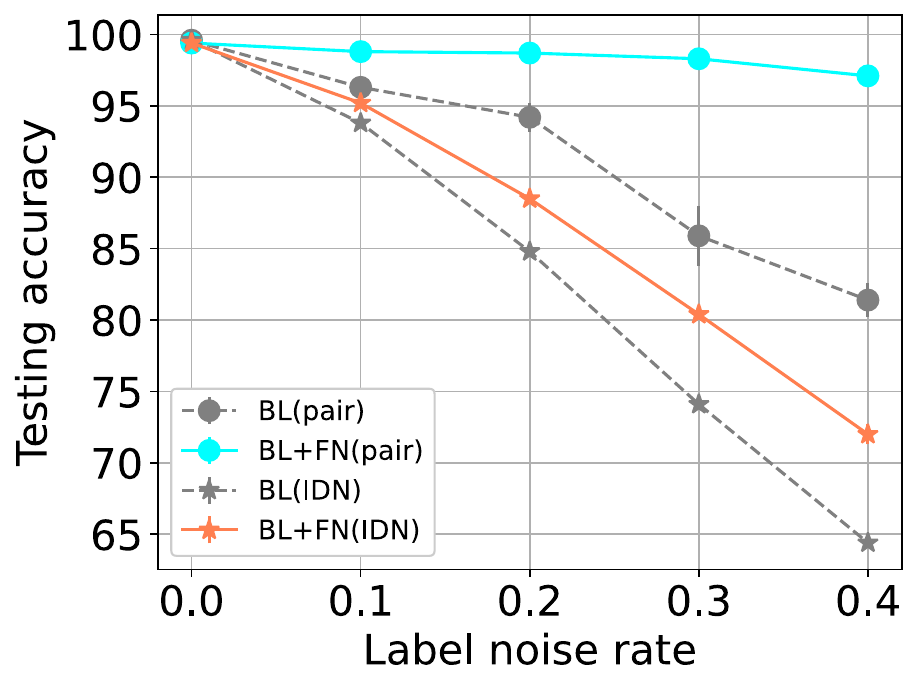}
    \caption{ACC curves of LeNet-5 (BL) trained on MNIST with multiple levels of  pair  noise and instance-dependent noise (IDN).}
    \label{fig:ln}
\end{figure}

In this part, we investigate the influence of different label noise types on the FN method.
To do that, we compare the performance of the FN method combining with the baseline model on MNIST with two label noise types other than symmetric noise, \ie  pair noise and instance-dependent noise, and five label noise rates $\Delta=\{0,0.1,0.2,0.3,0.4\}$.
We display the results in Fig. \ref{fig:ln}. It shows the FN method achieves significantly better generalization with both label noise types. This indicates the FN method is also applicable to pair noise and instance-dependent noise.

\subsection{Quantitative relationship between feature noise and label noise}\label{ss:qu}

\begin{figure}[!hbt]
    \centering
    \includegraphics[width=.95\linewidth]{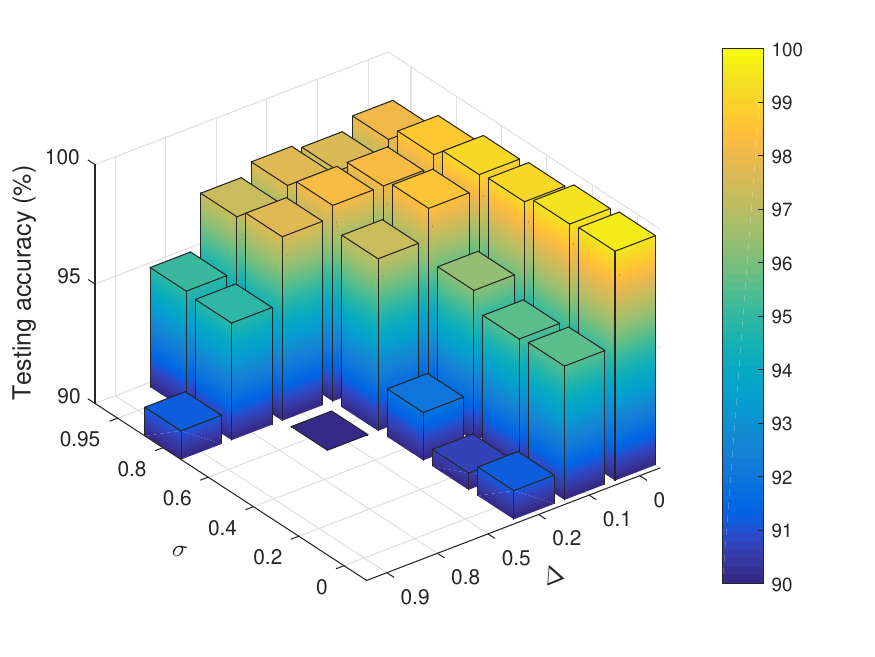}
    \caption{ACC bars of LeNet-5 trained on MNIST with multiple levels of Gaussian noise std ($\sigma$) and symmetric label noise rates ($\Delta$).}
    \label{fig:ablvar}
\end{figure}

We evaluate the relationship between the amount of feature noise and label noise in the training data that achieves the best performance of the model. Specifically, we evaluate the ACC of the baseline model trained under Gaussian feature noise with six levels of standard deviation ($\sigma\in[0,1]$)  and symmetric label noise with six levels of corruption rate ($\Delta\in[0,1)$). The results are shown in Fig. \ref{fig:ablvar}.

According to  Fig. \ref{fig:ablvar}, with the label noise rate ($\Delta$) increasing, Gaussian noise standard deviations also increase to obtain the best testing accuracy.
However, a higher $\sigma$ is not necessarily better under each $\Delta$, because the testing accuracy will reach a peak and then decrease as $\sigma$ increases.
These results are consistent with Proposition \ref{prop:quantify}, which suggests the best performance of the models trained with feature noise is achieved at a specific value of $\sigma^2$, and the value is influenced by $\Delta$.

\section{Conclusion}\label{sec:con}
In this work, we introduce a simple method of adding feature noise into the training data to boost the generalization of DNNs trained with label noise. 
Additionally, we provide theoretical analysis to investigate the impact of training DNNs with label noise and feature noise on generalization. Specifically, we demonstrate that training DNNs with label noise adversely affects generalization by loosening the generalization bound. In contrast, training DNNs with feature noise improves generalization by constraining the generalization bound. To complement the theoretical analysis, we conduct application analysis to identify the appropriate types and levels of feature noise that yield optimal DNN generalization.
To evaluate the effectiveness of the feature noise method, we perform extensive experiments on various real-world datasets. Our experimental results demonstrate that the feature noise method enhances the performance of state-of-the-art methods trained under label noise, leading to significant improvements in classification accuracy and interpretability under severe label noise. Furthermore, we systematically validate our theoretical findings through these experiments.

However, there are several aspects in which our work can be further explored:
(i) The development of a theoretical framework that extends our theorems to encompass other types of label noise, like pair noise and instance-dependent noise.
(ii) A theoretical framework that elucidates the relationship between the feature noise method and other related methods, like dropout and data augmentation.
(iii) Determining a threshold for the randomness of feature noise that guarantees favorable generalization in the presence of label noise, as outlined in Proposition \ref{prop:noisetype}.
To address these open questions and enhance the generalization of DNNs under label noise, we aim to thoroughly investigate these aspects in the future work.

\ifCLASSOPTIONcaptionsoff
  \newpage
\fi

\ifCLASSOPTIONcaptionsoff
  \newpage
\fi

\small{
\bibliographystyle{IEEEtranN}
\bibliography{main}
}
\normalsize{
}
\newpage
\onecolumn
\section{Appendix --- Feature Noise Boosts DNN Generalization under Label Noise}

\begin{table*}[!htb]
\centering
\caption{ACC and standard deviations of all methods on miniImageNet and ADNI under all label noise rates.}
\label{tb:highrgb}
\begin{tabular}{ccccccccc}
\toprule
Dataset            & \multicolumn{4}{c}{mini-ImageNet (binary)}                                                                                    & \multicolumn{4}{c}{ADNI}                                                                                    \\\cmidrule(r){2-5}\cmidrule(r){6-9}
$\Delta$ & 0                         & 0.2                       & 0.5                       & 0.8                       & 0                         & 0.2                       & 0.5                       & 0.8                       \\ \midrule
BL                 & 85.7$\pm$2.9 & 79.2$\pm$2.2 & 67.8$\pm$2.5 & 57.1$\pm$2.2 & \textbf{84.9$\pm$0.7} & 75.7$\pm$0.3 & 70.5$\pm$0.7 & 56.3$\pm$0.8 \\
ES                 & 84.7$\pm$2.1 & 79.5$\pm$2.8 & 60.0$\pm$2.9 & 56.0$\pm$2.9 & 83.9$\pm$0.5 & 77.6$\pm$1.6 & 75.7$\pm$1.5 & 71.5$\pm$5.1 \\
ELR                & 79.1$\pm$1.7 & 69.9$\pm$1.8 & 61.7$\pm$1.8 & 47.7$\pm$1.9 & 75.3$\pm$3.8 & 77.6$\pm$3.5 & 74.9$\pm$1.4 & 71.1$\pm$0.3 \\\midrule
BL+FN\tnote{*}     & 91.9$\pm$1.0 & 82.9$\pm$3.6 & \textbf{73.5$\pm$3.1} & \textbf{59.3$\pm$1.9} & 81.6$\pm$0.5 & \textbf{82.2$\pm$0.5} & 78.9$\pm$0.5 & 71.5$\pm$1.2 \\
ES+FN              & 91.7$\pm$0.2 & 79.4$\pm$1.7 & 68.7$\pm$3.3 & 57.7$\pm$2.6 & 79.9$\pm$1.2 & 79.1$\pm$1.0 & 76.2$\pm$1.4 & 72.2$\pm$3.4 \\
ELR+FN             & \textbf{93.2$\pm$1.6} & \textbf{83.1$\pm$2.2} & 61.4$\pm$1.8 & 53.3$\pm$3.6 & 79.9$\pm$2.9 & 80.3$\pm$1.0 & \textbf{81.0$\pm$0.5} & \textbf{79.5$\pm$1.0} \\\bottomrule
\end{tabular}
\end{table*}

\begin{figure*}[!tb]
	\centering

    %fist row
	\subfigure{
		\begin{minipage}[t]{0.081\linewidth}
			\centering
			\includegraphics[width=1\linewidth]{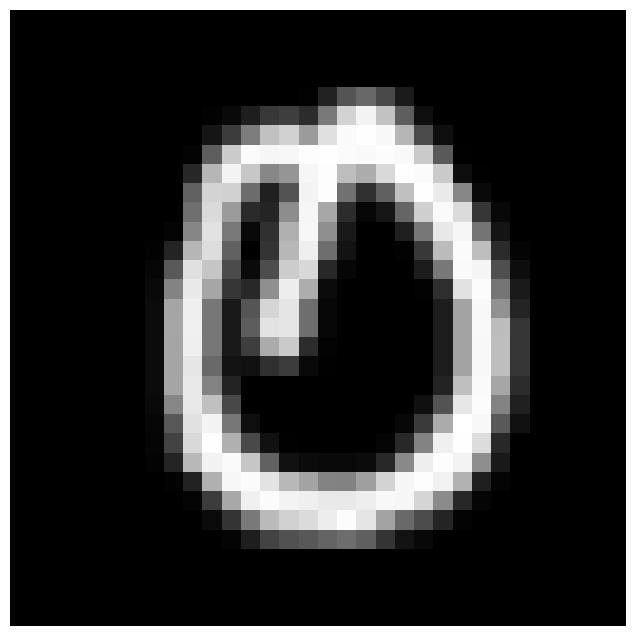}
		\end{minipage}
	}
	\subfigure{
		\begin{minipage}[t]{0.081\linewidth}
			\centering
			\includegraphics[width=1\linewidth]{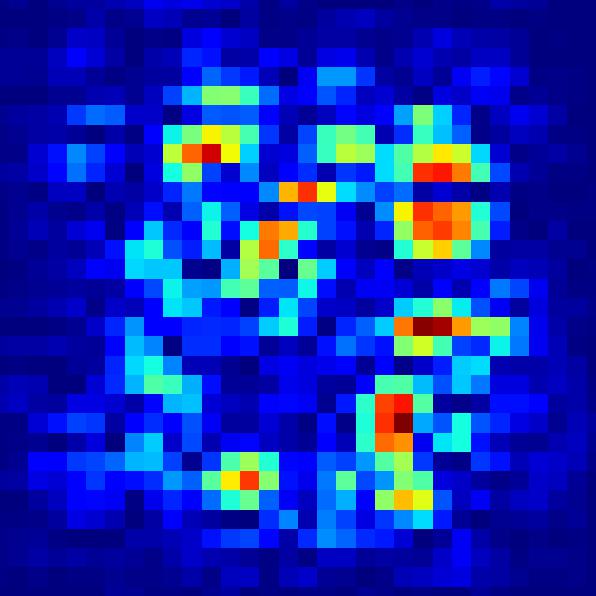}
		\end{minipage}
	}
	\subfigure{
		\begin{minipage}[t]{0.081\linewidth}
			\centering
			\includegraphics[width=1\linewidth]{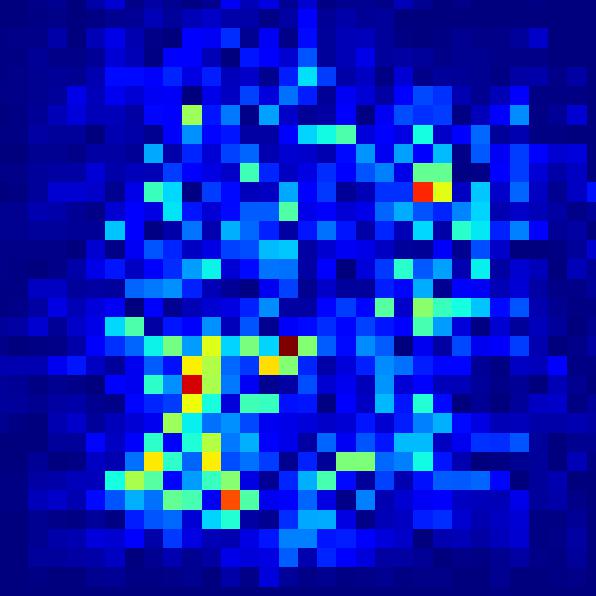}
		\end{minipage}
	}
	\subfigure{
		\begin{minipage}[t]{0.081\linewidth}
			\centering
			\includegraphics[width=1\linewidth]{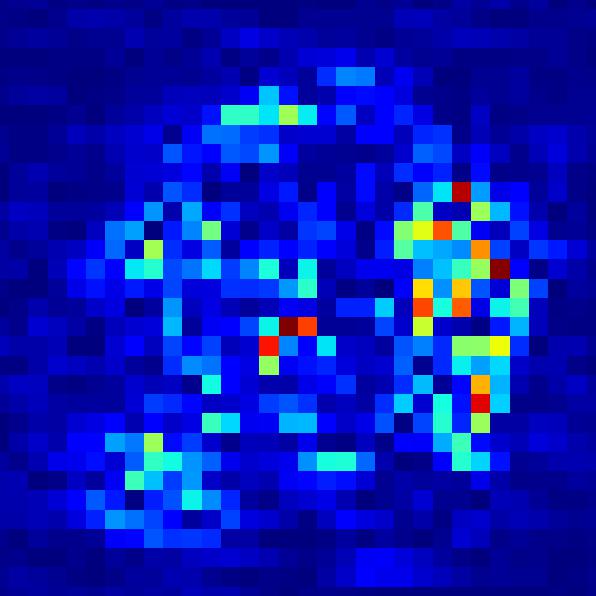}
		\end{minipage}
	}
	\subfigure{
		\begin{minipage}[t]{0.081\linewidth}
			\centering
			\includegraphics[width=1\linewidth]{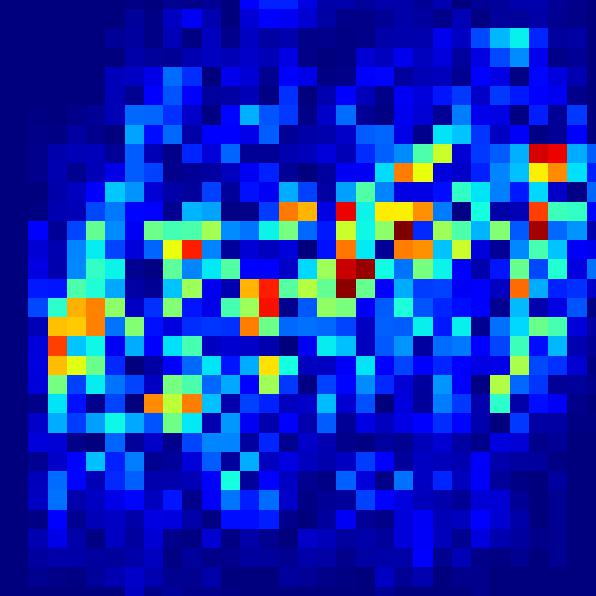}
		\end{minipage}
	}
	\subfigure{
		\begin{minipage}[t]{0.081\linewidth}
			\centering
			\includegraphics[width=1\linewidth]{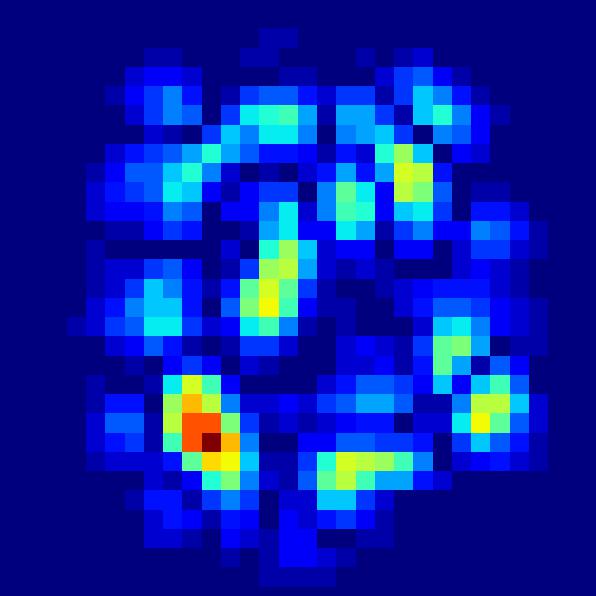}
		\end{minipage}
	}
	\subfigure{
		\begin{minipage}[t]{0.081\linewidth}
			\centering
			\includegraphics[width=1\linewidth]{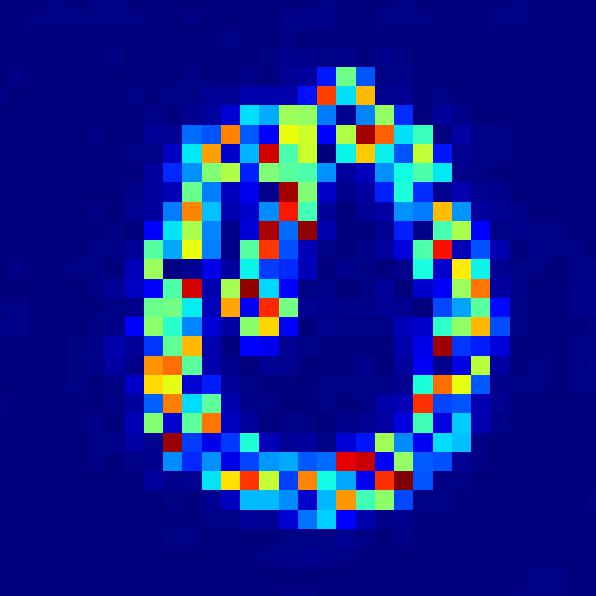}
		\end{minipage}
	}
	\subfigure{
		\begin{minipage}[t]{0.081\linewidth}
			\centering
			\includegraphics[width=1\linewidth]{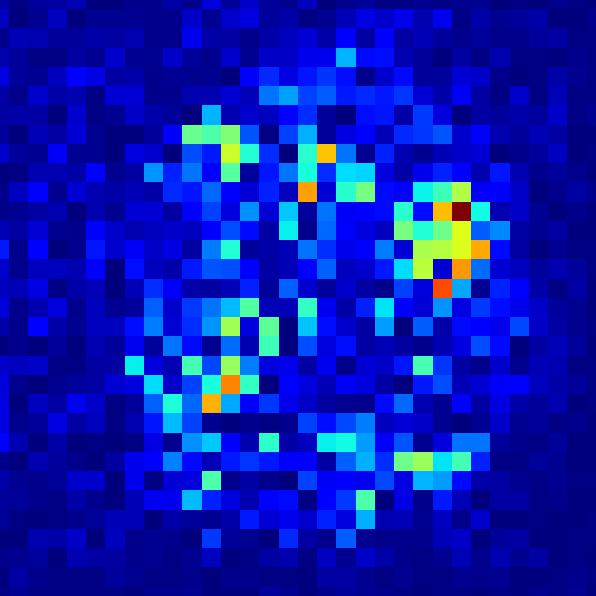}
		\end{minipage}
	}
	\subfigure{
		\begin{minipage}[t]{0.081\linewidth}
			\centering
			\includegraphics[width=1\linewidth]{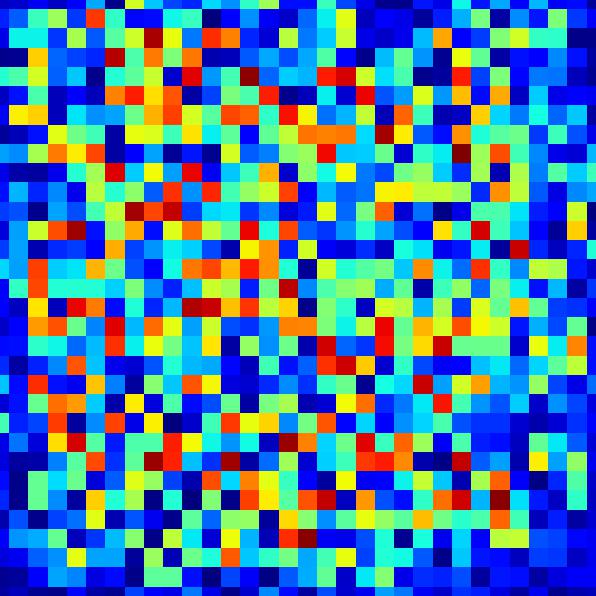}
		\end{minipage}
	}
	\subfigure{
		\begin{minipage}[t]{0.081\linewidth}
			\centering
			\includegraphics[width=1\linewidth]{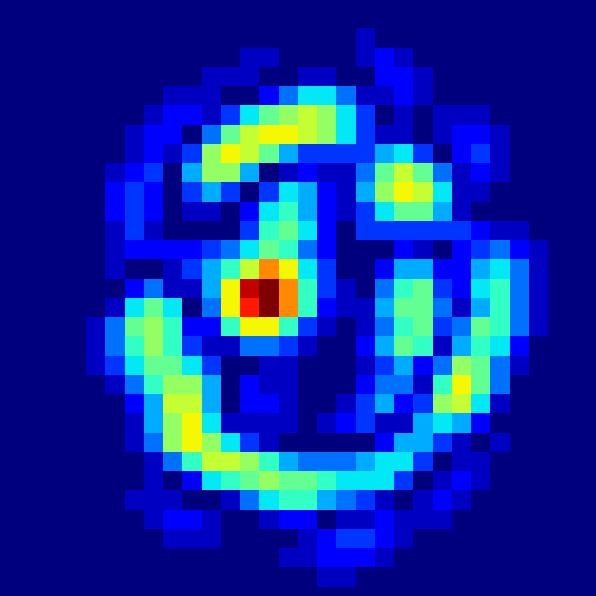}
		\end{minipage}
	}

	\setcounter{subfigure}{0}
	
	    %2 row
	\subfigure{
		\begin{minipage}[t]{0.081\linewidth}
			\centering
			\includegraphics[width=1\linewidth]{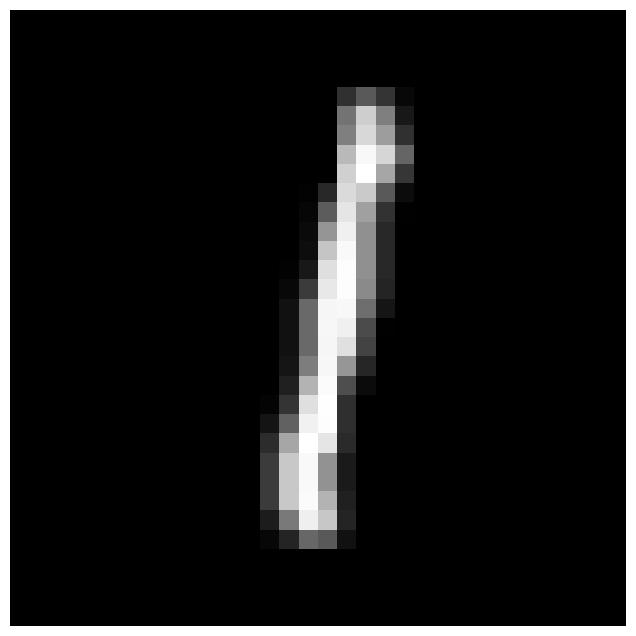}
		\end{minipage}
	}
	\subfigure{
		\begin{minipage}[t]{0.081\linewidth}
			\centering
			\includegraphics[width=1\linewidth]{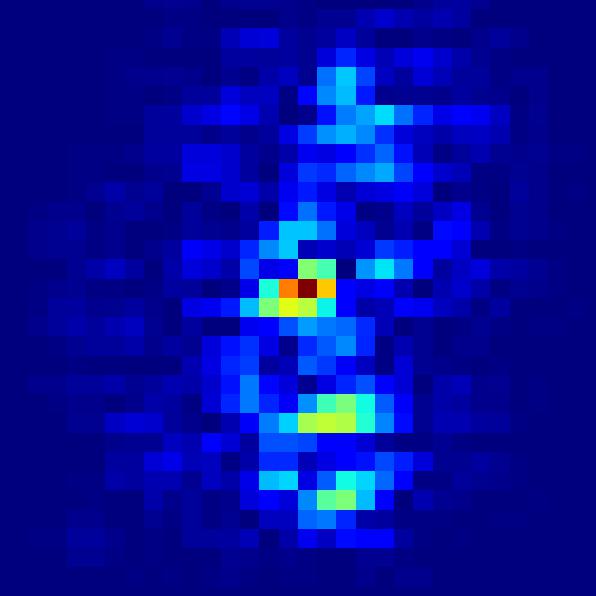}
		\end{minipage}
	}
	\subfigure{
		\begin{minipage}[t]{0.081\linewidth}
			\centering
			\includegraphics[width=1\linewidth]{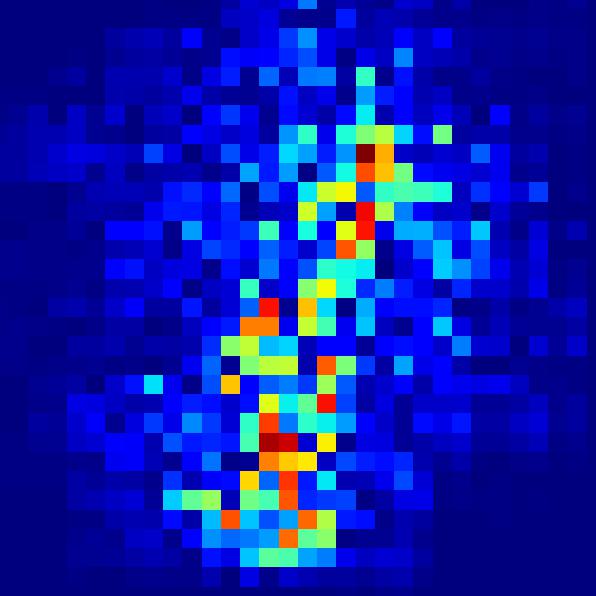}
		\end{minipage}
	}
	\subfigure{
		\begin{minipage}[t]{0.081\linewidth}
			\centering
			\includegraphics[width=1\linewidth]{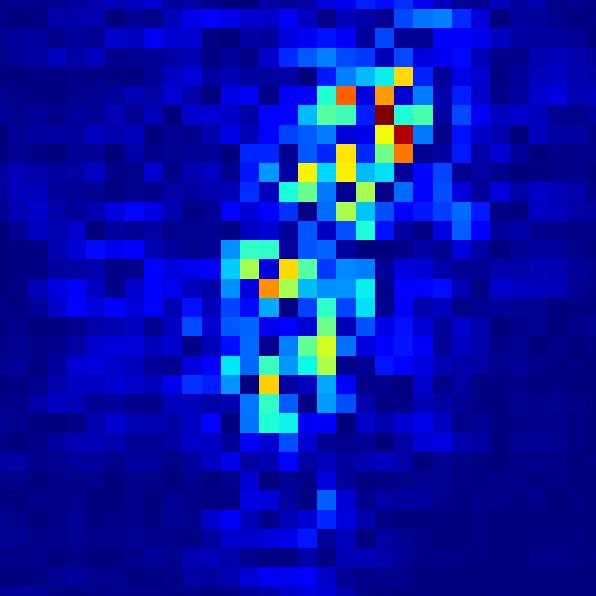}
		\end{minipage}
	}
	\subfigure{
		\begin{minipage}[t]{0.081\linewidth}
			\centering
			\includegraphics[width=1\linewidth]{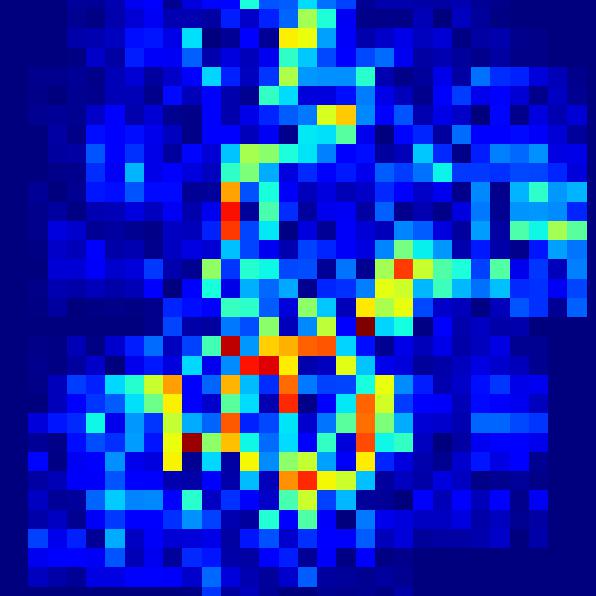}
		\end{minipage}
	}
	\subfigure{
		\begin{minipage}[t]{0.081\linewidth}
			\centering
			\includegraphics[width=1\linewidth]{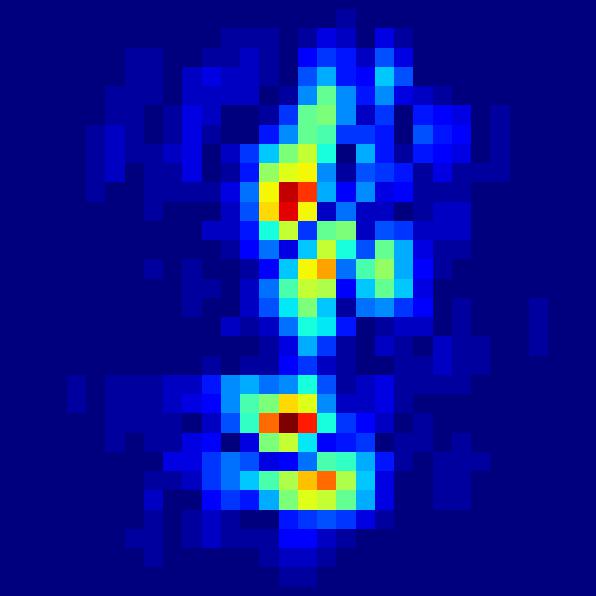}
		\end{minipage}
	}
	\subfigure{
		\begin{minipage}[t]{0.081\linewidth}
			\centering
			\includegraphics[width=1\linewidth]{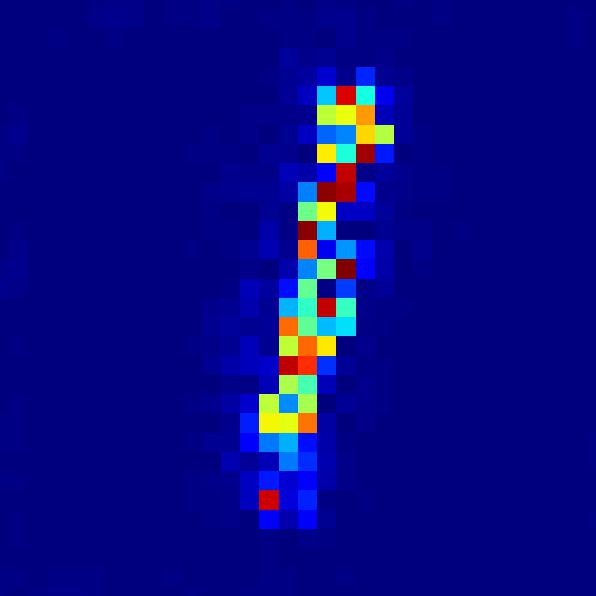}
		\end{minipage}
	}
	\subfigure{
		\begin{minipage}[t]{0.081\linewidth}
			\centering
			\includegraphics[width=1\linewidth]{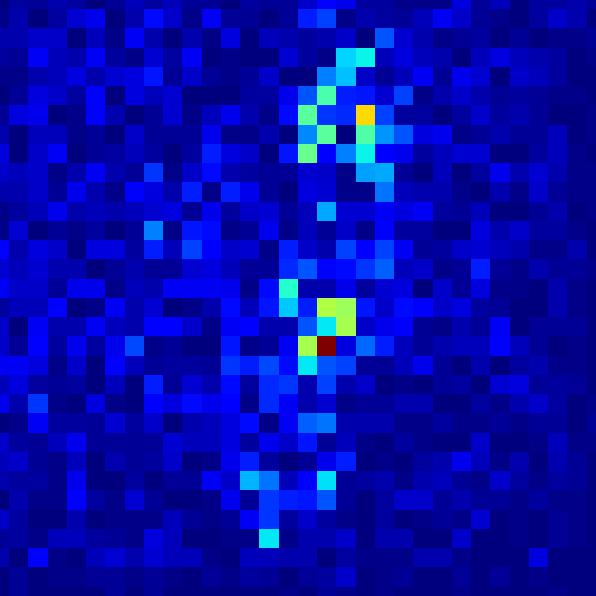}
		\end{minipage}
	}
	\subfigure{
		\begin{minipage}[t]{0.081\linewidth}
			\centering
			\includegraphics[width=1\linewidth]{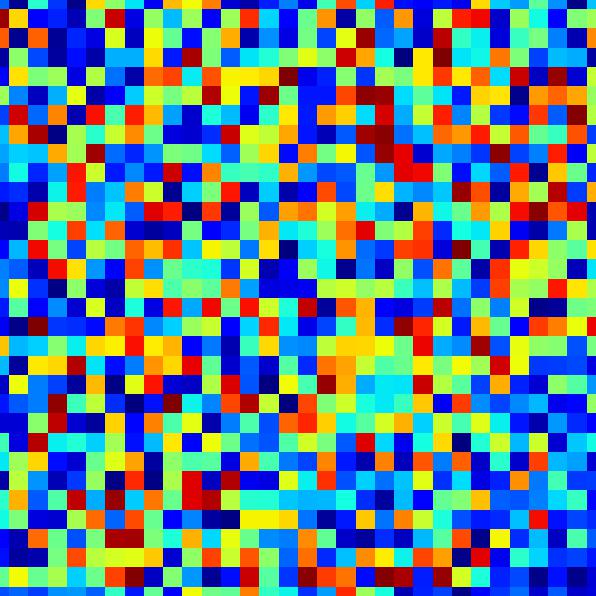}
		\end{minipage}
	}
	\subfigure{
		\begin{minipage}[t]{0.081\linewidth}
			\centering
			\includegraphics[width=1\linewidth]{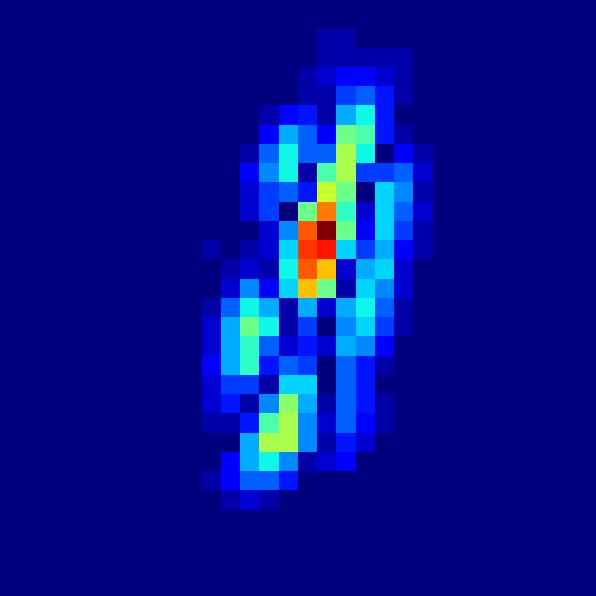}
		\end{minipage}
	}

	\setcounter{subfigure}{0}
	
	    %3 row
	\subfigure{
		\begin{minipage}[t]{0.081\linewidth}
			\centering
			\includegraphics[width=1\linewidth]{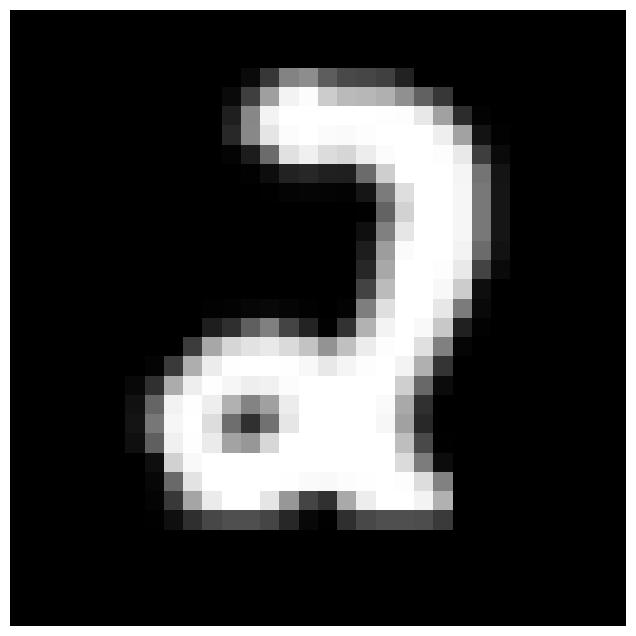}
		\end{minipage}
	}
	\subfigure{
		\begin{minipage}[t]{0.081\linewidth}
			\centering
			\includegraphics[width=1\linewidth]{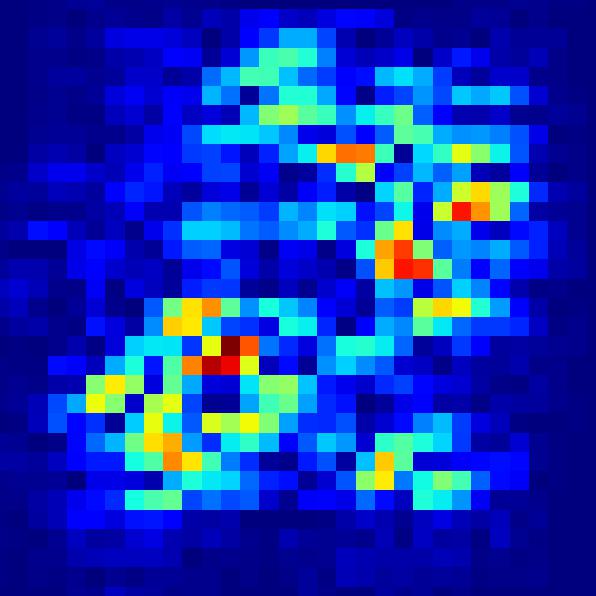}
		\end{minipage}
	}
	\subfigure{
		\begin{minipage}[t]{0.081\linewidth}
			\centering
			\includegraphics[width=1\linewidth]{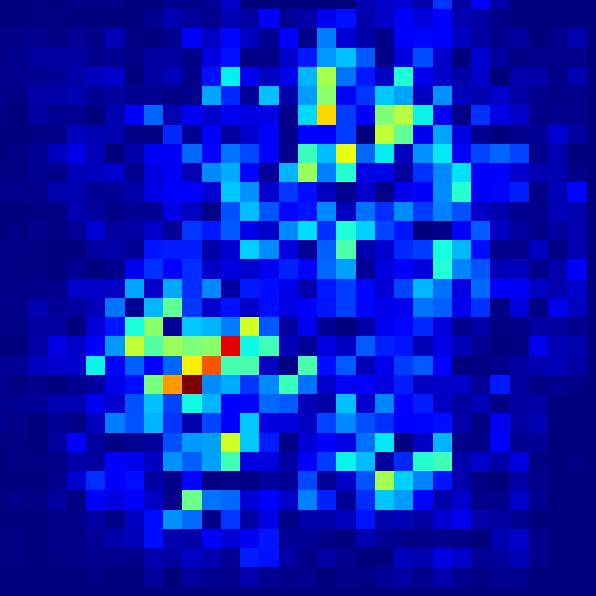}
		\end{minipage}
	}
	\subfigure{
		\begin{minipage}[t]{0.081\linewidth}
			\centering
			\includegraphics[width=1\linewidth]{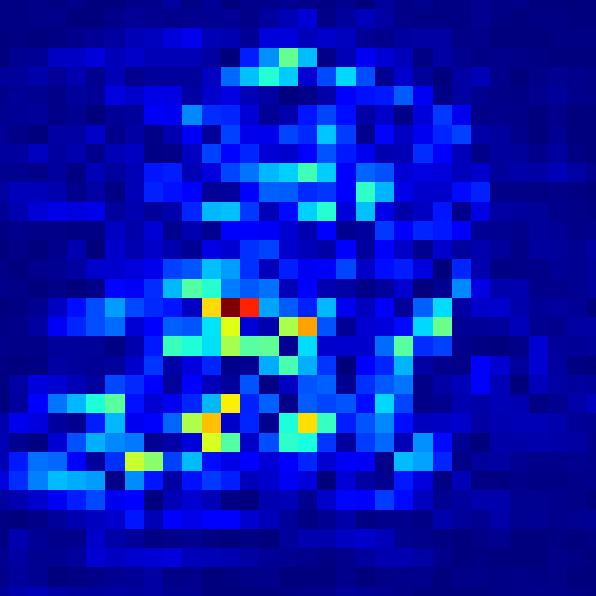}
		\end{minipage}
	}
	\subfigure{
		\begin{minipage}[t]{0.081\linewidth}
			\centering
			\includegraphics[width=1\linewidth]{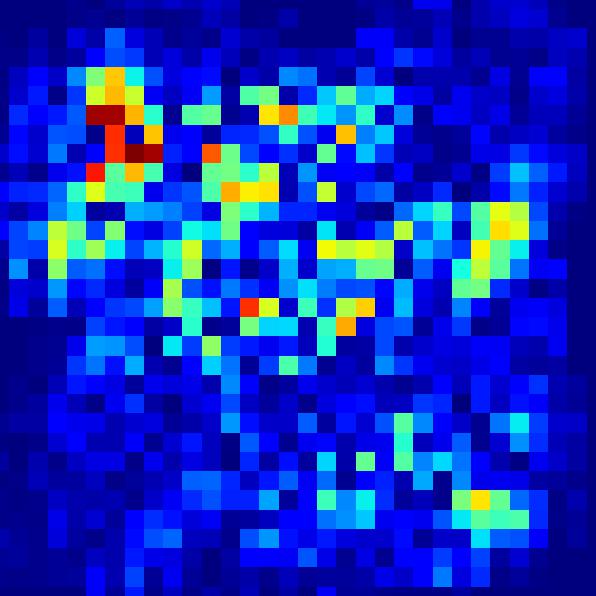}
		\end{minipage}
	}
	\subfigure{
		\begin{minipage}[t]{0.081\linewidth}
			\centering
			\includegraphics[width=1\linewidth]{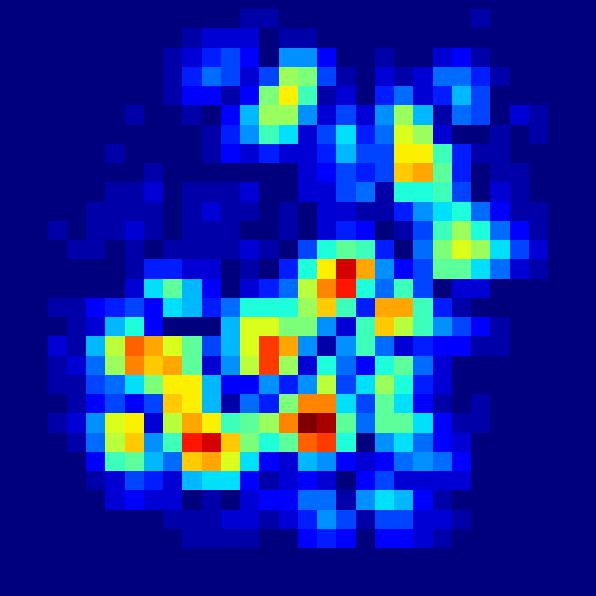}
		\end{minipage}
	}
	\subfigure{
		\begin{minipage}[t]{0.081\linewidth}
			\centering
			\includegraphics[width=1\linewidth]{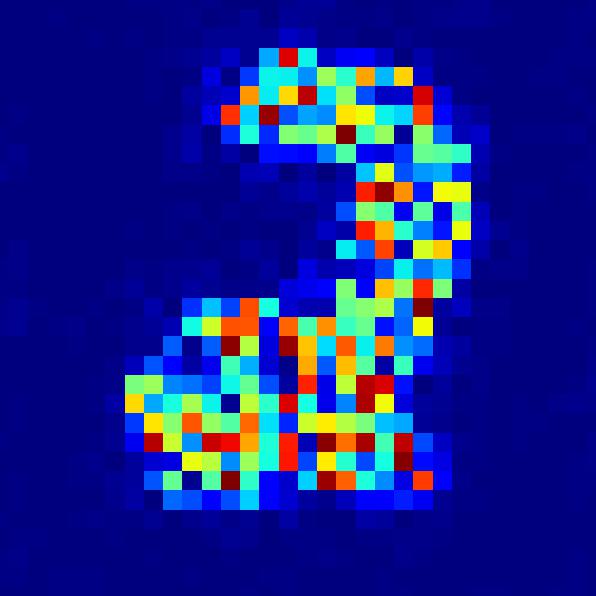}
		\end{minipage}
	}
	\subfigure{
		\begin{minipage}[t]{0.081\linewidth}
			\centering
			\includegraphics[width=1\linewidth]{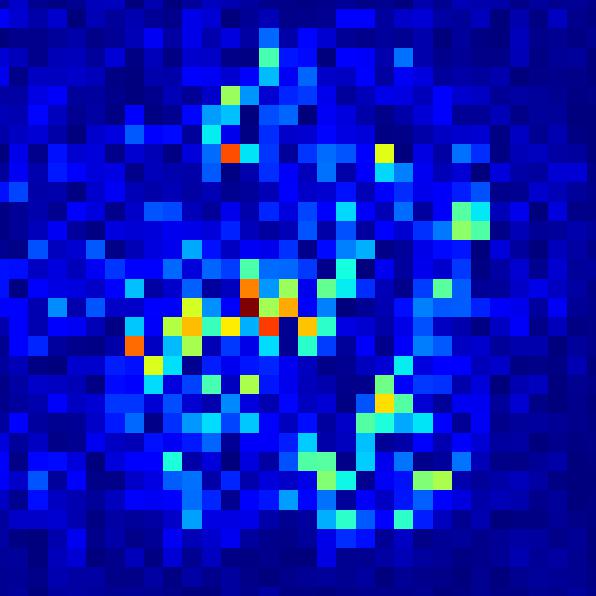}
		\end{minipage}
	}
	\subfigure{
		\begin{minipage}[t]{0.081\linewidth}
			\centering
			\includegraphics[width=1\linewidth]{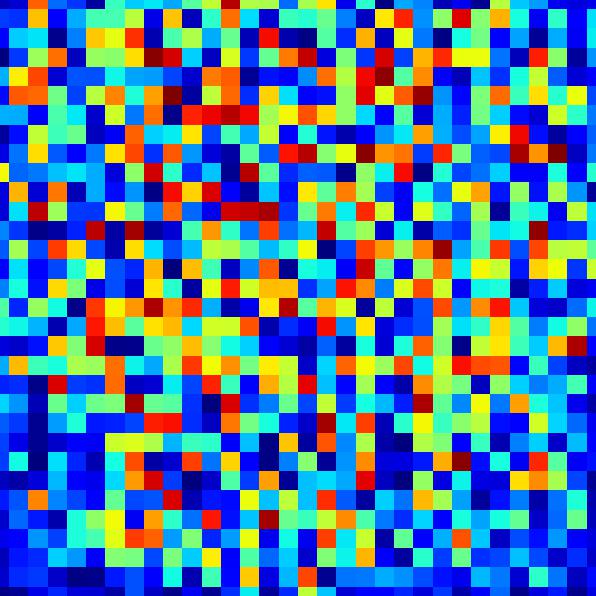}
		\end{minipage}
	}
	\subfigure{
		\begin{minipage}[t]{0.081\linewidth}
			\centering
			\includegraphics[width=1\linewidth]{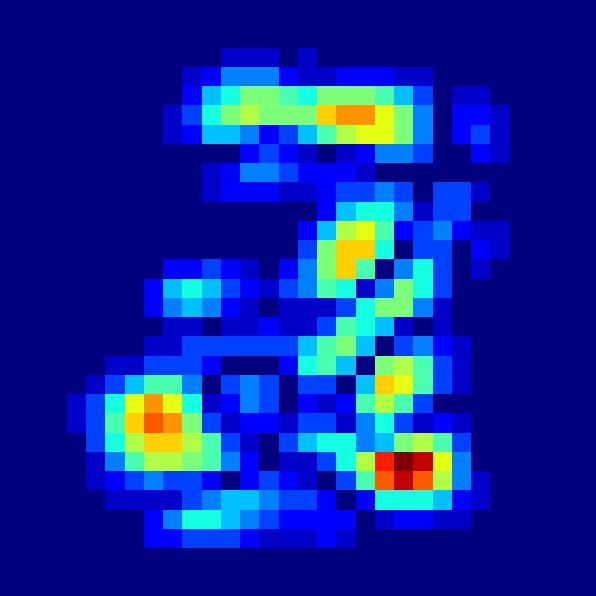}
		\end{minipage}
	}

	\setcounter{subfigure}{0}
	
	    %4row
	\subfigure{
		\begin{minipage}[t]{0.081\linewidth}
			\centering
			\includegraphics[width=1\linewidth]{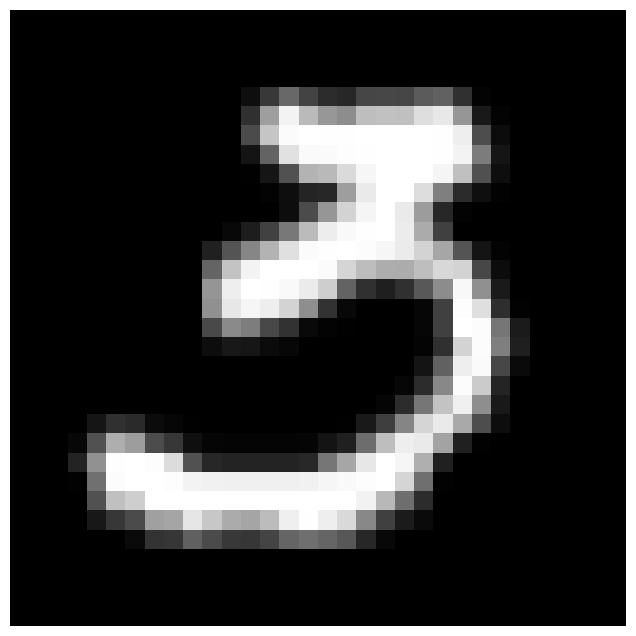}
		\end{minipage}
	}
	\subfigure{
		\begin{minipage}[t]{0.081\linewidth}
			\centering
			\includegraphics[width=1\linewidth]{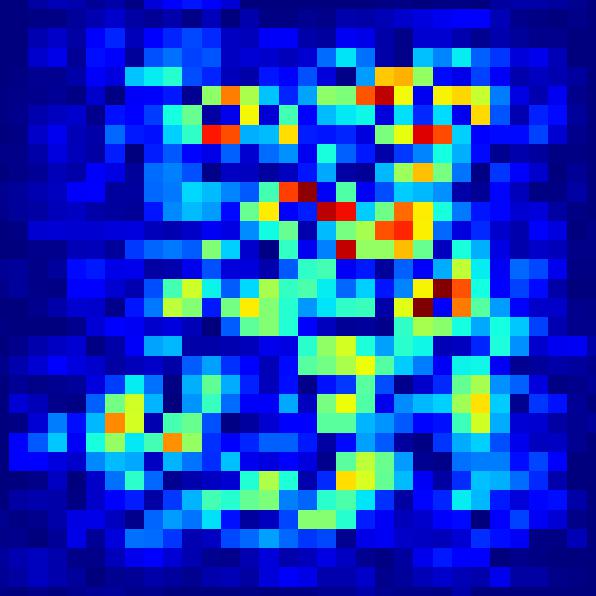}
		\end{minipage}
	}
	\subfigure{
		\begin{minipage}[t]{0.081\linewidth}
			\centering
			\includegraphics[width=1\linewidth]{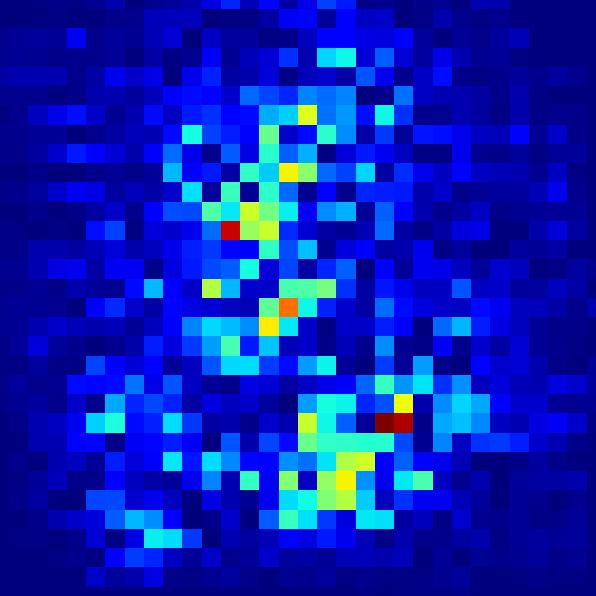}
		\end{minipage}
	}
	\subfigure{
		\begin{minipage}[t]{0.081\linewidth}
			\centering
			\includegraphics[width=1\linewidth]{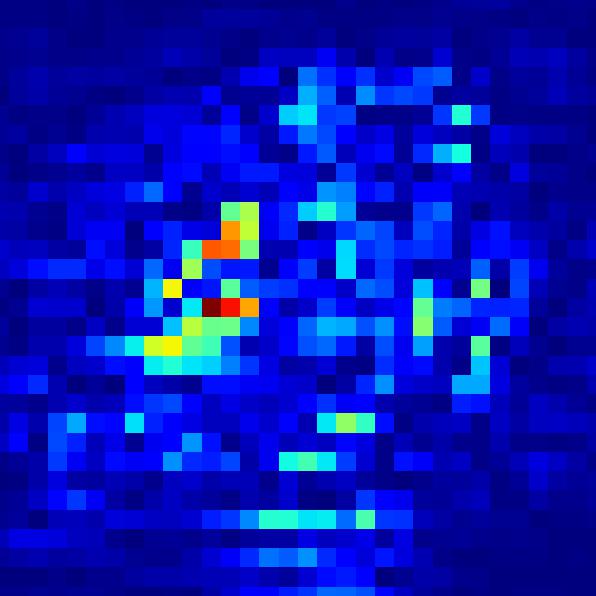}
		\end{minipage}
	}
	\subfigure{
		\begin{minipage}[t]{0.081\linewidth}
			\centering
			\includegraphics[width=1\linewidth]{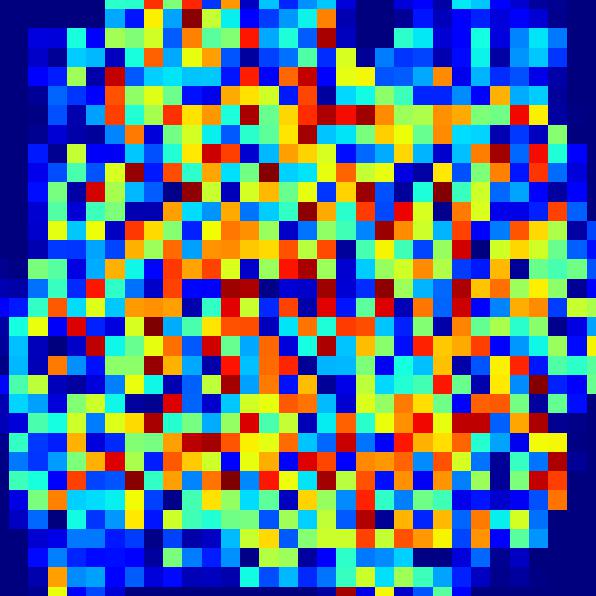}
		\end{minipage}
	}
	\subfigure{
		\begin{minipage}[t]{0.081\linewidth}
			\centering
			\includegraphics[width=1\linewidth]{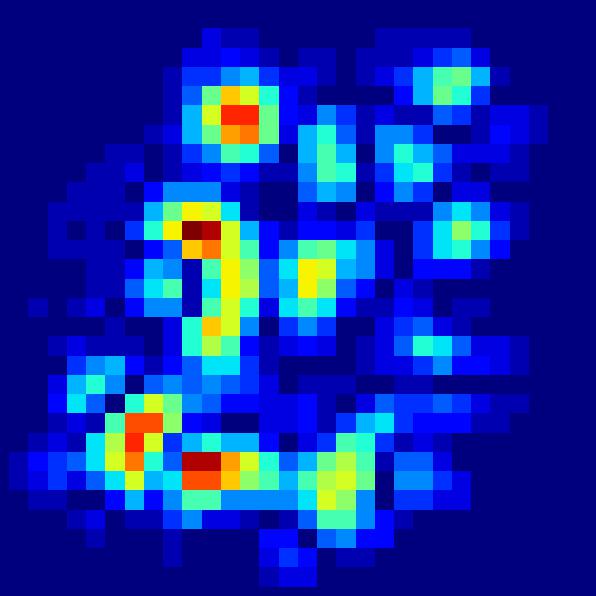}
		\end{minipage}
	}
	\subfigure{
		\begin{minipage}[t]{0.081\linewidth}
			\centering
			\includegraphics[width=1\linewidth]{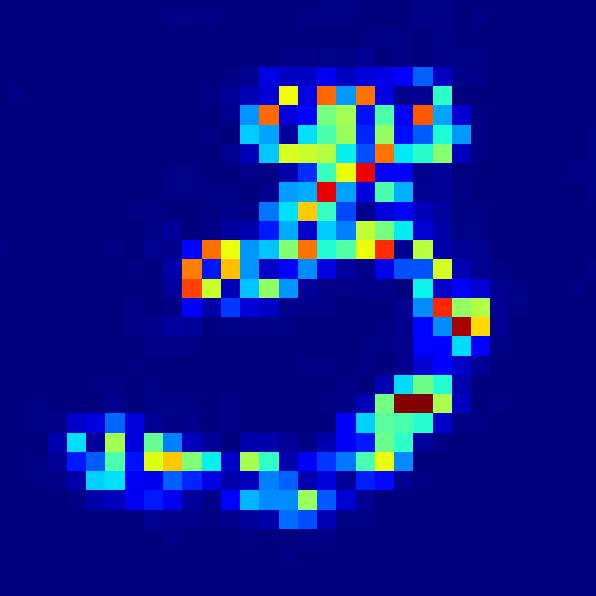}
		\end{minipage}
	}
	\subfigure{
		\begin{minipage}[t]{0.081\linewidth}
			\centering
			\includegraphics[width=1\linewidth]{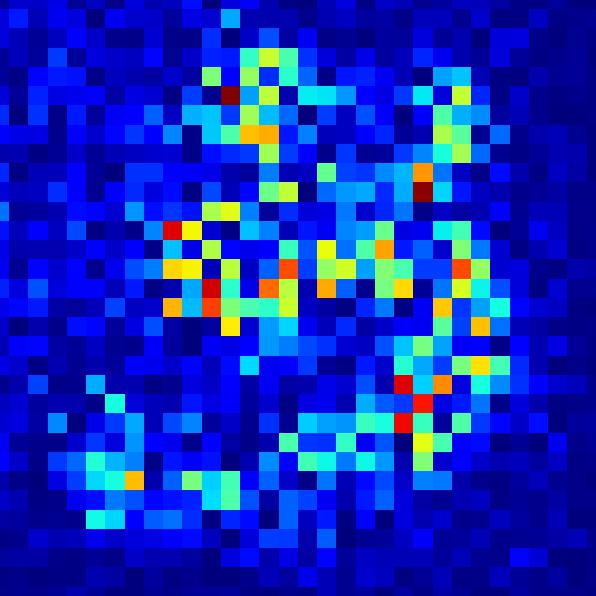}
		\end{minipage}
	}
	\subfigure{
		\begin{minipage}[t]{0.081\linewidth}
			\centering
			\includegraphics[width=1\linewidth]{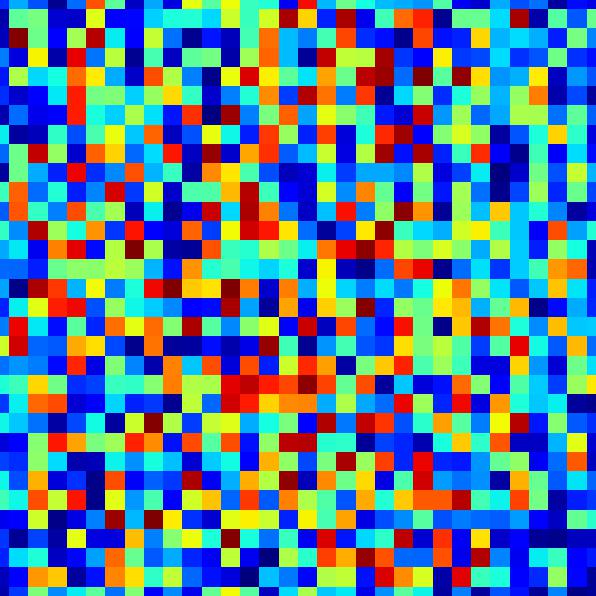}
		\end{minipage}
	}
	\subfigure{
		\begin{minipage}[t]{0.081\linewidth}
			\centering
			\includegraphics[width=1\linewidth]{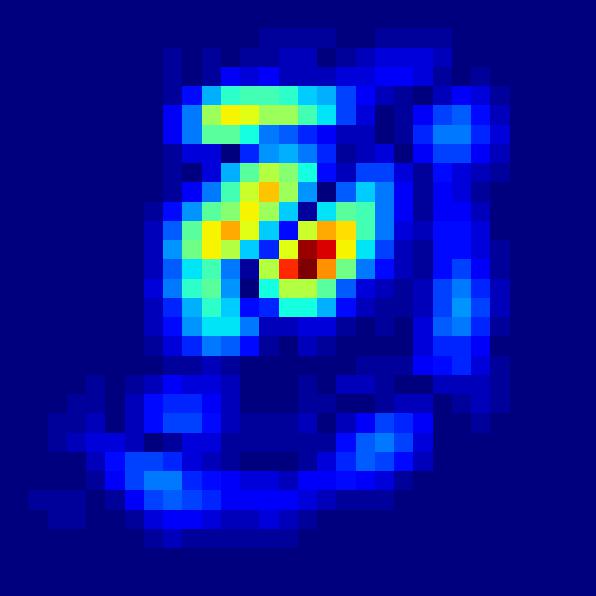}
		\end{minipage}
	}

	\setcounter{subfigure}{0}
	
	    %5 row
	\subfigure{
		\begin{minipage}[t]{0.081\linewidth}
			\centering
			\includegraphics[width=1\linewidth]{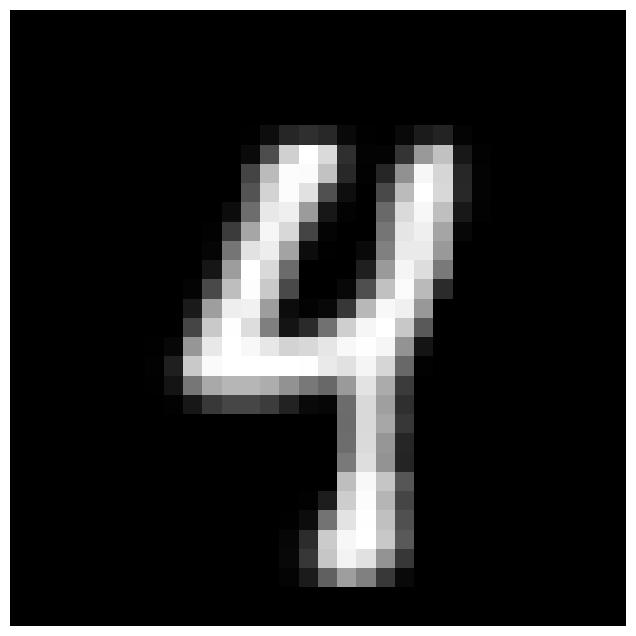}
		\end{minipage}
	}
	\subfigure{
		\begin{minipage}[t]{0.081\linewidth}
			\centering
			\includegraphics[width=1\linewidth]{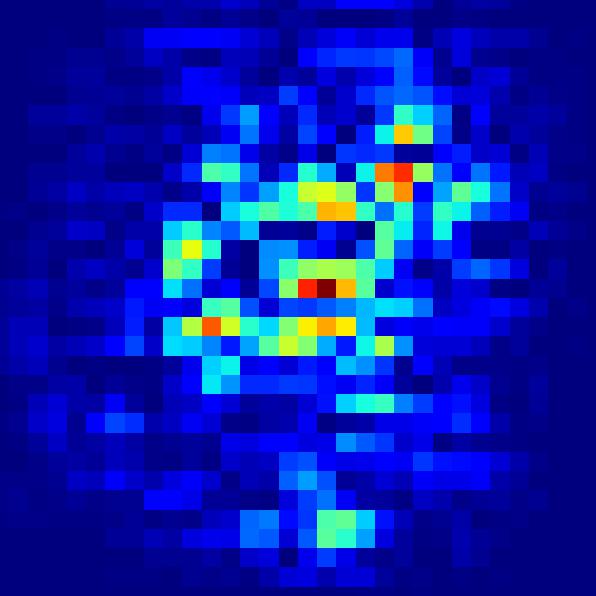}
		\end{minipage}
	}
	\subfigure{
		\begin{minipage}[t]{0.081\linewidth}
			\centering
			\includegraphics[width=1\linewidth]{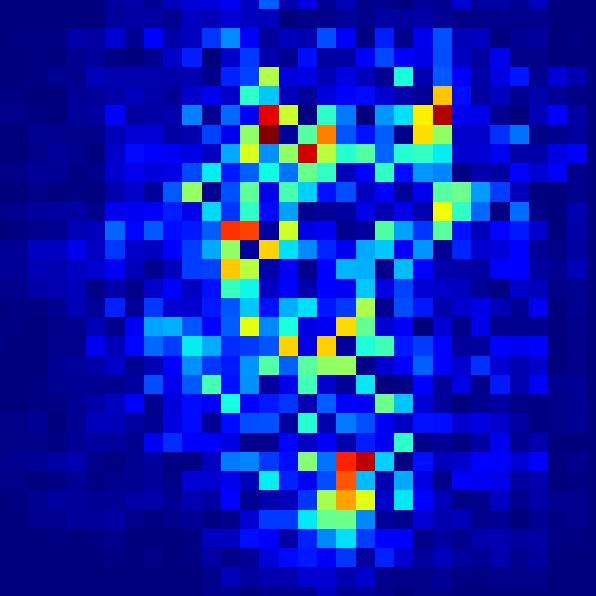}
		\end{minipage}
	}
	\subfigure{
		\begin{minipage}[t]{0.081\linewidth}
			\centering
			\includegraphics[width=1\linewidth]{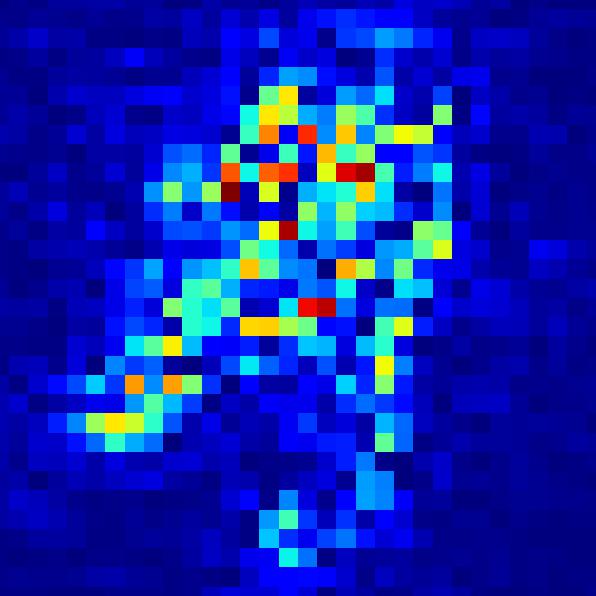}
		\end{minipage}
	}
	\subfigure{
		\begin{minipage}[t]{0.081\linewidth}
			\centering
			\includegraphics[width=1\linewidth]{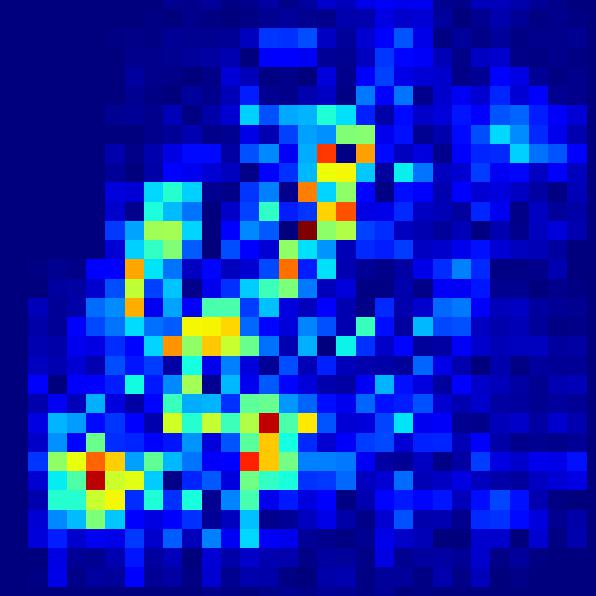}
		\end{minipage}
	}
	\subfigure{
		\begin{minipage}[t]{0.081\linewidth}
			\centering
			\includegraphics[width=1\linewidth]{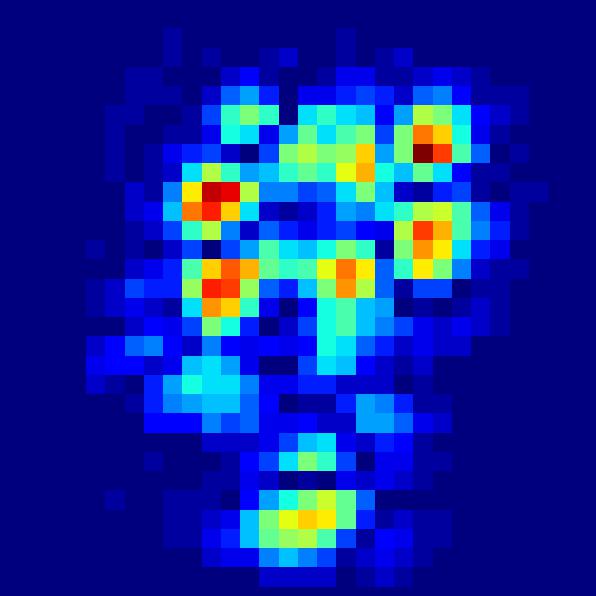}
		\end{minipage}
	}
	\subfigure{
		\begin{minipage}[t]{0.081\linewidth}
			\centering
			\includegraphics[width=1\linewidth]{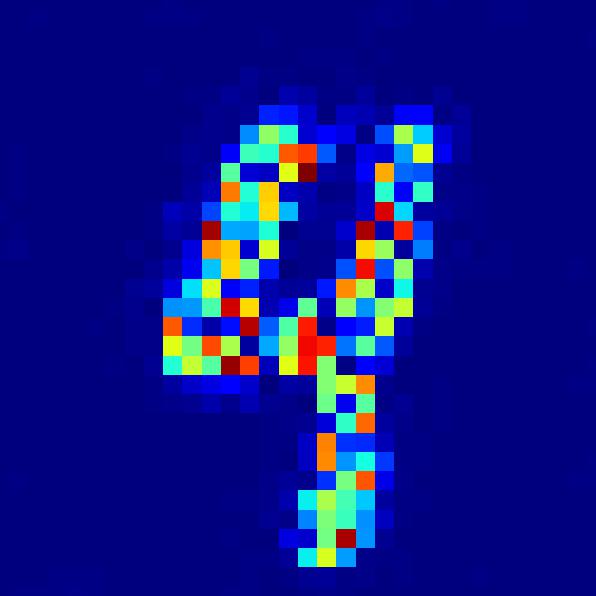}
		\end{minipage}
	}
	\subfigure{
		\begin{minipage}[t]{0.081\linewidth}
			\centering
			\includegraphics[width=1\linewidth]{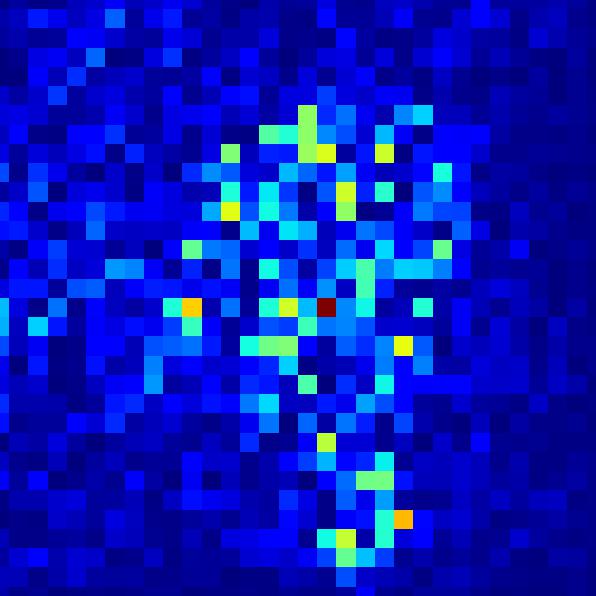}
		\end{minipage}
	}
	\subfigure{
		\begin{minipage}[t]{0.081\linewidth}
			\centering
			\includegraphics[width=1\linewidth]{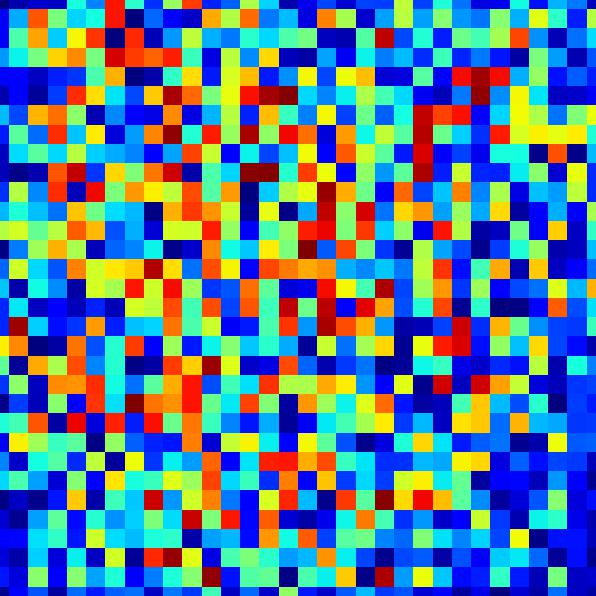}
		\end{minipage}
	}
	\subfigure{
		\begin{minipage}[t]{0.081\linewidth}
			\centering
			\includegraphics[width=1\linewidth]{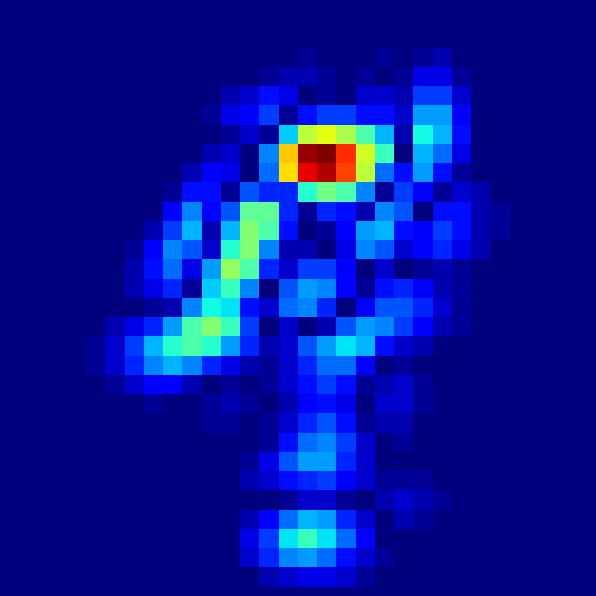}
		\end{minipage}
	}

	\setcounter{subfigure}{0}
	
	    %6 row
	\subfigure{
		\begin{minipage}[t]{0.081\linewidth}
			\centering
			\includegraphics[width=1\linewidth]{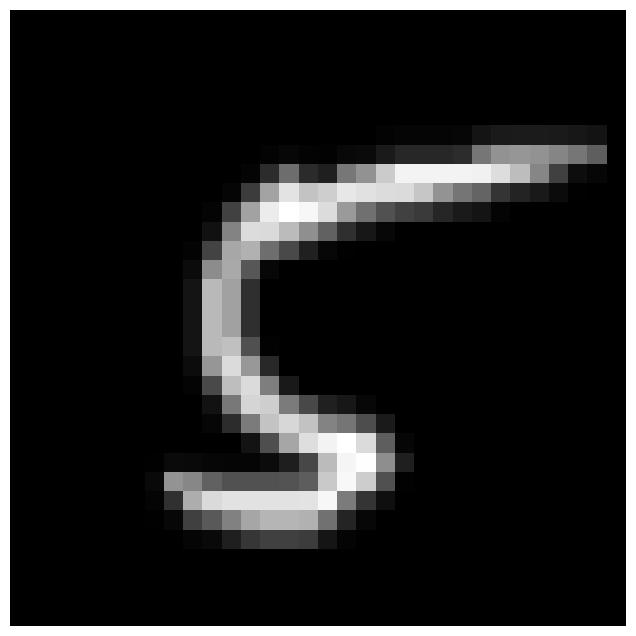}
		\end{minipage}
	}
	\subfigure{
		\begin{minipage}[t]{0.081\linewidth}
			\centering
			\includegraphics[width=1\linewidth]{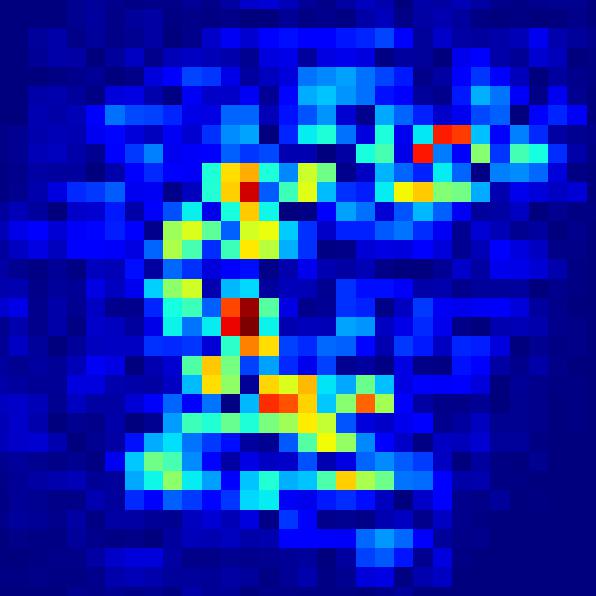}
		\end{minipage}
	}
	\subfigure{
		\begin{minipage}[t]{0.081\linewidth}
			\centering
			\includegraphics[width=1\linewidth]{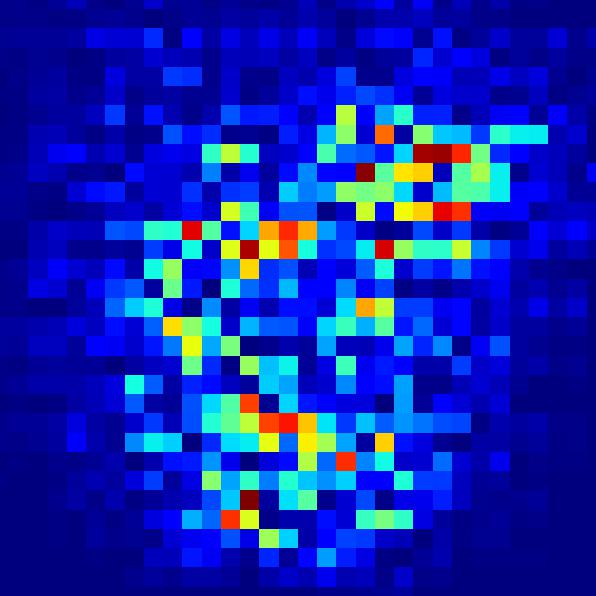}
		\end{minipage}
	}
	\subfigure{
		\begin{minipage}[t]{0.081\linewidth}
			\centering
			\includegraphics[width=1\linewidth]{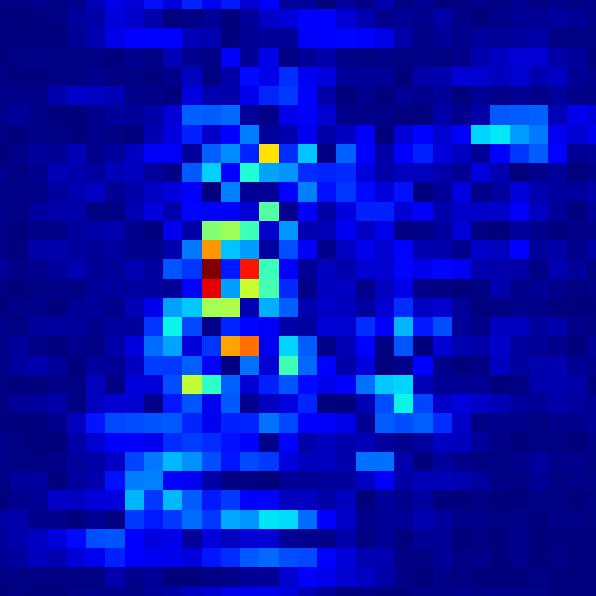}
		\end{minipage}
	}
	\subfigure{
		\begin{minipage}[t]{0.081\linewidth}
			\centering
			\includegraphics[width=1\linewidth]{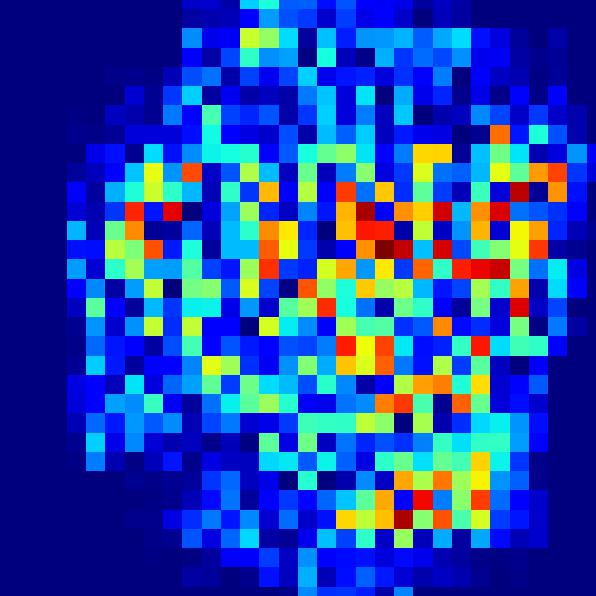}
		\end{minipage}
	}
	\subfigure{
		\begin{minipage}[t]{0.081\linewidth}
			\centering
			\includegraphics[width=1\linewidth]{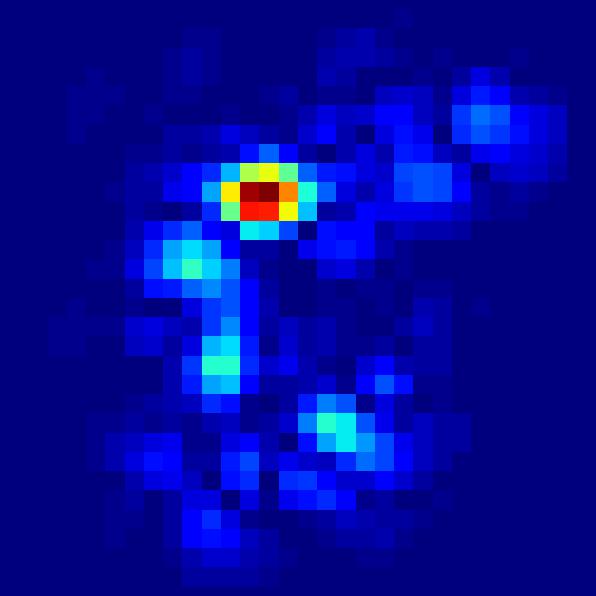}
		\end{minipage}
	}
	\subfigure{
		\begin{minipage}[t]{0.081\linewidth}
			\centering
			\includegraphics[width=1\linewidth]{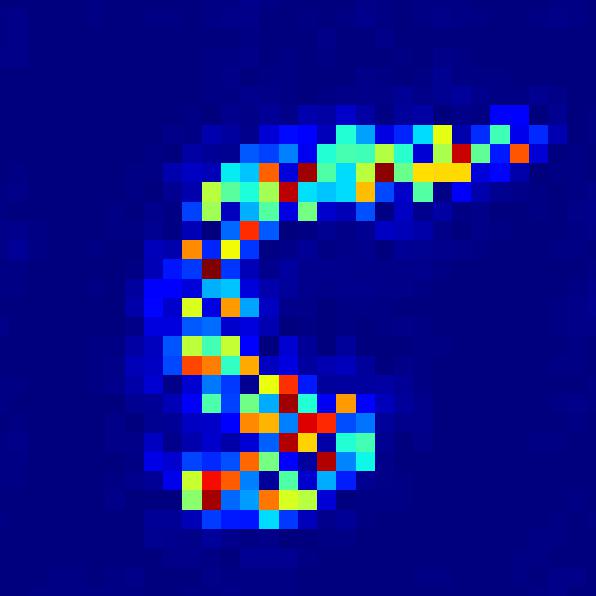}
		\end{minipage}
	}
	\subfigure{
		\begin{minipage}[t]{0.081\linewidth}
			\centering
			\includegraphics[width=1\linewidth]{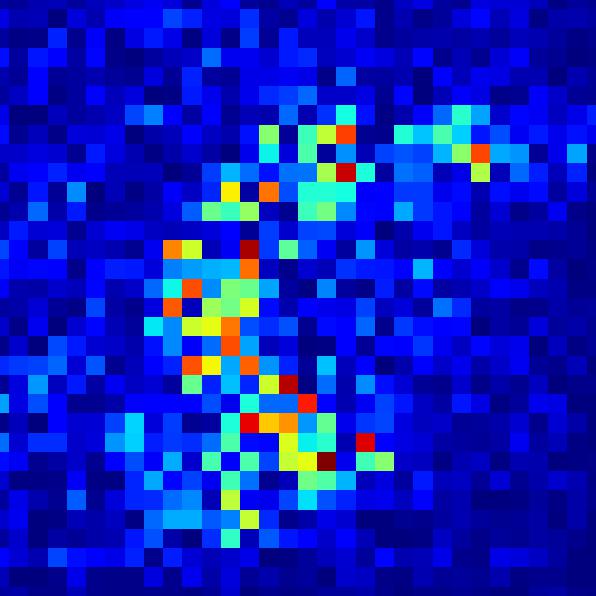}
		\end{minipage}
	}
	\subfigure{
		\begin{minipage}[t]{0.081\linewidth}
			\centering
			\includegraphics[width=1\linewidth]{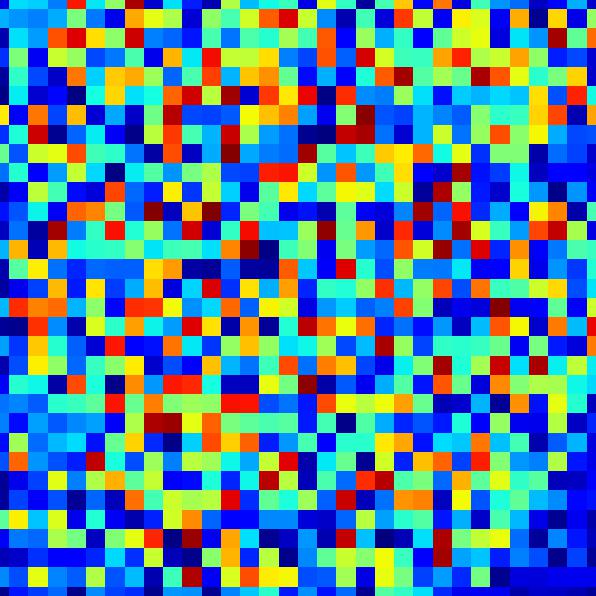}
		\end{minipage}
	}
	\subfigure{
		\begin{minipage}[t]{0.081\linewidth}
			\centering
			\includegraphics[width=1\linewidth]{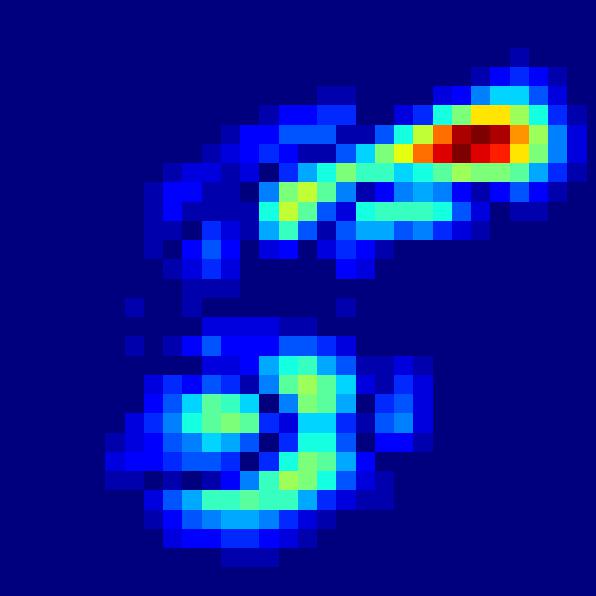}
		\end{minipage}
	}

	\setcounter{subfigure}{0}
	
	    %7 row
	\subfigure{
		\begin{minipage}[t]{0.081\linewidth}
			\centering
			\includegraphics[width=1\linewidth]{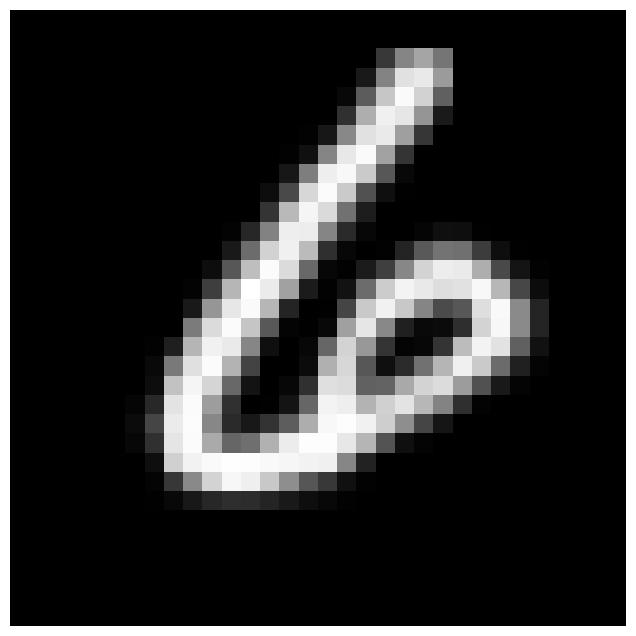}
		\end{minipage}
	}
	\subfigure{
		\begin{minipage}[t]{0.081\linewidth}
			\centering
			\includegraphics[width=1\linewidth]{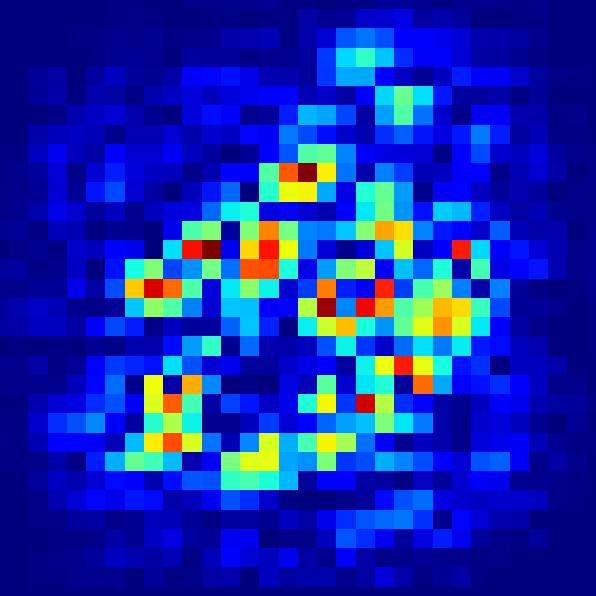}
		\end{minipage}
	}
	\subfigure{
		\begin{minipage}[t]{0.081\linewidth}
			\centering
			\includegraphics[width=1\linewidth]{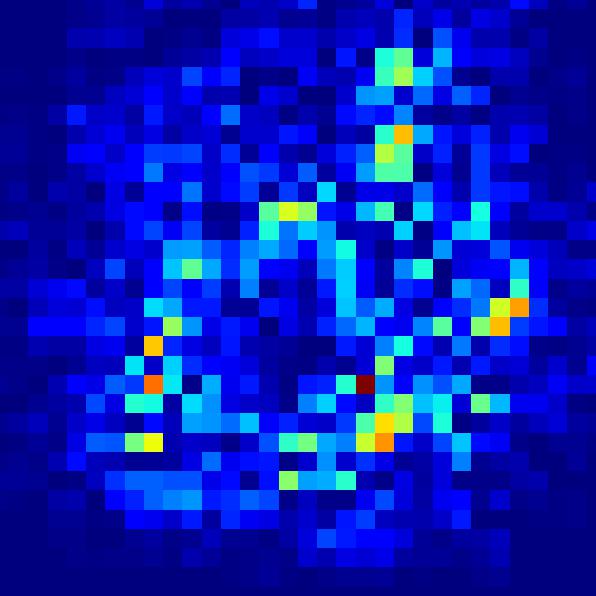}
		\end{minipage}
	}
	\subfigure{
		\begin{minipage}[t]{0.081\linewidth}
			\centering
			\includegraphics[width=1\linewidth]{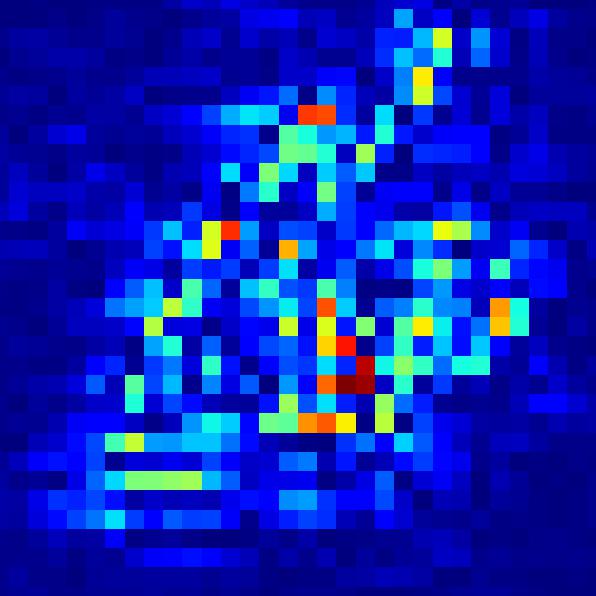}
		\end{minipage}
	}
	\subfigure{
		\begin{minipage}[t]{0.081\linewidth}
			\centering
			\includegraphics[width=1\linewidth]{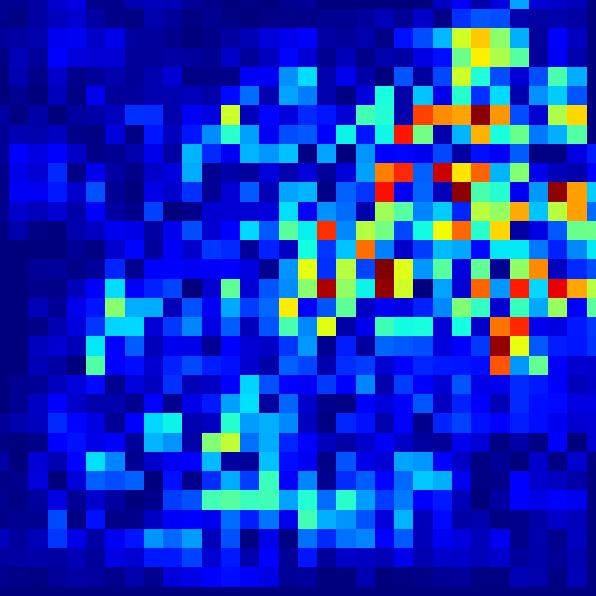}
		\end{minipage}
	}
	\subfigure{
		\begin{minipage}[t]{0.081\linewidth}
			\centering
			\includegraphics[width=1\linewidth]{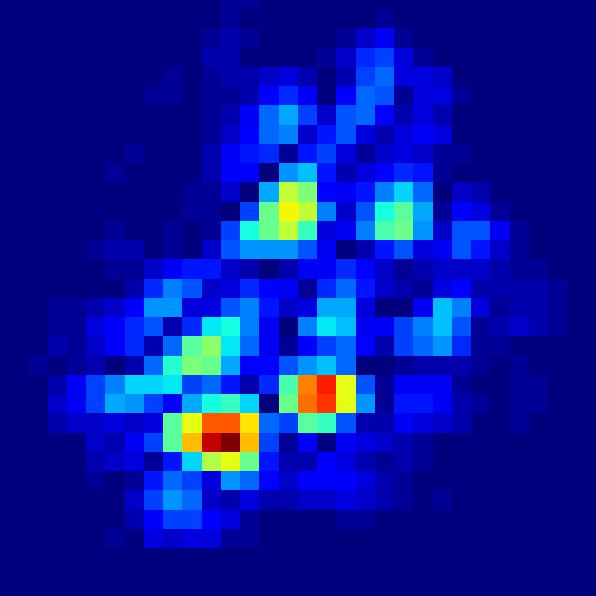}
		\end{minipage}
	}
	\subfigure{
		\begin{minipage}[t]{0.081\linewidth}
			\centering
			\includegraphics[width=1\linewidth]{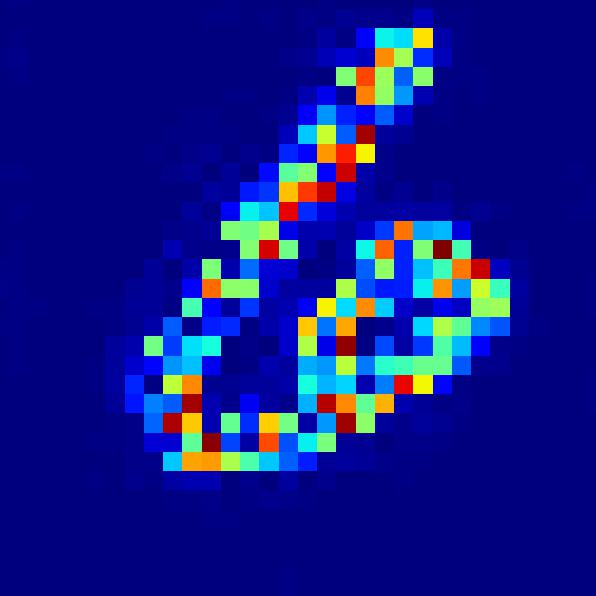}
		\end{minipage}
	}
	\subfigure{
		\begin{minipage}[t]{0.081\linewidth}
			\centering
			\includegraphics[width=1\linewidth]{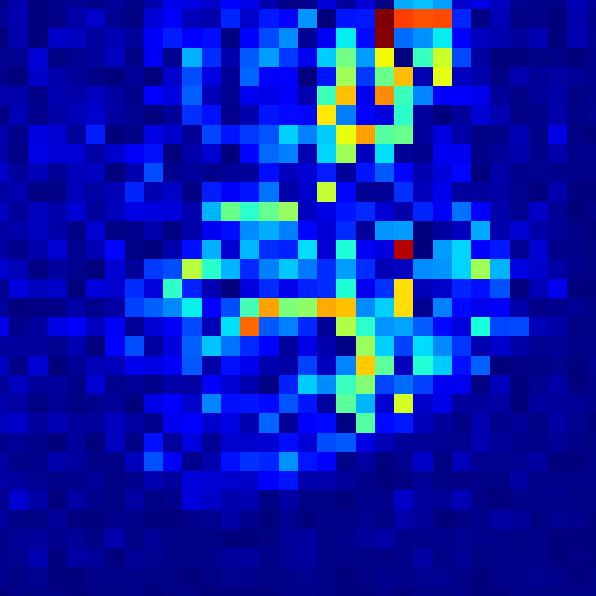}
		\end{minipage}
	}
	\subfigure{
		\begin{minipage}[t]{0.081\linewidth}
			\centering
			\includegraphics[width=1\linewidth]{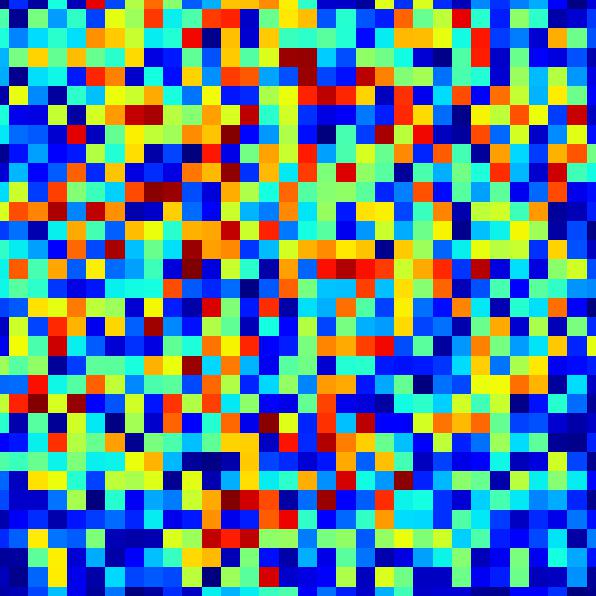}
		\end{minipage}
	}
	\subfigure{
		\begin{minipage}[t]{0.081\linewidth}
			\centering
			\includegraphics[width=1\linewidth]{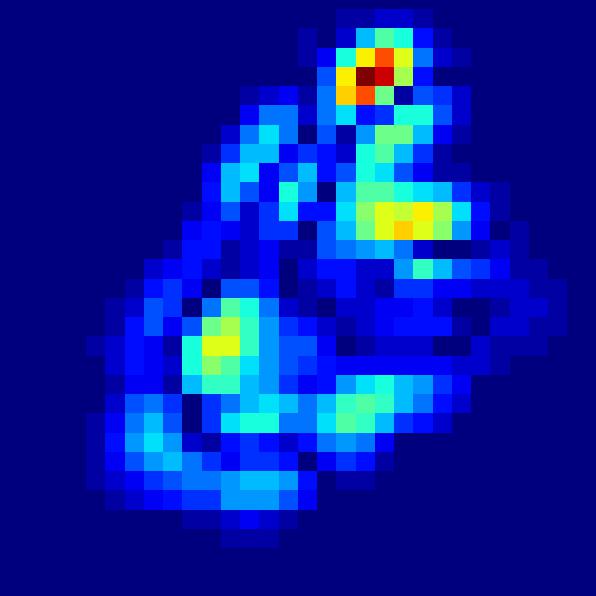}
		\end{minipage}
	}
	
	\setcounter{subfigure}{0}
	
	    %8 row
	\subfigure{
		\begin{minipage}[t]{0.081\linewidth}
			\centering
			\includegraphics[width=1\linewidth]{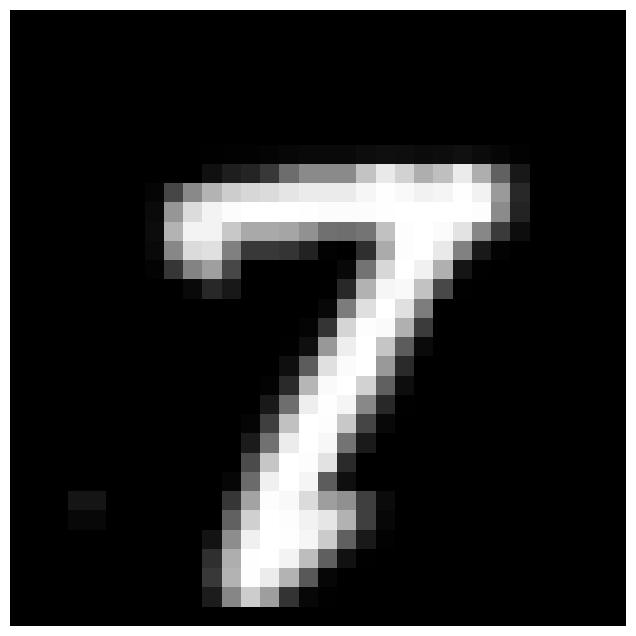}
		\end{minipage}
	}
	\subfigure{
		\begin{minipage}[t]{0.081\linewidth}
			\centering
			\includegraphics[width=1\linewidth]{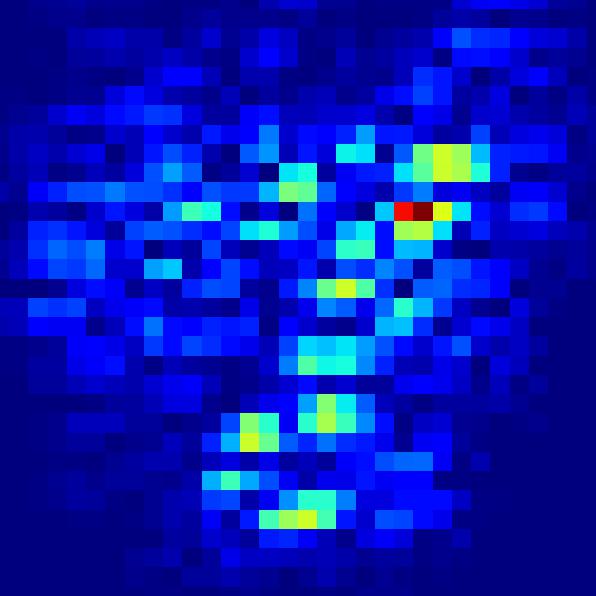}
		\end{minipage}
	}
	\subfigure{
		\begin{minipage}[t]{0.081\linewidth}
			\centering
			\includegraphics[width=1\linewidth]{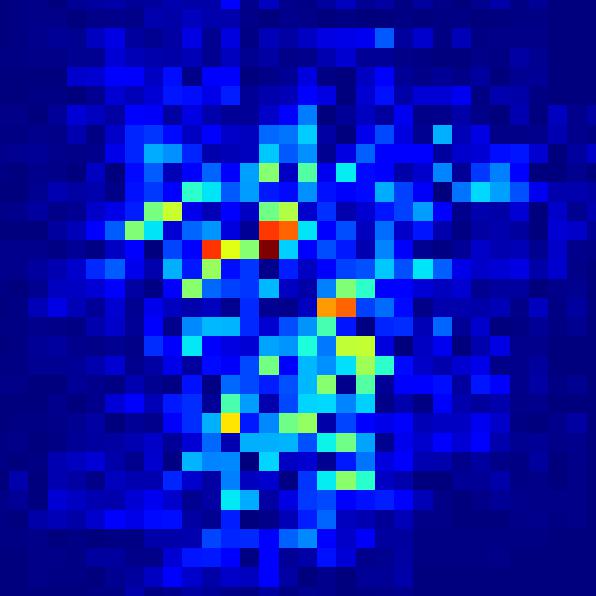}
		\end{minipage}
	}
	\subfigure{
		\begin{minipage}[t]{0.081\linewidth}
			\centering
			\includegraphics[width=1\linewidth]{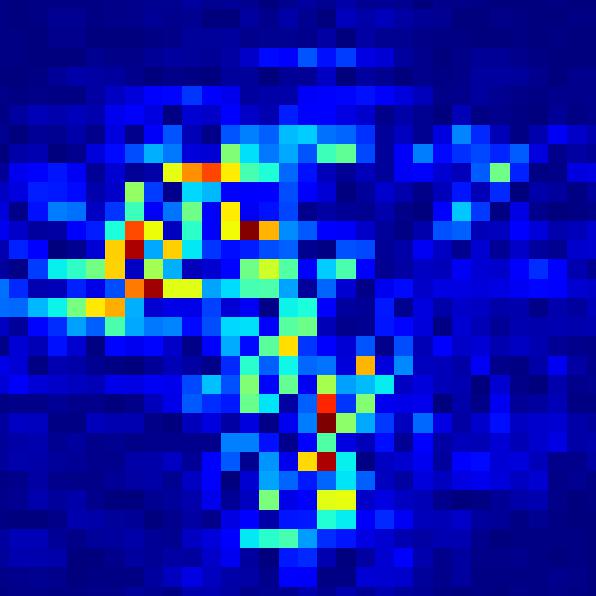}
		\end{minipage}
	}
	\subfigure{
		\begin{minipage}[t]{0.081\linewidth}
			\centering
			\includegraphics[width=1\linewidth]{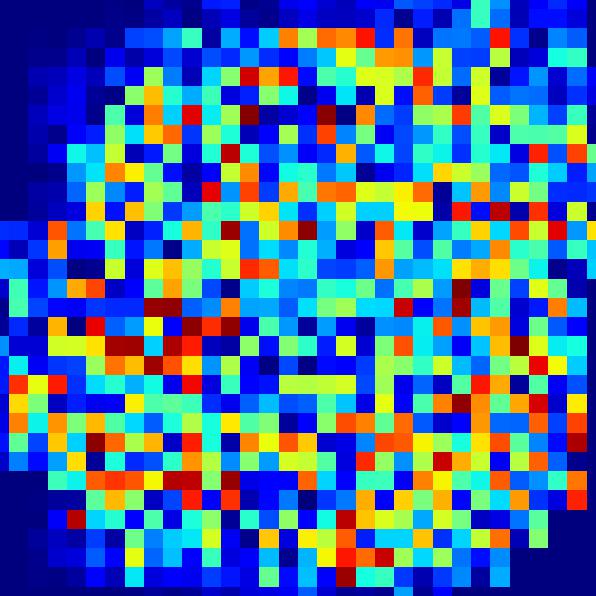}
		\end{minipage}
	}
	\subfigure{
		\begin{minipage}[t]{0.081\linewidth}
			\centering
			\includegraphics[width=1\linewidth]{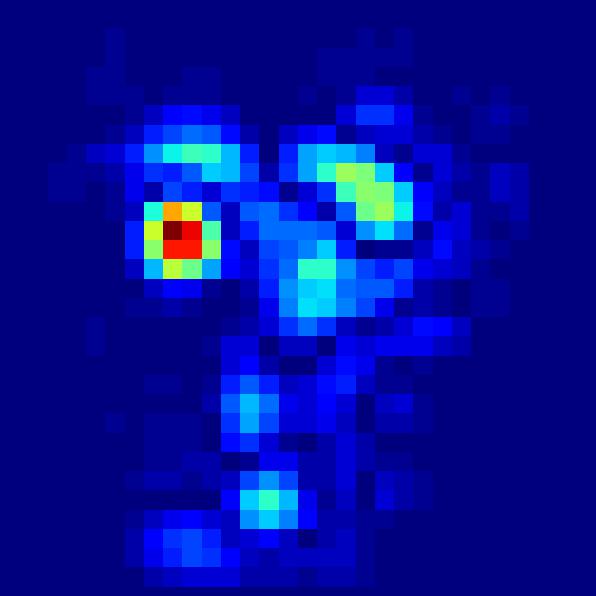}
		\end{minipage}
	}
	\subfigure{
		\begin{minipage}[t]{0.081\linewidth}
			\centering
			\includegraphics[width=1\linewidth]{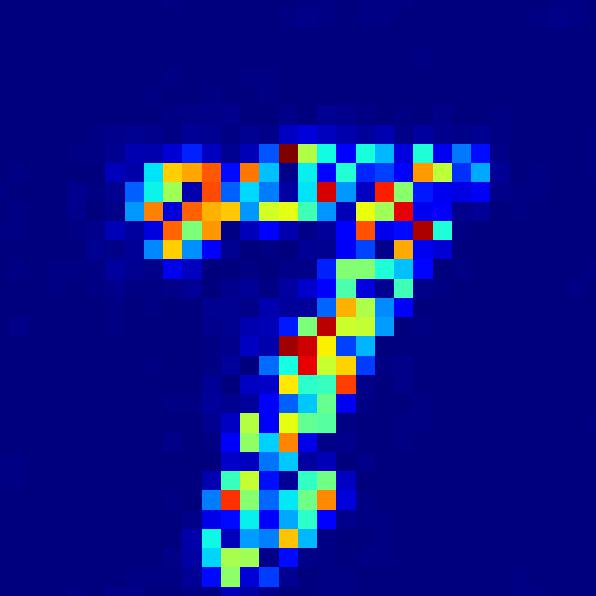}
		\end{minipage}
	}
	\subfigure{
		\begin{minipage}[t]{0.081\linewidth}
			\centering
			\includegraphics[width=1\linewidth]{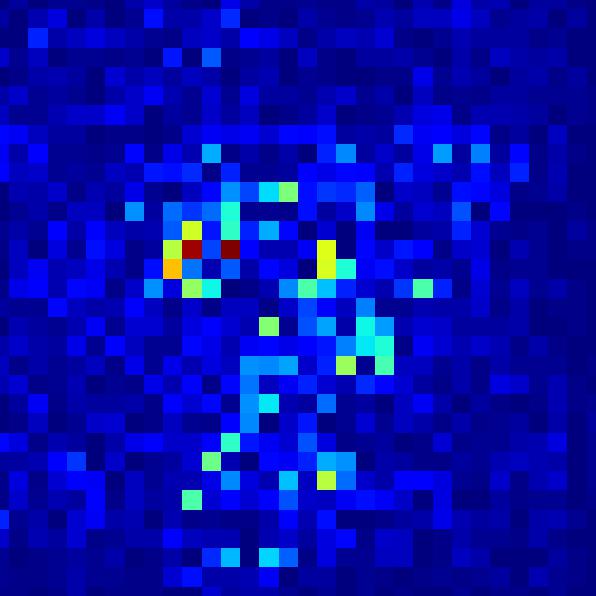}
		\end{minipage}
	}
	\subfigure{
		\begin{minipage}[t]{0.081\linewidth}
			\centering
			\includegraphics[width=1\linewidth]{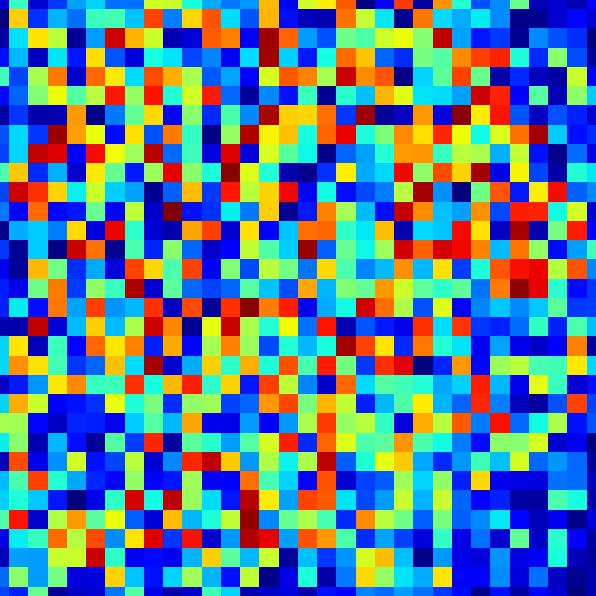}
		\end{minipage}
	}
	\subfigure{
		\begin{minipage}[t]{0.081\linewidth}
			\centering
			\includegraphics[width=1\linewidth]{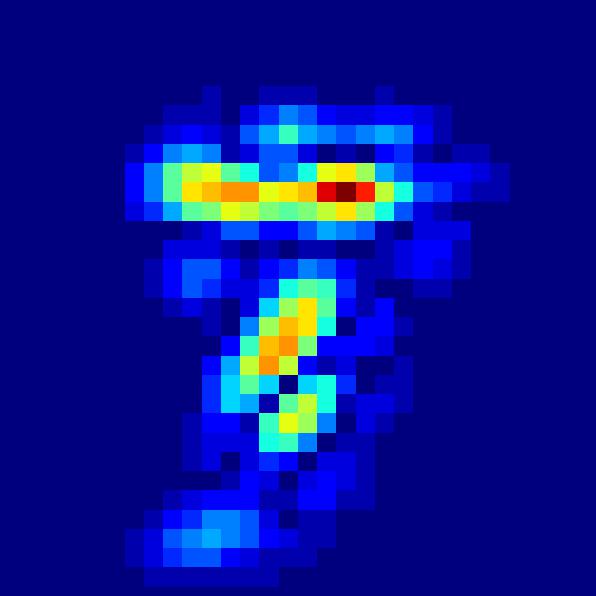}
		\end{minipage}
	}

	\setcounter{subfigure}{0}
	
	    %9 row
	\subfigure{
		\begin{minipage}[t]{0.081\linewidth}
			\centering
			\includegraphics[width=1\linewidth]{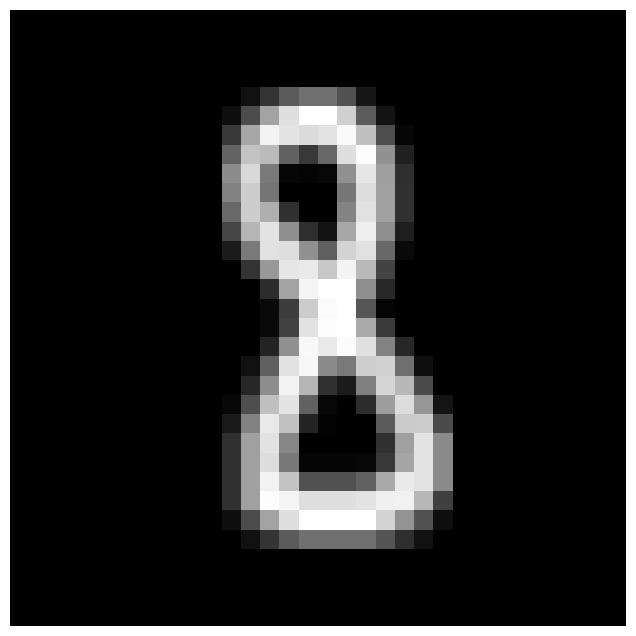}
		\end{minipage}
	}
	\subfigure{
		\begin{minipage}[t]{0.081\linewidth}
			\centering
			\includegraphics[width=1\linewidth]{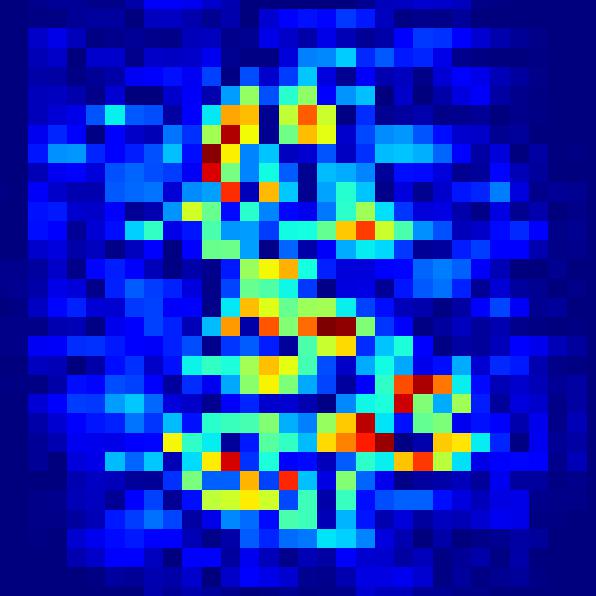}
		\end{minipage}
	}
	\subfigure{
		\begin{minipage}[t]{0.081\linewidth}
			\centering
			\includegraphics[width=1\linewidth]{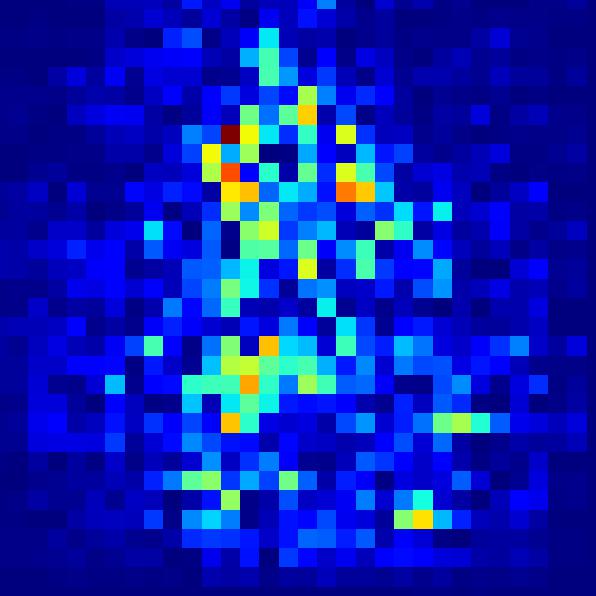}
		\end{minipage}
	}
	\subfigure{
		\begin{minipage}[t]{0.081\linewidth}
			\centering
			\includegraphics[width=1\linewidth]{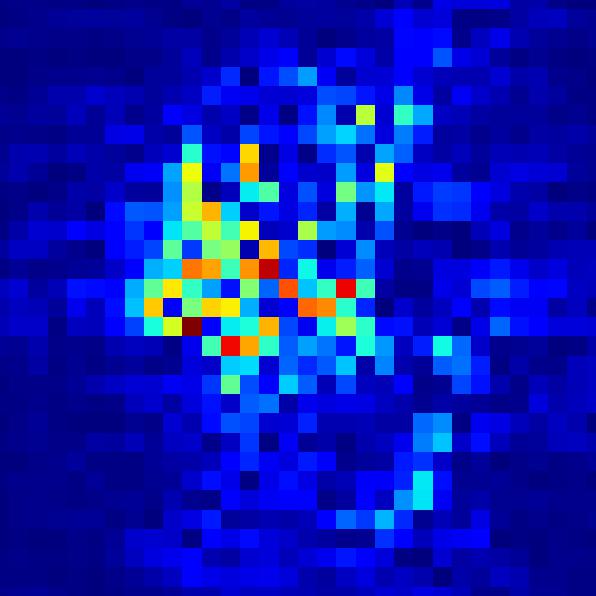}
		\end{minipage}
	}
	\subfigure{
		\begin{minipage}[t]{0.081\linewidth}
			\centering
			\includegraphics[width=1\linewidth]{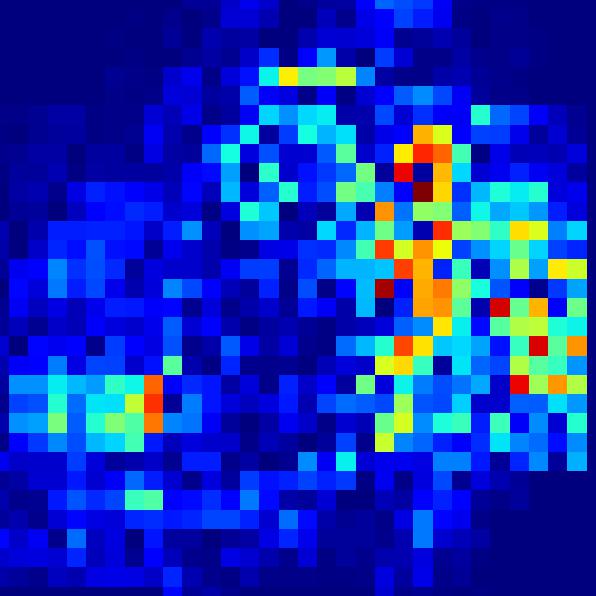}
		\end{minipage}
	}
	\subfigure{
		\begin{minipage}[t]{0.081\linewidth}
			\centering
			\includegraphics[width=1\linewidth]{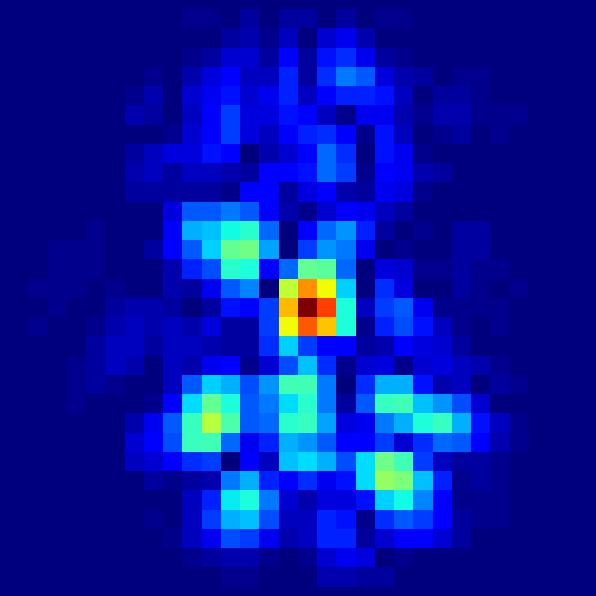}
		\end{minipage}
	}
	\subfigure{
		\begin{minipage}[t]{0.081\linewidth}
			\centering
			\includegraphics[width=1\linewidth]{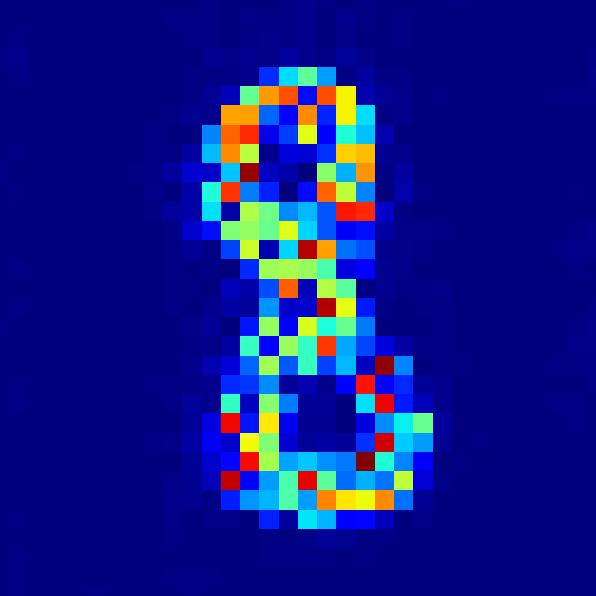}
		\end{minipage}
	}
	\subfigure{
		\begin{minipage}[t]{0.081\linewidth}
			\centering
			\includegraphics[width=1\linewidth]{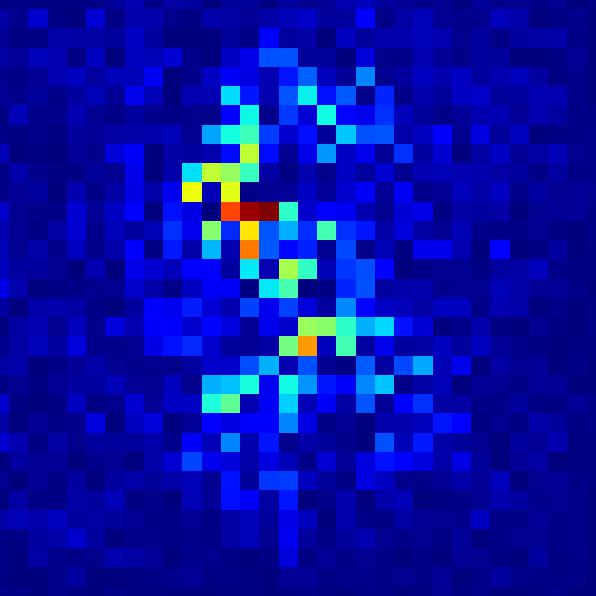}
		\end{minipage}
	}
	\subfigure{
		\begin{minipage}[t]{0.081\linewidth}
			\centering
			\includegraphics[width=1\linewidth]{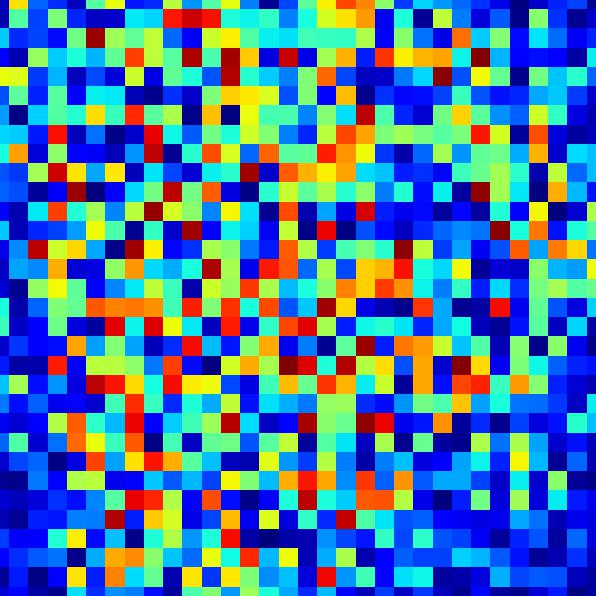}
		\end{minipage}
	}
	\subfigure{
		\begin{minipage}[t]{0.081\linewidth}
			\centering
			\includegraphics[width=1\linewidth]{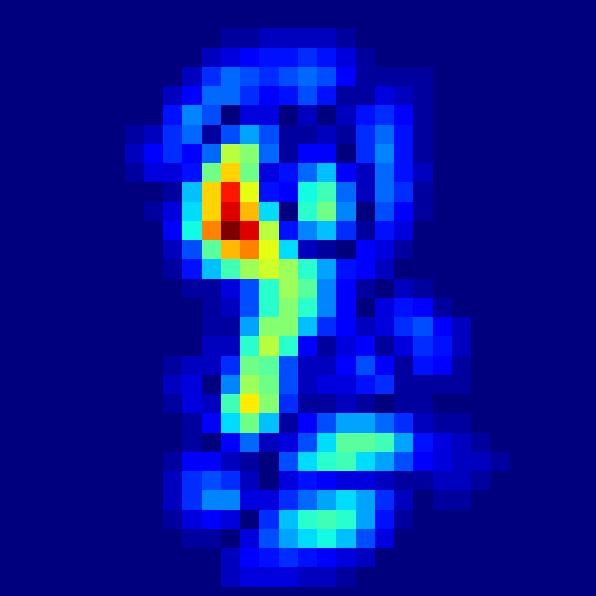}
		\end{minipage}
	}

	\setcounter{subfigure}{0}

    % last row
    \subfigure[Image]{
		\begin{minipage}[t]{0.081\linewidth}
			\centering
			\includegraphics[width=1\linewidth]{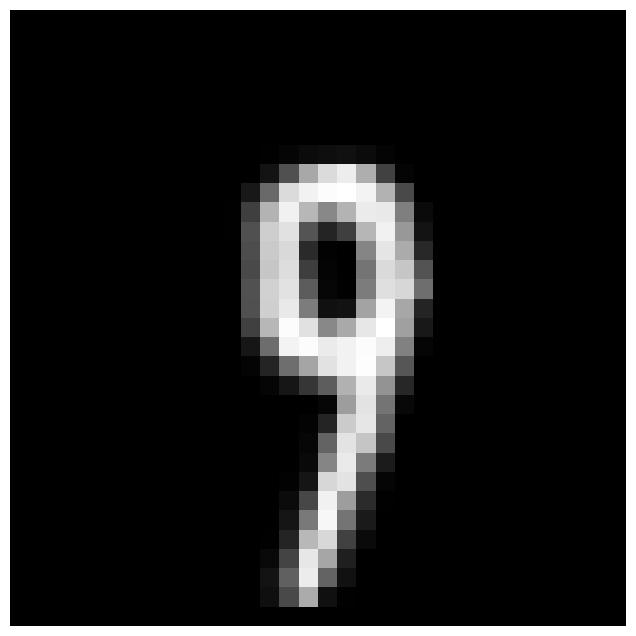}
		\end{minipage}
	}
	\subfigure[BL]{
		\begin{minipage}[t]{0.081\linewidth}
			\centering
			\includegraphics[width=1\linewidth]{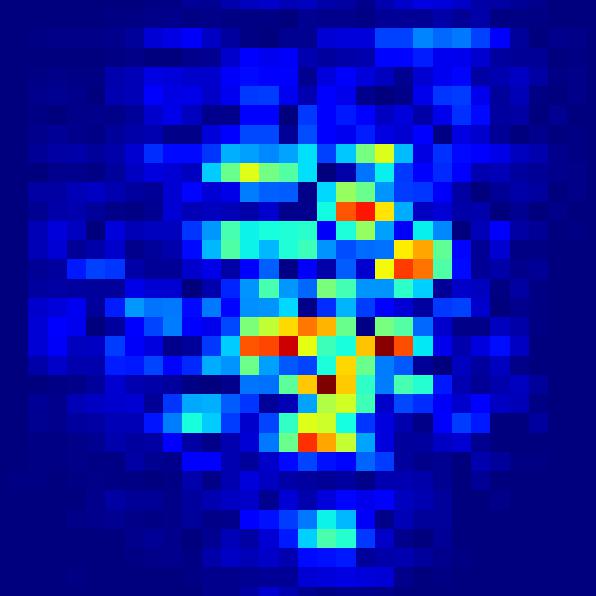}
		\end{minipage}
	}
	\subfigure[SR]{
		\begin{minipage}[t]{0.081\linewidth}
			\centering
			\includegraphics[width=1\linewidth]{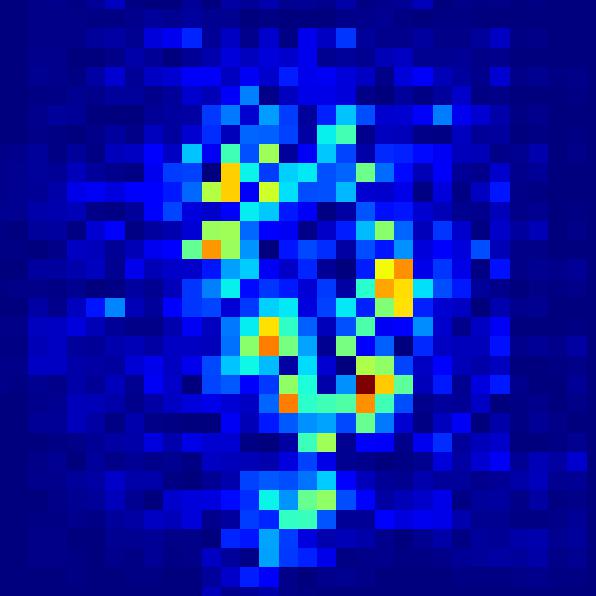}
		\end{minipage}
	}
	\subfigure[CT]{
		\begin{minipage}[t]{0.081\linewidth}
			\centering
			\includegraphics[width=1\linewidth]{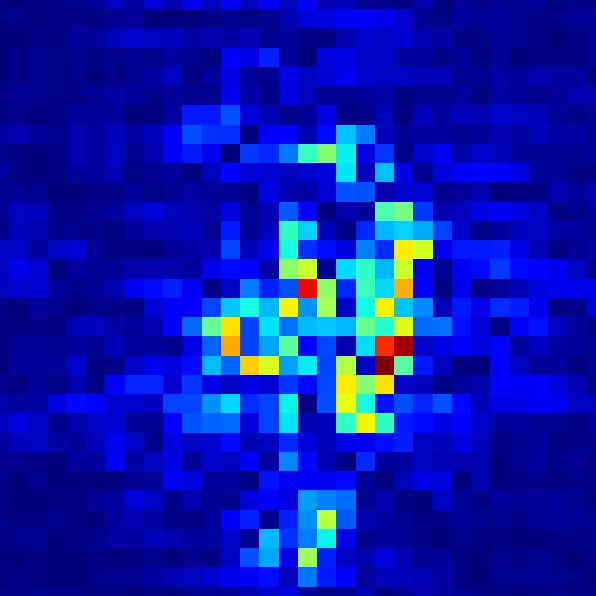}
		\end{minipage}
	}
	\subfigure[CT+]{
		\begin{minipage}[t]{0.08\linewidth}
			\centering
			\includegraphics[width=1\linewidth]{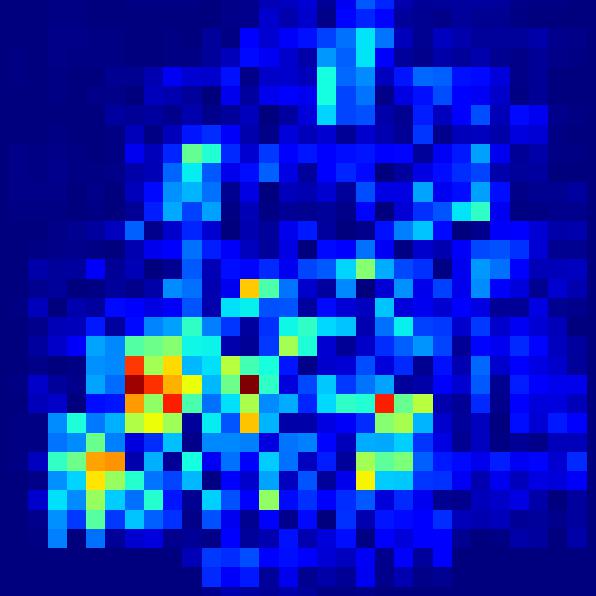}
		\end{minipage}
	}
	\subfigure[ES]{
		\begin{minipage}[t]{0.08\linewidth}
			\centering
			\includegraphics[width=1\linewidth]{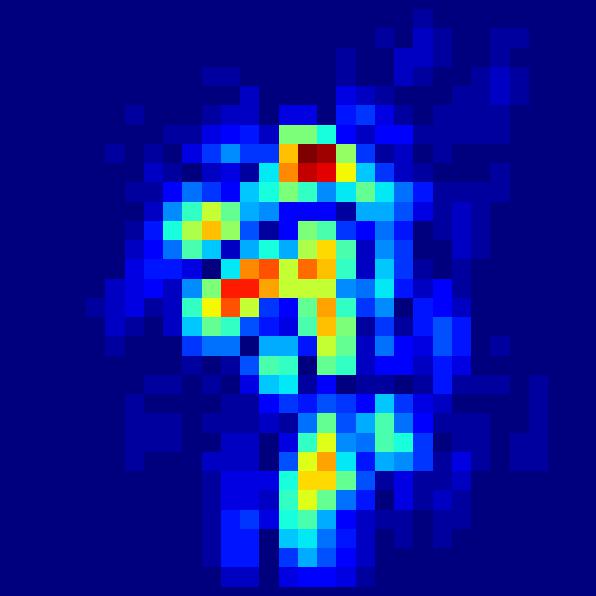}
		\end{minipage}
	}
	\subfigure[ELR]{
		\begin{minipage}[t]{0.08\linewidth}
			\centering
			\includegraphics[width=1\linewidth]{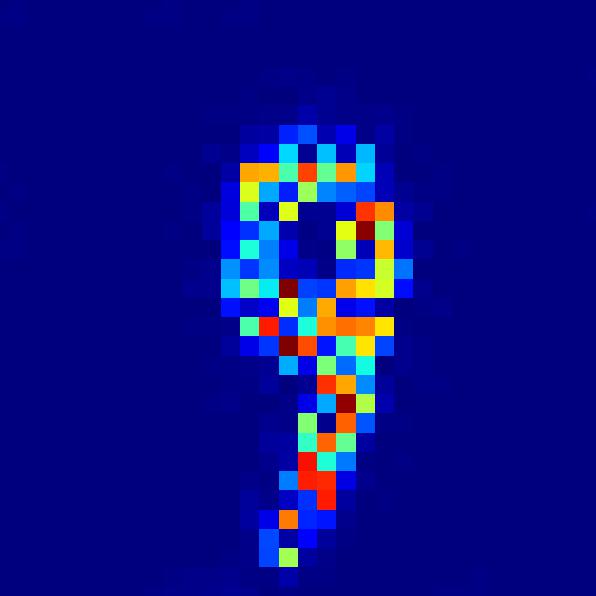}
		\end{minipage}
	}
	\subfigure[PES]{
		\begin{minipage}[t]{0.08\linewidth}
			\centering
			\includegraphics[width=1\linewidth]{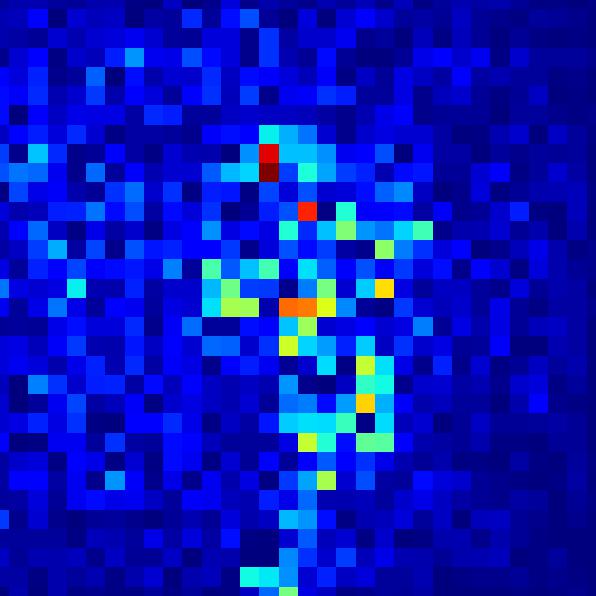}
		\end{minipage}
	}
	\subfigure[NCT]{
		\begin{minipage}[t]{0.081\linewidth}
			\centering
			\includegraphics[width=1\linewidth]{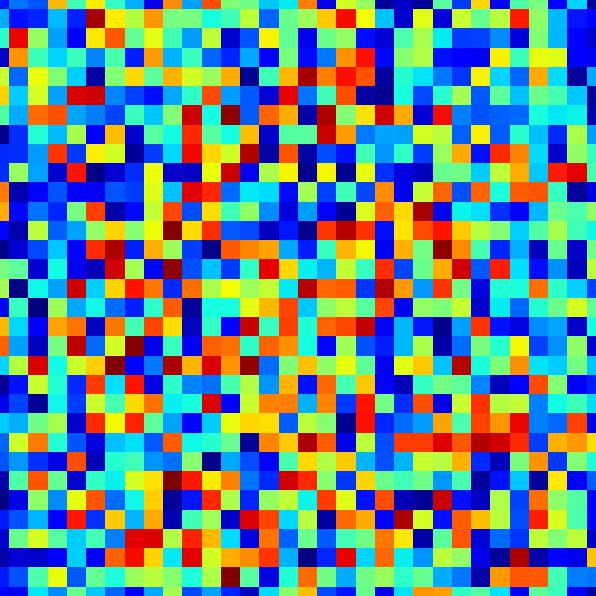}
		\end{minipage}
	}
	\subfigure[FN]{
		\begin{minipage}[t]{0.081\linewidth}
			\centering
			\includegraphics[width=1\linewidth]{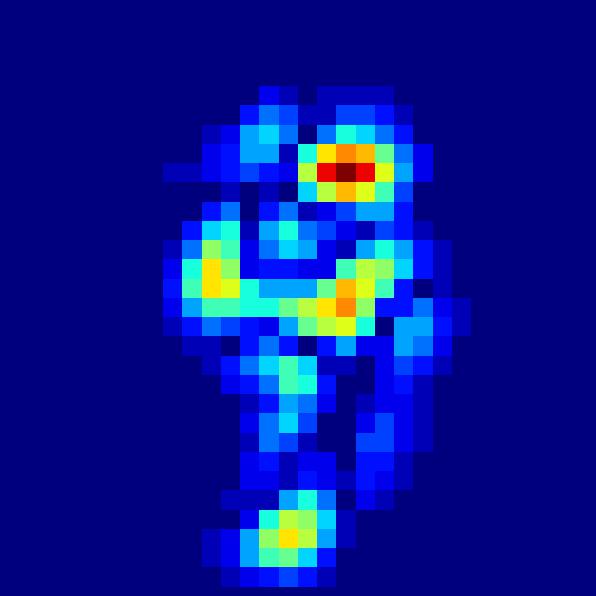}
		\end{minipage}
	}
	\caption{The saliency maps of images from MNIST, visualized at the converged epochs of all methods under $\Delta=0.8$.}
	\label{fig:fullpatterns}
\end{figure*}

\subsection{Datasets}\label{ap:data}
Detailed descriptions of all datasets are listed as follows:
\begin{itemize}
    \item MNIST \footnote{Downloaded from the torchvision datasets.} consists of 70,000 images of handwritten digits from ten classes. The features of each image are presented in $28 \times 28$ gray-scale pixels.
    \item FashionMNIST \footnote{Downloaded from the torchvision datasets.} consists of 70,000 images of fashion products from ten classes. The features of each image are presented in $28 \times 28$ gray-scale pixels.
    \item miniImageNet \footnote{Downloaded from the MLclf toolbox. \cite{xin_cao_2022_7233094}} consists of 60,000 images  sampled from the ImageNet dataset. It is evenly distributed across 100 classes, and we randomly select two classes to form the binary classification. The features of each image are presented in $84 \times 84$ RGB pixels.
    \item ADNI \footnote{Downloaded from http://www.loni.usc.edu/ADNI.} contains 412 brain scan images from two classes, \ie 186 Alzheimer's disease patients and 226 normal controls. Each brain scan image is pre-processed to obtain 90 features to form the features of a node in the graph.
\end{itemize}

\subsection{Comarison methods}\label{ap:cm}
Detailed descriptions of all comparison methods are listed as follows:
\begin{itemize}
    \item Convolutional neural network (CNN) and graph convolutional network (GCN) \cite{lecun2015lenet,welling2016semi} are the baseline (BL)  methods that aggregate the adjacent features to generate new feature representations for classification on image and graph datasets, respectively.
    \item Sample reweighting (SR) \cite{ren2018learning}) is a clean-sample-based method that denoises the noisy labels by reweighting their loss based on the gradient provided by a meta-network.
    \item Co-teaching (CT) \cite{han2018co} is a small-loss-sample-based method that trains two networks simultaneously and updates each network using the correctly labeled samples selected by the other network.
    \item Co-teaching+ \cite{yu2019does}  is a  small-loss-sample-based method that trains two networks simultaneously and updates each network by using the correctly labeled samples with prediction disagreement selected by the other network.
    \item Early stopping (ES) \cite{li2020gradient} is a  small-loss-sample-based method that implements the early stopping technique to stop the training process early and reserve the early robustness of DNNs. 
    \item Early learning regularization (ELR) \cite{liu2020early}  is a  small-loss-sample-based  method that denoises the noisy labels by regularizing the samples with large prediction variations between the early training stage and the late training stage.  ELR+ is a more advanced ELR algorithm  with two networks and mixup data augmentation \cite{zhang2018mixup}.
    \item Progressive early stopping (PES) \cite{bai2021understanding}  is a small-loss-sample-based  method that progressively stops each DNN layer's training to accommodate their different sensitivities to the noisy labels.
    \item Nested co-teaching (NCT) \cite{chen2021boosting} is a  small-loss-sample-based  method that combines the nested dropout technique with the co-teaching framework to discard the unimportant feature representations during training.
    % \item DivideMix (DM) \cite{li2019dividemix} is a robustness-based method that leverages the early-robustness of DNNs to divide the samples into a labeled set and an unlabeled set and trains two networks simultaneously using MixMatch technique.
    \item Limiting label information memorization in training (LIMIT) \cite{harutyunyan2020improving} is a bound-reducing method that constrains the mutual information between the model weights and the dataset by replacing the model weights with the gradient of the loss w.r.t the weights.
\end{itemize}

\subsection{Interpretability}\label{ss:int}

%%%int
To further strengthen the superiority of the FN method, we compare the interpretability between the FN methods and eight comparison methods (including BL and seven denoising-based methods, \ie SR, CT, CT+, ES, ELR, PES, and NCT) by
visualizing the saliency maps generated at the converged epochs on MNIST under $\Delta=0.8$. We visualize the results in Fig. \ref{fig:fullpatterns}.

It shows the FN method can generate  clearer and more interpretable patterns than all the comparison methods.
Specifically, the patterns learned by the FN method focus more on the foreground, where the image's semantic information lies, instead of the background, compared to BL, CT, CT+, ES, PES, and NCT.
Although ELR can also focus on the patterns in the foreground, it pays equal attention to all pixels of the object and fails to highlight the most important parts that are distinct from similar classes. For example, the most important difference between the handwritten digits ``4'' and ``9'' lies in the upper part of the digit.

The superior interpretability of the FN method over the denoising-based methods could be the result of using all samples in the training process, which makes full use of the features in the mislabeled samples and thus enhances the interpretability of DNNs in an unsupervised manner \cite{yuksel2021latentclr}.

\end{document}